\newtheorem{prop}{Proposition}
\newtheorem{theorem}{Theorem}
\tikzstyle{disent_latent} = [circle,pattern=north east lines, pattern color=black!20,draw=black,inner sep=1pt,
\DeclareMathOperator{\expect}{\mathbb{E}}
\DeclarePairedDelimiterX\MeijerM[3]{\lparen}{\rparen}%
{\begin{smallmatrix}#1 \\ #2\end{smallmatrix}\delimsize\vert\,#3}
\DeclareMathOperator*{\argmin}{arg\,min}
\newtheorem{definition}{Definition}[section]
\newcommand{\vecto}[1]{\boldsymbol{\mathbf{#1}}}
\renewcommand{\v}{\vecto}
\title{Explicit Regularisation in Gaussian Noise Injections}
\author{Alexander Camuto \\ University of Oxford \\ Alan Turing Institute \\acamuto@turing.ac.uk
 \And
 Matthew Willetts \\ University of Oxford \\ Alan Turing Institute \\ mwilletts@turing.ac.uk
\And
Umut Şimşekli \\ University of Oxford\\ Institut Polytechnique de Paris  \\ umut.simsekli@telecom-paris.fr
\And
Stephen Roberts \\ University of Oxford \\ Alan Turing Institute\\ sjrob@robots.ox.ac.uk
\And
Chris Holmes \\ University of Oxford\\ Alan Turing Institute\\ cholmes@stats.ox.ac.uk}
\begin{document}
\maketitle

\begin{abstract}
We study the regularisation induced in neural networks by Gaussian noise injections (GNIs). 
Though such injections have been extensively studied when applied to data, there have been few studies on understanding the regularising effect they induce when applied to network activations. 
Here we derive the explicit regulariser of GNIs, obtained by marginalising out the injected noise, and show that it penalises functions with high-frequency components in the Fourier domain; particularly in layers closer to a neural network's output. 
We show analytically and empirically that such regularisation produces calibrated classifiers with large classification margins.
\end{abstract}

\section{Introduction}
\label{intro}
Noise injections are a family of methods that involve adding or multiplying samples from a noise distribution, typically an isotropic Gaussian, to the weights or activations of a neural network during training.
The benefits of such methods are well documented. 
Models trained with noise often generalise better to unseen data and are less prone to overfitting \citep{Srivastava2014, Kingma, Poole2014}.

Even though the regularisation conferred by Gaussian noise injections (GNIs) can be observed empirically, and the benefits of noising data are well understood theoretically \citep{Bishop1995, Cohen2019, Webb1994}, there have been few studies on understanding the benefits of methods that inject noise \textit{throughout} a network.
Here we study the \textit{explicit} regularisation of such injections, which is a positive term added to the loss function obtained when we marginalise out the noise we have injected.

Concretely our contributions are: 

\begin{itemize}
    \item We derive an analytic form for an explicit regulariser that explains most of GNIs' regularising effect.
    \item We show that this regulariser penalises networks that learn functions with high-frequency content in the Fourier domain and most heavily regularises neural network layers that are closer to the output. See Figure \ref{fig:illustration} for an illustration.
    \item Finally, we show analytically and empirically that this regularisation induces larger classification margins and better calibration of models. 
\end{itemize}

   \begin{figure}[t!]

    \centering
    \includegraphics[width=0.9\textwidth]{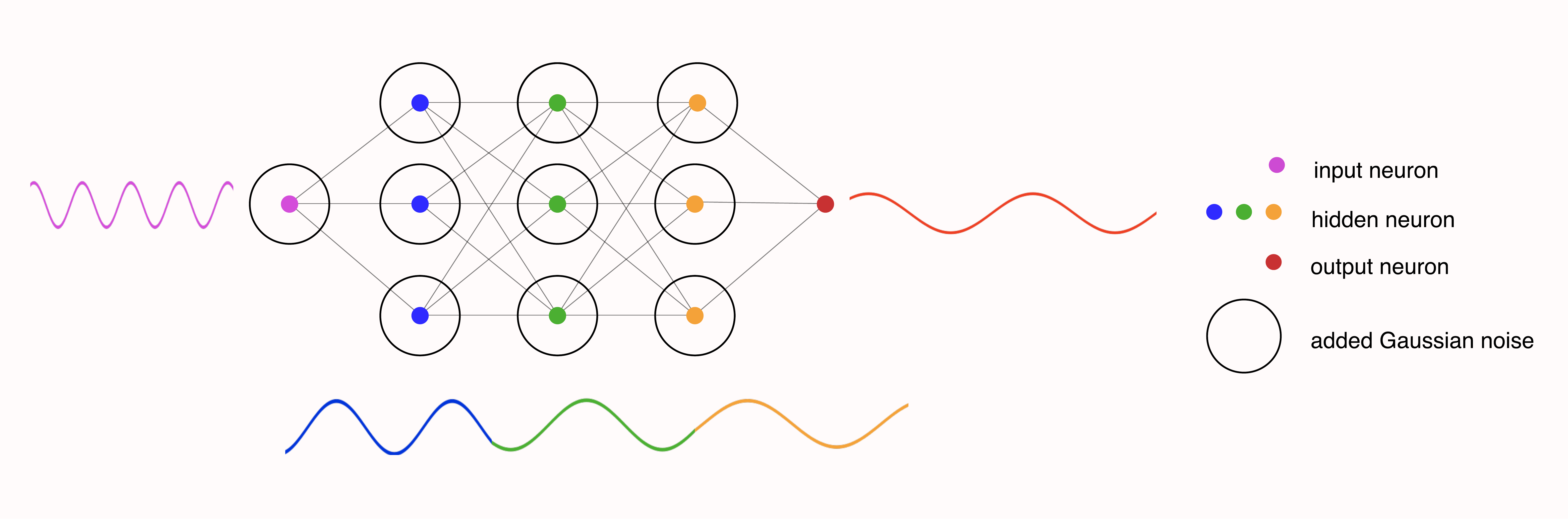}
    \caption{
    Here we illustrate the effect of GNIs injected throughout a network's activations. Each coloured dot represents a neuron's activations.
    We add GNIs, represented as circles, to each layer's activations bar the output layer. 
    GNIs induce a network for which each layer learns a progressively lower frequency function, represented as a sinusoid matching in colour to its corresponding layer. 
    }
    \vspace{-1em}
    \label{fig:illustration}
\end{figure}

\section{Background}

\subsection{Gaussian Noise Injections}
\label{sec:sgd_under_injections}

Training a neural network involves optimising network parameters to maximise the marginal likelihood of a set of labels given features via gradient descent.
With a training dataset $\mathcal{D}$ composed of $N$ data-label pairs of the form $(\v{x}, \v{y}) \ \v{x} \in \mathbb{R}^d, \v{y} \in \mathbb{R}^m$ and a feed-forward neural network with $M$ parameters divided into $L$ layers: $\v{\theta} = \{\v{W}_1,...,\v{W}_L\}$, $\v{\theta} \in \mathbb{R}^M$,
our objective is to minimise the expected negative log likelihood of labels $\v{y}$  given data $\v{x}$,  $- \log p_{\v{\theta}}(\v{y}|\v{x})$ , and find the optimal set of parameters  $\v{\theta}^*$ satisfying: 
\begin{align}
    \v{\theta}_* = \argmin_{\v{\theta}} \mathcal{L}( \mathcal{D}; \v{\theta}),
    \qquad
    \mathcal{L}( \mathcal{D}; \v{\theta}):=- \mathbb{E}_{\v{x},\v{y} \sim \mathcal{D}}\left[ \log p_{\v{\theta}}(\v{y}|\v{x})\right]\,.
    \label{eq:marginal_likelihood}
\end{align}
Under stochastic optimisation algorithms, such as Stochastic Gradient Descent (SGD), we  estimate $\mathcal{L}$ by sampling a mini-batch of data-label pairs $\mathcal{B} \subset \mathcal{D}$. 
\begin{equation}
\mathcal{L}(\mathcal{B}; \v{\theta}) = -\mathbb{E}_{\v{x},\v{y} \sim \mathcal{B}} \log p_{\v{\theta}}(\v{y}|\v{x}) \approx \mathcal{L}( \mathcal{D}; \v{\theta}).
\end{equation}
Consider an $L$ layer network with no noise injections and a non-linearity $\phi$ at each layer.
We obtain the activations ${\v{h}} = \{{\v{h}}_0, ... , {\v{h}}_{L} \}$, where $\v{h}_{0}=\v{x}$ is the input data \textit{before} any noise is injected. For a network consisting of dense layers (a.k.a.\ a multi-layer perceptron: MLP) we have that: 
\begin{align}
\v{h}_{k}(\v{x})=
\phi(\v{W}_k \v{h}_{k-1}(\v{x})).
\label{eq:nonoise_acts}
\end{align}
What happens to these activations when we inject noise? 
First, let $\v{\epsilon}$ be the set of noise injections at each layer:  $\v{\epsilon} = \{{\v{\epsilon}}_0, ... , {\v{\epsilon}}_{L-1} \}$.
When performing a noise injection procedure, the value of the next layer's activations depends on the noised value of the previous layer.
We denote the intermediate, soon-to-be-noised value of an activation as $\widehat{\v{h}}_{k}$ and the subsequently noised value as $\widetilde{\v{h}}_{k}$: 
\begin{align}
\widehat{\v{h}}_{k}(\v{x})= \phi\left(\v{W}_{k}\widetilde{\v{h}}_{k-1}(\v{x})\right)\,, \qquad  
\widetilde{\v{h}}_{k}(\v{x}) = \widehat{\v{h}}_{k}(\v{x}) \circ \v{\epsilon}_{k}\,, 
\label{eq:noise_recursion}
\end{align}
where $\circ$ is some element-wise operation. 
We can, for example, add or multiply Gaussian noise to each hidden layer unit. 
In the additive case, we obtain: 
\begin{align}
\widetilde{\v{h}}_k(\v{x}) &= \widehat{\v{h}}_k(\v{x}) + \v{\epsilon}_k, 
\qquad \v{\epsilon}_k \sim \mathcal{N}(0,\sigma_k^2\v{I}). \label{eq:noise_add}
\end{align}
The multiplicative case can be rewritten as an activation-scaled addition:
\begin{align}
\widetilde{\v{h}}_k(\v{x}) &= \widehat{\v{h}}_k(\v{x}) + \v{\epsilon}_k, 
\qquad \v{\epsilon}_k \sim \mathcal{N}\left(0,\widehat{\v{h}}_k^2(\v{x})\sigma_k^2\v{I}\right).
\label{eq:noise_mult}
\end{align}
Here we focus our analysis on noise \textit{additions}, but through equation \eqref{eq:noise_mult} we can translate our results to the multiplicative case.

\subsection{Sobolev Spaces}
\label{sec:sobolev}
To define a Sobolev Space we use the generalisation of the derivative for multivariate functions of the form $g: \mathbb{R}^d \to \mathbb{R}$. 
We use a multi-index notation $\alpha \in \mathbb{R}^d$ which defines mixed partial derivatives. 
We denote the $\alpha^{\mathrm{th}}$ derivative of $g$ with respect to its input $\v{x}$ as $D^\alpha g(\v{x})$.
\[
D^{\alpha}g = \frac{\partial^{|\alpha|}g}{\partial x^{\alpha_1}_1\dots\partial x^{\alpha_d}_d}
\]
where $|\alpha| = \sum_{i=1}^d|\alpha_i|$. Note that $\v{x}^\alpha = [x_1^{\alpha_1}, \dots, x_d^{\alpha_d}]$ and $\alpha! = \alpha_1!\dots\cdot\dots\alpha_d!$. 
\vspace{1em}

\begin{definition}[\textnormal{\cite{Cucker2002}}]
Sobolev spaces are denoted $W^{l,p}(\Omega), \Omega \subset \mathbb{R}^d$,  where $l$, the order of the space, is a non-negative integer and $p\geq1$.  The Sobolev space of index $(l,p)$ is the space of locally integrable functions $f: \Omega \to \mathbb{R}$ such that for every multi-index $\alpha$ where $|\alpha| < l$ the derivative $D^{\alpha}f$ exists and $D^{\alpha}f \in L^p(\Omega)$.
The norm in such a space is given by 
$\|f\|_{W^{l,p}(\Omega)} = \left(\sum_{|\alpha| \leq l} \int_{\Omega} |D^{\alpha}f(\v{x})|^p d\v{x}\right)^{\frac{1}{p}}$.
\label{def:sobolev}
\end{definition}
For $p=2$ these spaces are Hilbert spaces, with a dot product that defines the $L_2$ norm of a function's derivatives. Further these Sobolev spaces can be defined in a measure space with \textit{finite} measure $\mu$. 
We call such spaces finite measure spaces of the form $W^{l,p}_{\mu}(\mathbb{R}^d)$ and these are the spaces of locally integrable functions such that for every $\alpha$ where $|\alpha| < l, \ D^{\alpha}f \in L^p_{\mu}(\mathbb{R}^d)$, the $L^p$ space equipped with the measure $\mu$. 
The norm in such a space is given by \citep{Hornik1991}: 
\begin{equation}
\|f\|_{W^{l,p}_{\mu}(\mathbb{R}^d)} = \left(\sum_{|\alpha| \leq l} \int_{\mathbb{R}^d} |D^{\alpha}f(\v{x})|^p d\mu(\v{x})\right)^{\frac{1}{p}}, f \in W^{l,p}_{\mu}(\mathbb{R}^d), |\mu(\v{x})| < \infty \ \forall \v{x} \in \mathbb{R}^d
\label{eq:weighted_sobolev_norm}
\end{equation}

Generally a Sobolev space over a compact subset $\Omega$ of $\mathbb{R}^d$ can be expressed as a weighted Sobolev space with a measure $\mu$ which has compact support on $\Omega$ \citep{Hornik1991}. 

\cite{Hornik1991} have shown that neural networks with continuous activations, which have continuous and bounded derivatives up to order $l$, such as the sigmoid function, are universal approximators in the \textit{weighted} Sobolev spaces of order $l$, meaning that they form a dense subset of Sobolev spaces.
Further, \cite{Czarnecki2017} have shown that networks that use piecewise linear activation functions (such as $
\mathrm{ReLU}$ and its extensions) are \textit{also} universal approximators in the Sobolev spaces of order 1 where the domain $\Omega$ is some compact subset of $\mathbb{R}^d$. 
As mentioned above, this is equivalent to being dense in a weighted Sobolev space on $\mathbb{R}^d$  where the measure $\mu$ has compact support. 
Hence, we can view a neural network, with sigmoid or piecewise linear activations to be a parameter that indexes a function in a weighted Sobolev space with index $(1,2)$, i.e. $f_{\v{\theta}} \in W^{1,2}_{\mu}(\mathbb{R}^d)$.

\section{The Explicit Effect of Gaussian Noise Injections}

Here we consider the case where we noise all layers with \textit{isotropic} noise, except the final predictive layer which we also consider to have no activation function. 
We can express the effect of the Gaussian noise injection on the cost function as an added term $\Delta\mathcal{L}$, which is dependent on $\v{\mathcal{E}}_{L}$, the noise accumulated on the final layer $L$ from the noise additions $\v{\epsilon}$ on the previous hidden layer activations. 
\begin{equation}
\widetilde{\mathcal{L}}(\mathcal{B};\v{\theta}, \v{\epsilon}) =  \mathcal{L}(\mathcal{B}; \v{\theta}) + \Delta\mathcal{L}(\mathcal{B};\v{\theta},\v{\mathcal{E}}_{L})
\end{equation}

To understand the regularisation induced by GNIs, we want to study the regularisation that these injections induce \textit{consistently} from batch to batch. 
To do so, we want to remove the stochastic component of the GNI regularisation and extract a regulariser that is of consistent sign. 
Regularisers that change sign from batch-to-batch do not give a constant objective to optimise, making them unfit as regularisers \citep{Botev2017, Sagun2018, Wei2020}.

As such, we study the explicit regularisation these injections induce by way of the expected regulariser, $\expect_{\v{\epsilon} \sim p(\v{\epsilon})} \left[ \Delta\mathcal{L}(\cdot) \right]$ that marginalises out the injected noise $\v{\epsilon}$. 
To lighten notation, we denote this as  $\expect_{\v{\epsilon}} \left[\Delta\mathcal{L}(\cdot)\right]$.
We extract $R$, a constituent term of the expected regulariser that dominates the remainder terms in norm, and is \textit{consistently positive}.

Because of these properties, $R$ provides a lens through which to study the effect of GNIs. 
As we show, this term has a connection to the Sobolev norm and the Fourier transform of the function parameterised by the neural network. Using these connections we make inroads into better understanding the regularising effect of noise injections on neural networks.

To begin deriving this term, we first need to define the accumulated noise $\v{\mathcal{E}}_{L}$. 
We do so by applying a Taylor expansion to each noised layer.
As in Section \ref{sec:sobolev} we use the generalisation of the derivative for multivariate functions using a multi-index $\alpha$.
For example $D^\alpha h_{k,i}(\v{h}_{k-1}(\v{x}))$ denotes the $\alpha^{\mathrm{th}}$ derivative of the $i^{\mathrm{th}}$ activation of the $k^{\mathrm{th}}$ layer ($h_{k,i}$) with respect to the preceding layer's activations  $\v{h}_{k-1}(\v{x})$ and $D^\alpha \mathcal{L}(\v{h}_{k}(\v{x}), \v{y})$ denotes the $\alpha^{\mathrm{th}}$ derivative of the loss with respect to the non-noised activations $\v{h}_{k}(\v{x})$.

\begin{prop}
Consider an $L$ layer neural network experiencing isotropic GNIs at each layer $k \in [0, \dots, L - 1]$ of dimensionality $d_k$. 
We denote this added noise as $\v{\epsilon} = \{{\v{\epsilon}}_0, ... , {\v{\epsilon}}_{L-1} \}$. 
We assume $\v{h}_L$ is in $C^\infty$ the class of infinitely differentiable functions. 
We can define the accumulated noise at layer each layer $k$ using a multi-index $\alpha_k \in \mathbb{N}^{d_{k-1}}$:
\begin{align*}
    \mathcal{E}_{L,i} &=
    \sum_{|\alpha_L|=1}^{\infty}
    \frac{1}{\alpha_L!}
     \left(D^{\alpha_L} h_{L,i}(\v{h}_{L-1}(\v{x}))\right)
     \v{\mathcal{E}}_{L-1}^{\alpha_L}, \ i = 1, \dots, d_L \\
     \mathcal{E}_{k,i} &= \epsilon_{k,i} +  \sum_{|\alpha_k|=1}^{\infty}
    \frac{1}{\alpha_k!}
    \left(D^{\alpha_k} h_{k,i}(\v{h}_{k-1}(\v{x}))\right) 
    \v{\mathcal{E}}_{k-1}^{\alpha_k} , \ i = 1, \dots, d_k,\ k = 1 \dots L-1  \\  \v{\mathcal{E}}_0 &= \v{\epsilon}_0 \ 
\end{align*}
\label{prop:accum_noise}
where $\v{x}$ is drawn from the dataset $\mathcal{D}$, $\v{h}_{k}$ are the activations before any noise is added, as defined in Equation \eqref{eq:nonoise_acts}. 
\end{prop}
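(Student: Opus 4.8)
The plan is to recognise $\v{\mathcal{E}}_k$ as the discrepancy between the noised and noiseless activations, $\v{\mathcal{E}}_k := \widetilde{\v{h}}_k(\v{x}) - \v{h}_k(\v{x})$, and to derive the two stated recursions by induction on the layer index $k$, expanding each single-layer map in a multivariate Taylor series at every step.

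First I would check the base case. By \eqref{eq:noise_add} the input layer satisfies $\widetilde{\v{h}}_0(\v{x}) = \v{h}_0(\v{x}) + \v{\epsilon}_0 = \v{x} + \v{\epsilon}_0$, so $\v{\mathcal{E}}_0 = \widetilde{\v{h}}_0(\v{x}) - \v{h}_0(\v{x}) = \v{\epsilon}_0$, as claimed. For the inductive step, assume $\widetilde{\v{h}}_{k-1}(\v{x}) = \v{h}_{k-1}(\v{x}) + \v{\mathcal{E}}_{k-1}$. Viewing the $i$-th pre-noise activation as a function $h_{k,i}$ of the previous layer, \eqref{eq:noise_recursion} gives $\widehat{h}_{k,i} = h_{k,i}(\widetilde{\v{h}}_{k-1}(\v{x})) = h_{k,i}(\v{h}_{k-1}(\v{x}) + \v{\mathcal{E}}_{k-1})$, which I would expand about the noiseless base point $\v{h}_{k-1}(\v{x})$:
\[
h_{k,i}(\v{h}_{k-1}(\v{x}) + \v{\mathcal{E}}_{k-1}) = \sum_{|\alpha_k|=0}^{\infty} \frac{1}{\alpha_k!}\left(D^{\alpha_k} h_{k,i}(\v{h}_{k-1}(\v{x}))\right)\v{\mathcal{E}}_{k-1}^{\alpha_k}.
\]
The $|\alpha_k|=0$ term is exactly $h_{k,i}(\v{h}_{k-1}(\v{x})) = h_{k,i}$, the noiseless activation, so subtracting it and adding the fresh injection $\epsilon_{k,i}$ gives $\mathcal{E}_{k,i} = \widetilde{h}_{k,i} - h_{k,i}$ in precisely the stated form for $k = 1, \dots, L-1$.

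The final layer is handled identically except for the two modifications recorded in the proposition: it receives no injected noise (so the $\epsilon_{L,i}$ term is absent) and carries no nonlinearity. The same expansion of $h_{L,i}(\v{h}_{L-1}(\v{x}) + \v{\mathcal{E}}_{L-1})$, with its zeroth-order term cancelled against $h_{L,i}$, yields the displayed expression for $\mathcal{E}_{L,i}$; because $h_{L,i}$ is here linear, its derivatives of order $\geq 2$ vanish and the series collapses to its first-order term, though writing it as an infinite sum remains correct. This closes the induction.

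The main obstacle is making the infinite Taylor expansion rigorous. The hypothesis $\v{h}_L \in C^\infty$ guarantees every derivative $D^{\alpha_k} h_{k,i}$ exists, but smoothness alone does not force the Taylor series to converge to the function. To secure the equality I would either strengthen the regularity to real-analyticity of each layer map, so that the series converges on a neighbourhood containing $\v{h}_{k-1}(\v{x}) + \v{\mathcal{E}}_{k-1}$, or else interpret the expansion as a formal power series in the noise variables; the latter suffices for the downstream analysis, which extracts and bounds individual orders in $\v{\epsilon}$ rather than resumming the whole series. A minor bookkeeping point is that $\v{\mathcal{E}}_{k-1}$ is itself a recursively defined random displacement; since the Taylor identity is purely algebraic in the displacement this is harmless, provided all derivatives are kept anchored at the deterministic base point $\v{h}_{k-1}(\v{x})$.
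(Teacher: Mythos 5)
Your proposal is correct and follows essentially the same route as the paper's proof: define the accumulated noise as the gap between noised and noiseless activations and Taylor-expand each layer map about the noiseless previous-layer activations, with the zeroth-order term cancelling and the fresh injection $\epsilon_{k,i}$ added at each noised layer. You are in fact more careful than the paper, which leaves the induction and the cancellation implicit and does not flag (as you rightly do) that $C^\infty$ alone does not guarantee convergence of the Taylor series, so the expansion should be read as requiring analyticity or as a formal series in the noise.
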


See Appendix \ref{app:accum_noise} for the proof. 
Given this form for the accumulated noise, we can now define the expected regulariser. 
For compactness of notation, we denote each layer's Jacobian as $\v{J}_k \in \mathbb{R}^{d_L \times d_k}$ and the Hessian of the loss with respect to the final layer as $\v{H}_L \in \mathbb{R}^{d_L \times d_L}$. 
Each entry of $\v{J}_{k}$ is a partial derivative  of $f^{\theta}_{k,i}$, the function from layer $k$ to the $i^{\mathrm{th}}$ network output, $i = 1 ... d_L$. 
\begin{align*}
    \v{J}_{k}(\v{x}) = \begin{bmatrix} 
    \frac{f^{\theta}_{k,1}}{\partial h_{k,1}} & \frac{f^{\theta}_{k,1}}{\partial h_{k,2}} & \dots \\
    \vdots & \ddots & \\
    \frac{f^{\theta}_{k,d_L}}{\partial h_{k,1}} &        & \frac{f^{\theta}_{k,d_L}}{\partial h_{k,d_k}} 
    \end{bmatrix} , \ 
    \v{H}_{L}(\v{x}, \v{y}) = \begin{bmatrix} 
    \frac{\partial^2 \mathcal{L}}{\partial h^2_{L,1}} & \frac{\partial^2 \mathcal{L}}{\partial h_{L,1}\partial h_{L,2}} & \dots \\
    \vdots & \ddots & \\
    \frac{\partial^2 \mathcal{L}}{\partial h_{L,d_L}\partial h_{L,1}} &        & \frac{\partial^2 \mathcal{L}}{\partial h^2_{L,d_L}} 
    \end{bmatrix} 
\end{align*}

Using these notations we can now define the explicit regularisation induced by GNIs. 

\begin{theorem}
Consider an $L$ layer neural network experiencing isotropic GNIs at each layer $k \in [0, \dots, L - 1]$ of dimensionality $d_k$. 
We denote this added noise as $\v{\epsilon} = \{{\v{\epsilon}}_0, ... , {\v{\epsilon}}_{L-1} \}$. 
We assume  $\mathcal{L}$ is in $C^\infty$ the class of infinitely differentiable functions. 
We can marginalise out the injected noise $\v{\epsilon}$ to obtain an added regulariser: 
\begin{align*}
\expect_{\v{\epsilon}} \left[ \Delta\mathcal{L}(\mathcal{B};\v{\theta},\v{\mathcal{E}}_{L}) \right] 
    &= \mathbb{E}_{(\v{x},\v{y}) \sim \mathcal{B}} \left[\frac{1}{2}\sum_{k=0}^{L-1}\left[\sigma_k^2\mathrm{Tr}\left(\v{J}^\intercal_{k}(\v{x})
    \v{H}_{L}(\v{x}, \v{y})\v{J}_{k}(\v{x})\right)\right] \right]  + \expect_{\v{\epsilon} } \left[\mathcal{C}(\mathcal{B}, \v{\epsilon} )\right]
    \nonumber 
\end{align*}
where $\v{h}_{k}$ are the activations before any noise is added, as in equation \eqref{eq:nonoise_acts}. $\expect_{\v{\epsilon} } \left[\mathcal{C}(\cdot)\right]$ is a remainder term in higher order derivatives.
\label{prop:explicit_reg}
\end{theorem}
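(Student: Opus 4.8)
The plan is to treat $\Delta\mathcal{L}$ as the change in the loss induced by propagating the accumulated noise $\v{\mathcal{E}}_L$ to the output, and to extract the advertised trace term by combining a second-order Taylor expansion of $\mathcal{L}$ with a first-order expansion of $\v{\mathcal{E}}_L$. Since only the layer-$L$ activations enter the loss and the final layer carries no additive noise, the noised loss is $\mathcal{L}(\v{h}_L(\v{x}) + \v{\mathcal{E}}_L, \v{y})$, so $\Delta\mathcal{L} = \mathcal{L}(\v{h}_L + \v{\mathcal{E}}_L, \v{y}) - \mathcal{L}(\v{h}_L, \v{y})$. First I would Taylor-expand about the clean activations $\v{h}_L(\v{x})$; because $\mathcal{L} \in C^\infty$ every derivative exists and is evaluated at the deterministic point $\v{h}_L(\v{x})$, giving
\begin{equation*}
\Delta\mathcal{L} = \sum_i \frac{\partial\mathcal{L}}{\partial h_{L,i}}\,\mathcal{E}_{L,i} + \frac{1}{2}\sum_{i,j}[\v{H}_L]_{ij}\,\mathcal{E}_{L,i}\mathcal{E}_{L,j} + (\text{third and higher order in }\v{\mathcal{E}}_L).
\end{equation*}

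Next I would expand the accumulated noise to first order in $\v{\epsilon}$. Unrolling the recursion of Proposition \ref{prop:accum_noise} from $\v{\mathcal{E}}_0 = \v{\epsilon}_0$ and keeping only the $|\alpha_k|=1$ contributions at every layer, the chain rule composes the per-layer Jacobians $\partial h_{k}/\partial h_{k-1}$ along the path from layer $k$ to layer $L$ into the full layer-$k$-to-output Jacobian $\partial f^{\theta}_{k,i}/\partial h_{k,m} = [\v{J}_k(\v{x})]_{im}$. This yields $\mathcal{E}_{L,i} = \sum_{k=0}^{L-1}\sum_m [\v{J}_k(\v{x})]_{im}\,\epsilon_{k,m} + \mathcal{C}_{\v{\mathcal{E}}}$, where $\mathcal{C}_{\v{\mathcal{E}}}$ gathers every term carrying at least one derivative $D^{\alpha}h_{k,i}$ with $|\alpha|\geq 2$.

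Then I would take $\expect_{\v{\epsilon}}[\cdot]$ using that the injections are mutually independent, zero-mean and isotropic, i.e.\ $\expect[\epsilon_{k,m}] = 0$ and $\expect[\epsilon_{k,m}\epsilon_{k',m'}] = \sigma_k^2\delta_{kk'}\delta_{mm'}$. The first-order loss term applied to the linear part of $\v{\mathcal{E}}_L$ vanishes by zero mean, while the quadratic loss term applied to the linear part gives $\expect[\mathcal{E}_{L,i}^{(1)}\mathcal{E}_{L,j}^{(1)}] = \sum_{k}\sigma_k^2[\v{J}_k\v{J}_k^\intercal]_{ij}$, so that
\begin{equation*}
\frac{1}{2}\sum_{i,j}[\v{H}_L]_{ij}\sum_{k=0}^{L-1}\sigma_k^2[\v{J}_k\v{J}_k^\intercal]_{ij} = \frac{1}{2}\sum_{k=0}^{L-1}\sigma_k^2\,\mathrm{Tr}\!\left(\v{J}_k^\intercal\v{H}_L\v{J}_k\right),
\end{equation*}
which is exactly the stated term. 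Finally I would consign to $\expect_{\v{\epsilon}}[\mathcal{C}]$ everything that remains: the gradient dotted with $\expect[\mathcal{C}_{\v{\mathcal{E}}}]$, the cross and higher-order contributions of $\mathcal{C}_{\v{\mathcal{E}}}$ inside the quadratic term, and all third-and-higher-order terms of the loss expansion. Averaging over $(\v{x},\v{y})\sim\mathcal{B}$ then delivers the claimed identity.

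The main obstacle is the bookkeeping and convergence of the nested expansion — one infinite Taylor series for $\mathcal{L}$ composed with one for each $\v{\mathcal{E}}_k$ — and in particular verifying that every surviving contribution outside the trace term genuinely involves higher-order derivatives of the layer maps and can be placed in $\mathcal{C}$. The delicate point is that the quadratic-in-$\v{\epsilon}$ part of $\v{\mathcal{E}}_L$ also produces an $O(\sigma^2)$ contribution through the \emph{first}-order loss term; this matches the $\sigma$ order of the main term, so $\mathcal{C}$ is ``higher order'' only in the sense of depending on second and higher derivatives of the activations $h_k$, and the separation rests on treating those curvature terms as subdominant rather than of smaller $\sigma$ order. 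The computation producing the trace itself is routine once independence and isotropy are invoked.
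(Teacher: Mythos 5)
Your proposal is correct and follows essentially the same route as the paper's proof: a Taylor expansion of the loss about the clean activations composed with the expansion of the accumulated noise, expectation over the zero-mean isotropic injections killing the odd terms, and the Gauss--Newton trace surviving from the quadratic term while all higher-derivative and cross-layer contributions are absorbed into $\mathcal{C}$. Your closing remark --- that the $D\mathcal{L}\cdot D^2 h$ terms are of the same order in $\sigma^2$ and that the remainder is ``higher order'' only in derivative order --- is exactly the term the paper discards in its Gauss--Newton step, and you state this caveat more explicitly than the paper does.
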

See Appendix \ref{app:exp_reg} for the proof and for the exact form of the remainder. 
We denote the first term in Theorem~\ref{prop:explicit_reg} as:
\begin{equation}
     R(\mathcal{B}; \v{\theta}) = \mathbb{E}_{(\v{x},\v{y}) \sim \mathcal{B}} \left[\frac{1}{2}\sum_{k=0}^{L-1}\left[\sigma_k^2\mathrm{Tr}\left(\v{J}^\intercal_{k}(\v{x})
    \v{H}_{L}(\v{x}, \v{y})\v{J}_{k}(\v{x})\right)\right] \right]
\end{equation}

To understand the main contributors behind the regularising effect of GNIs, we first want to establish the relative importance of the two terms that constitute the explicit effect. 
We know that $R$ is the added regulariser for the linearised version of a neural network, defined by its Jacobian.
This linearisation well approximates neural network behaviour for \textit{sufficiently wide} networks \citep{ntk, chizat:hal-01945578, NEURIPS2019_dbc4d84b} in early stages of training \citep{chen2020generalized}, and we can expect $R$ to dominate the remainder term in norm, which consists of higher order derivatives. 
In Figure \ref{fig:kappadom} we show that this is the case for a range of GNI variances, datasets, and activation functions for networks with 256 neurons per layer; where the remainder is estimated as: 
\[\expect_{\v{\epsilon} } \left[\mathcal{C}(\mathcal{B}, \v{\epsilon})\right] \approx \frac{1}{1000}\sum_{i=0}^{1000}\widetilde{\mathcal{L}}(\mathcal{B};\v{\theta}, \v{\epsilon}) - R(\mathcal{B}; \v{\theta}) - \mathcal{L}(\mathcal{B};\v{\theta}).\] 
These results show that $R$ is a significant component of the regularising effect of GNIs. 
It dominates the remainder $\expect_{\v{\epsilon} } \left[\mathcal{C}(\cdot)\right]$ in norm and is always positive, as we will show, thus offering a consistent objective for SGD to minimise.
Given that $R$ is a likely candidate for understanding the effect of GNIs; we further study this term separately in regression and classification settings.

\begin{figure}[t!]
    \centering
     \subfigure[][BHP Sigmoid]{\includegraphics[width=0.25\textwidth]{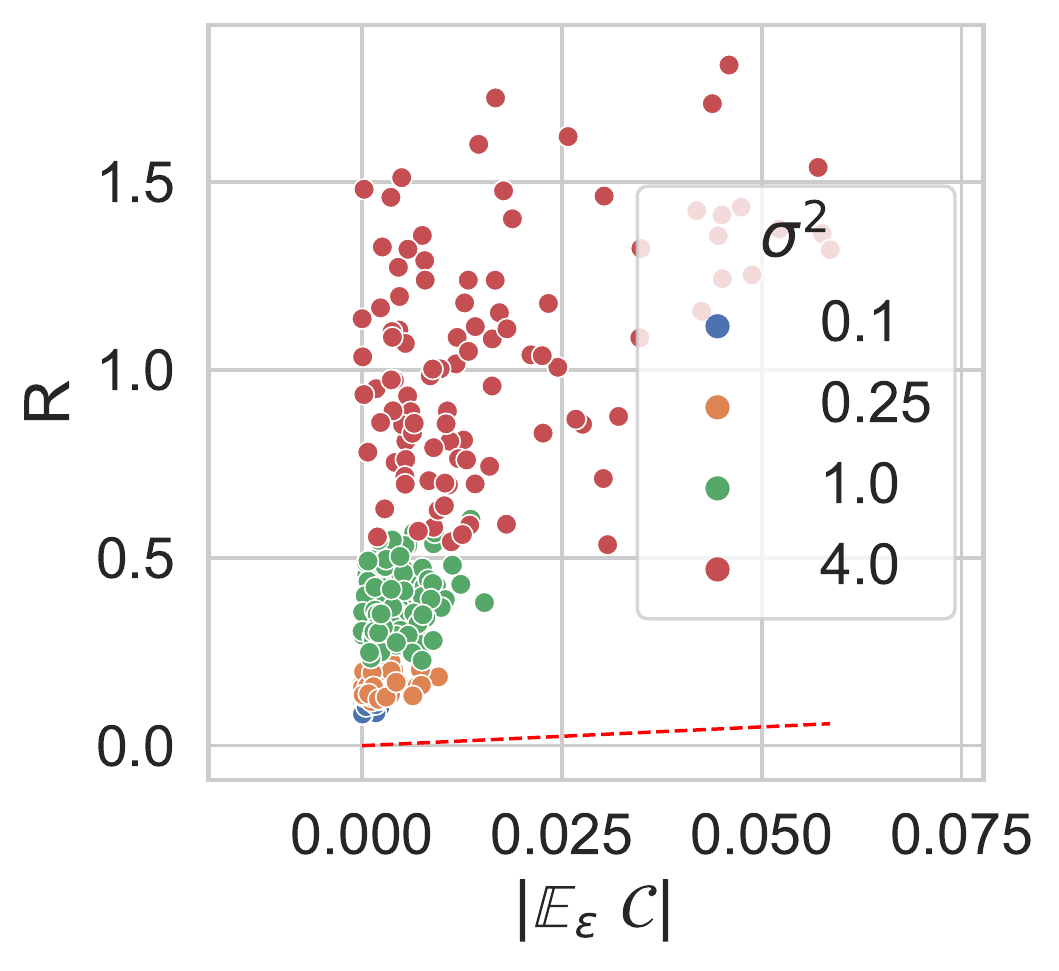}}
     \subfigure[][CIFAR10 ELU]{\includegraphics[width=0.24\textwidth]{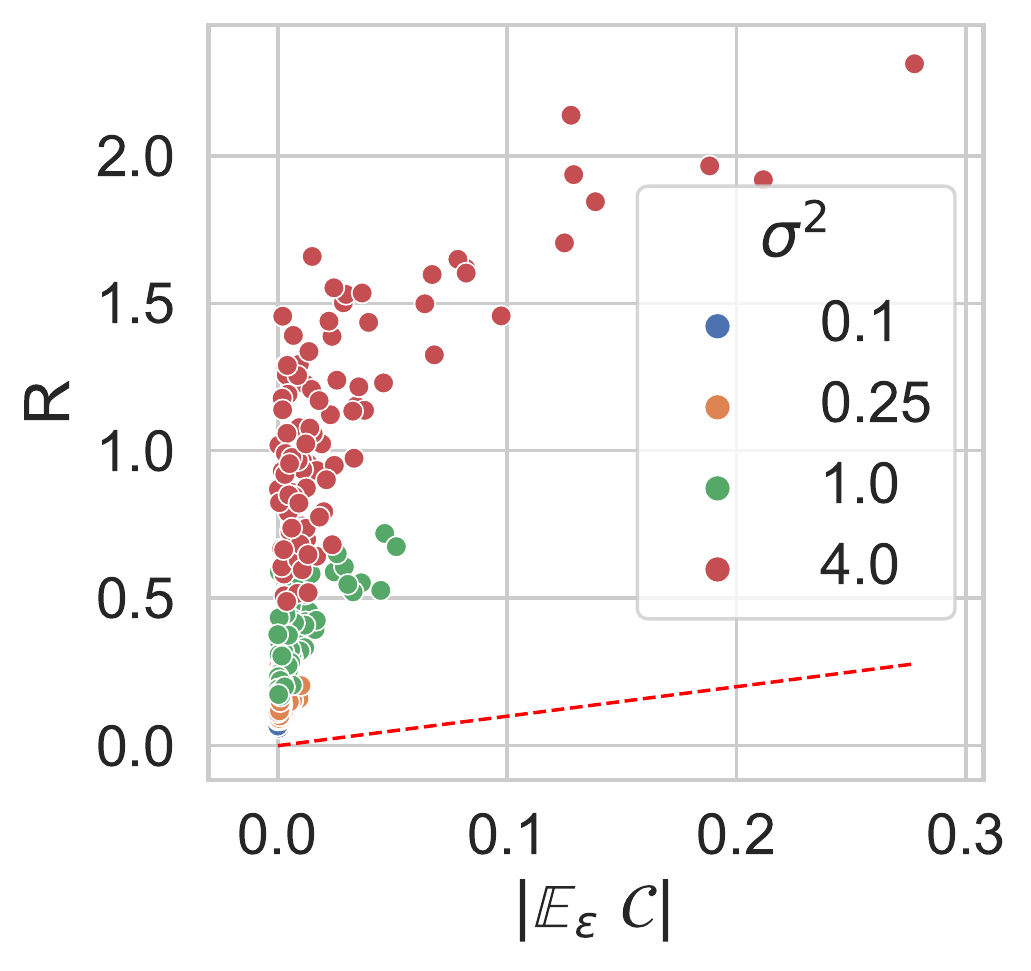}}
     \subfigure[][BHP Sigmoid]{\includegraphics[width=0.25\textwidth]{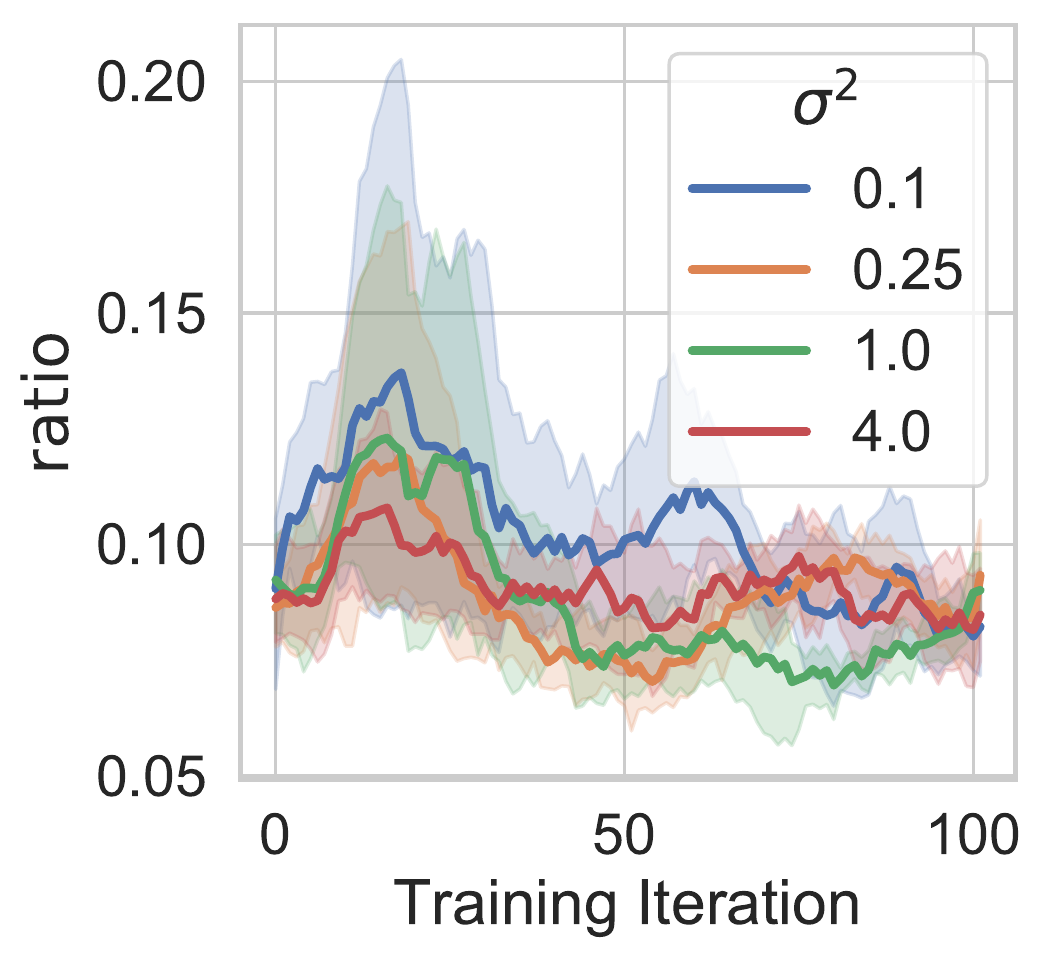}}
    \subfigure[][CIFAR10 ELU]{\includegraphics[width=0.24\textwidth]{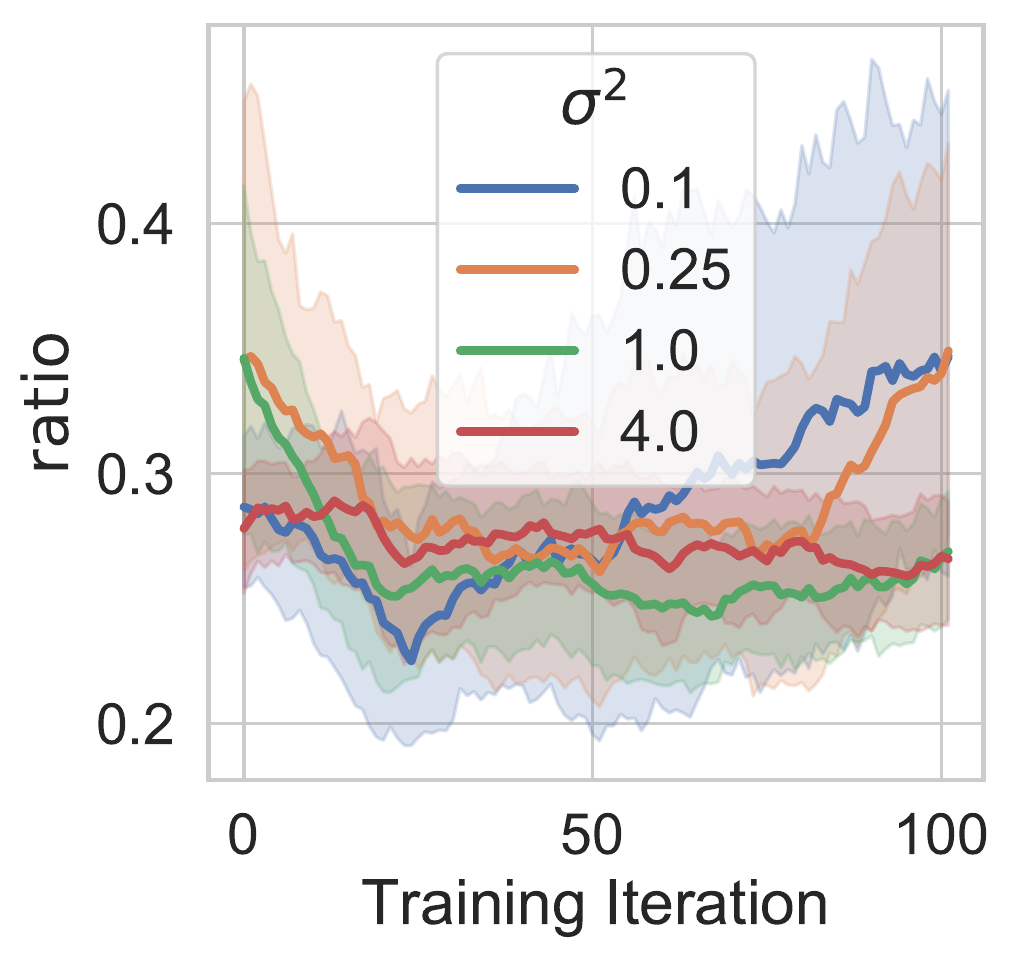}}
   
    \caption{
    In (a,b) we plot $R(\cdot)$ vs $\expect_{\v{\epsilon} } \left[\mathcal{C}(\cdot)\right]$ at initialisation for 6-layer-MLPs with GNIs at each 256-neuron layer with the same variance $\sigma^2 \in [0.1,0.25,1.0,4.0]$ at each layer. Each point corresponds to one of 250 different network initialisation acting on a batch of size 32 for the classification dataset CIFAR10 and regression dataset Boston House Prices (BHP) datasets. The dotted red line corresponds to $y=x$ and demonstrates that for all batches and GNI variances $R$ is greater than $\expect_{\v{\epsilon} } \left[\mathcal{C}(\cdot)\right]$. In (c,d) we plot $\mathrm{ratio}=|\expect_{\v{\epsilon} } \left[\mathcal{C}(\cdot)\right]|/R(\cdot)$ in the first 100 training iteration for 10 randomly initialised networks. Shading corresponds to the standard deviation of values over the 10 networks. $R(\cdot)$ remains dominant in early stages of training as the ratio is less than 1 for all steps. 
    }
    \label{fig:kappadom}
\end{figure}

\paragraph{Regularisation in Regression}

In the case of regression one of the most commonly used loss functions is the mean-squared error (MSE), which is defined for a data label pair $(\v{x}, \v{y})$ as:
\begin{equation}
\mathcal{L}(\v{x}, \v{y}) = \frac{1}{2}(\v{y} - \v{h}_{L}(\v{x}))^2.
\label{eq:mse}
\end{equation}
For this loss, the Hessians in Theorem \ref{prop:explicit_reg} are simply the identity matrix. 
The explicit regularisation term, guaranteed to be positive is: 
\begin{equation}
    R(\mathcal{B}; \v{\theta}) = \frac{1}{2} \mathbb{E}_{\v{x} \sim \mathcal{B}} \left[ \sum_{k=0}^{L-1} \sigma^2_k (\|\v{J}_{k}(\v{x})\|^2_F) \right].
    \label{eq:mse_reg_term}
\end{equation}

where $\sigma^2_k$ is the variance of the noise $\v{\epsilon}_{k}$ injected at layer $k$ and $\|\cdot\|_F$ is the Frobenius norm. 
See Appendix \ref{app:exp_reg_regression} for a proof.

\paragraph{Regularisation in Classification }
In the case of classification, we consider the case of a cross-entropy (CE) loss.
Recall that we consider our network outputs $\v{h}_L$ to be the pre-$\mathrm{softmax}$ of the logits of the final layer.
We denote $\v{p}(\v{x})=\mathrm{softmax}(\v{h}_L(\v{x}))$. 
For a pair $(\v{x}, \v{y})$ we have:
\begin{equation}
\mathcal{L}(\v{x}, \v{y}) = - \sum_{c=0}^C \v{y}_{c} \log (\v{p}(\v{x}))_c),
\label{eq:cross_entropy}
\end{equation}
where $c$ indexes over $C$ possible classes.
The hessian $\v{H}_{L}(\cdot)$ no longer depends on $\v{y}$: 
\begin{equation}
\v{H}_{L}(\v{x})_{i,j} = 
\begin{cases}
\v{p}(\v{x})_i(1 - \v{p}(\v{x})_j) & i = j \\
-\v{p}(\v{x})_i\v{p}(\v{x})_j & i \neq j \\
\end{cases}
\label{eq:hessian_classification}
\end{equation} 
This Hessian is positive-semi-definite  and $R(\cdot)$, guaranteed to be positive, can be written as:
\begin{equation}
    R(\mathcal{B}; \v{\theta}) = \frac{1}{2} \mathbb{E}_{\v{x} \sim \mathcal{B}} \left[ \sum_{k=0}^{L-1} \sigma_k^2 \sum_{i,j}(\mathrm{diag}( \v{H}_{L}(\v{x}))^\intercal{\v{J}}^2_{k}(\v{x}) )_{i,j} \right],
\label{eq:ce_reg_term}
\end{equation}
where $\sigma^2_k$ is the variance of the noise $\v{\epsilon}_{k}$ injected at layer $k$.
See Appendix \ref{app:exp_reg_classification} for the proof.

\begin{figure}[t!]
    \centering
    \subfigure[][SVHN ELU]{\includegraphics[width=0.54\textwidth]{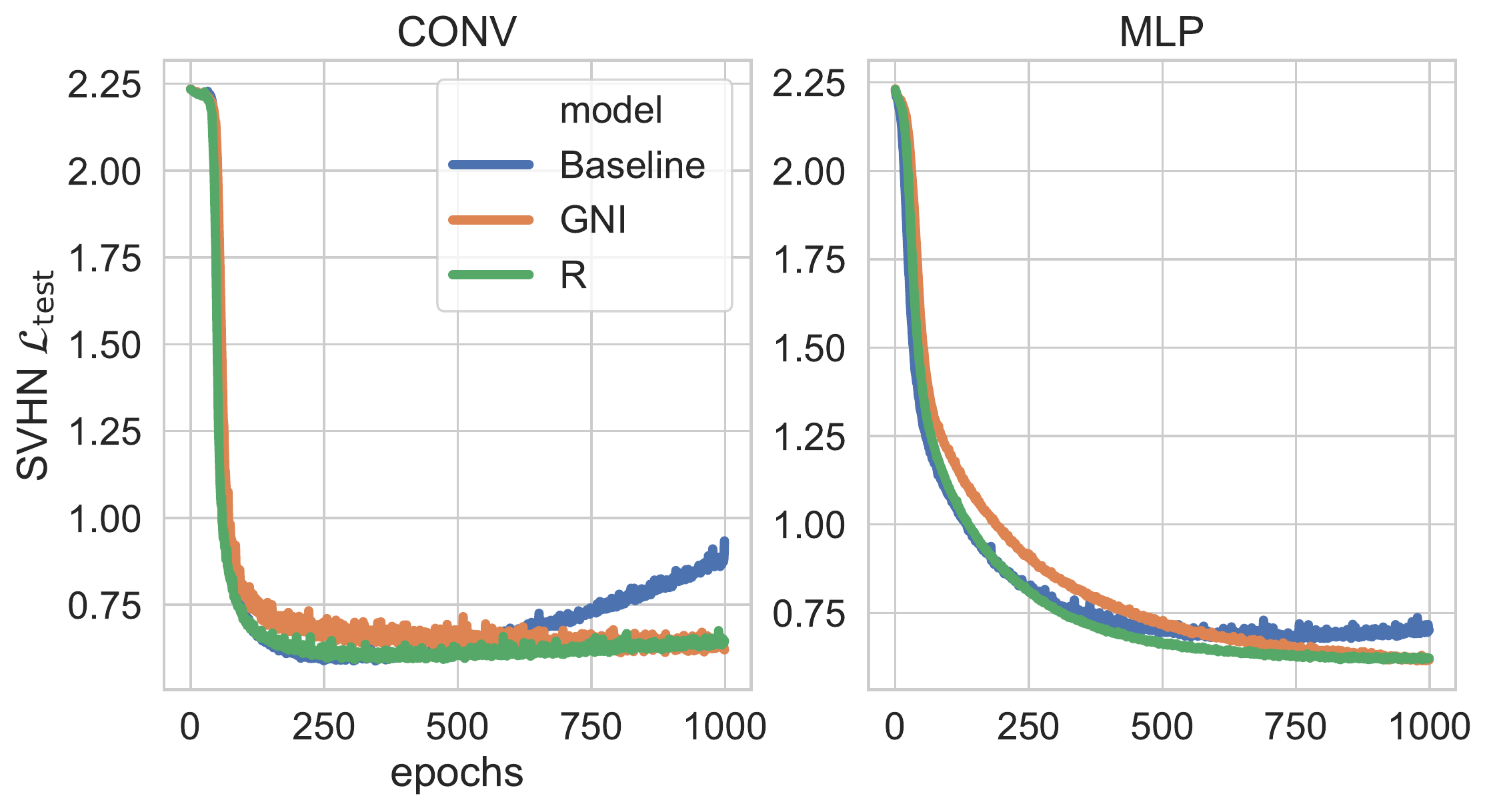}}
    \hspace{1cm}
    \subfigure[][SVHN MLP $\mathrm{Tr}(\v{H})$ ]{\includegraphics[width=0.3\textwidth]{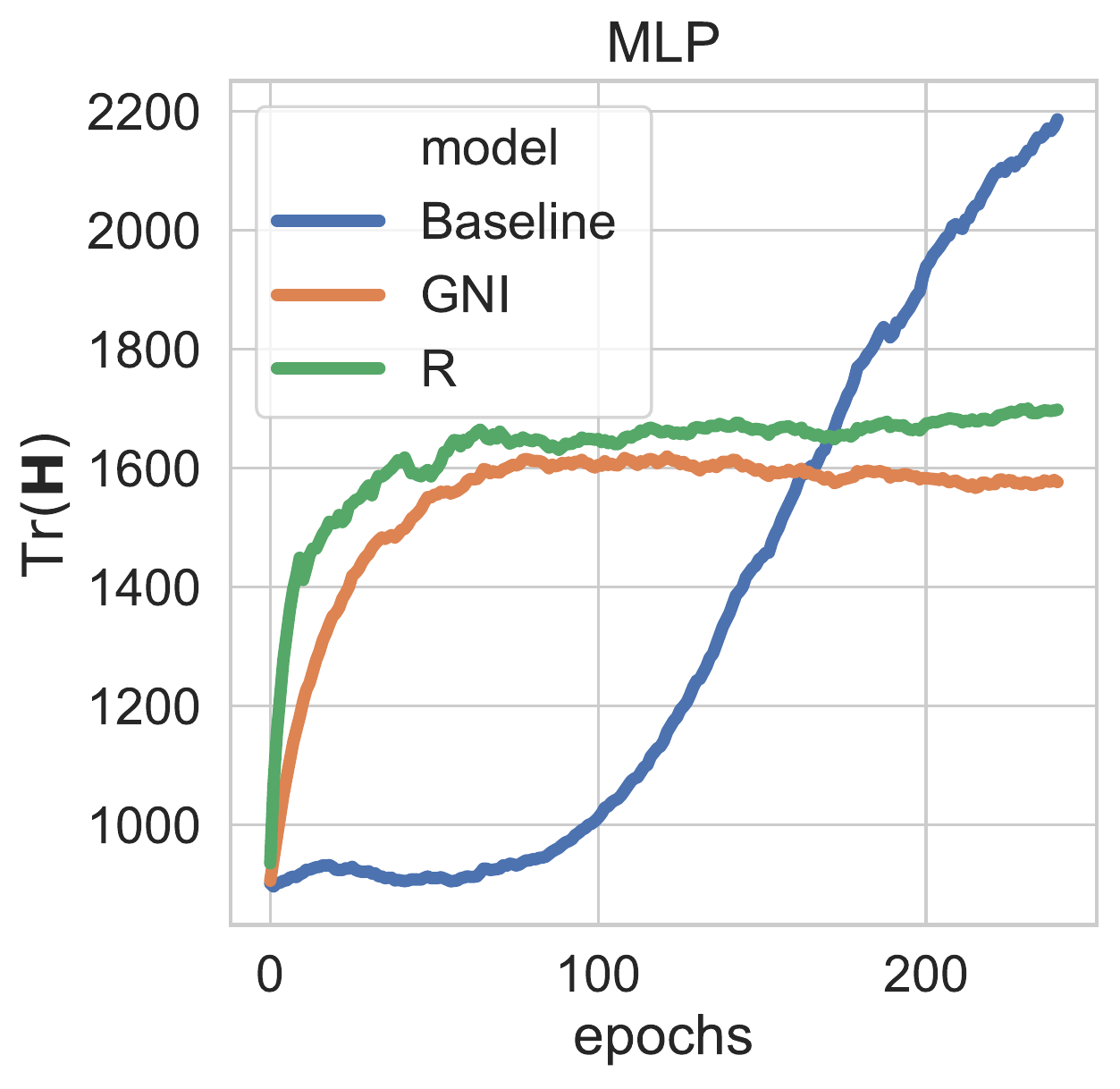}} \\
    \caption{
    Figure (a) shows the test set loss for convolutional models (CONV) and 4 layer MLPs trained on SVHN with $R(\cdot)$ and GNIs for $\sigma^2=0.1$, and no noise (Baseline).
    Figure (b) shows the trace of the network parameter Hessian for a 2-layer, 32-unit-per-layer MLP where $\v{H}_{i,j} = \frac{\partial \mathcal{L}}{\partial w_i\partial w_j}$, which is a proxy for the parameters' location in the loss landscape.  
    All networks use ELU activations. 
    See Appendix \ref{app:exp_reg_results} for more such results on other datasets and network architectures.
    }
    \label{fig:marg_approx}
\end{figure}

To test our derived regularisers, in Figure~\ref{fig:marg_approx} we show that models trained with $R$ and GNIs have similar training profiles, whereby they have similar test-set loss and parameter Hessians throughout training, meaning that they have almost identical trajectories through the loss landscape. This implies that $R$ is a good descriptor of the effect of GNIs and that we can use this term to understand the mechanism underpinning the regularising effect of GNIs. 
As we now show, it penalises neural networks that parameterize functions with higher frequencies in the Fourier domain; offering a novel lens under which to study GNIs. 

\section{Fourier Domain Regularisation}
\label{sec:Fourier}

To link our derived regularisers to the Fourier domain, we use the connection between neural networks and Sobolev Spaces mentioned above. 
Recall that by \cite{Hornik1991}, we can only assume a sigmoid or piecewise linear neural network parameterises a function in a weighted Sobolev space with measure $\mu$, if we assume that the measure $\mu$ has compact support on a subset $\Omega \in \mathbb{R}^d$.
As such, we equip our space with the \textit{probability} measure $\mu(\v{x})$, which we assume has compact support on some subset $\Omega \subset \mathbb{R}^d$ where $\mu(\Omega)=1$. 
We define it such that $d\mu(\v{x}) = p(\v{x})d\v{x}$ where $d\v{x}$ is the Lebesgue measure and $p(\v{x})$ is the data density function. 
Given this measure, we can connect the derivative of functions that are in the Hilbert-Sobolev space $W^{1,2}_{\mu}(\mathbb{R}^d)$ to the Fourier domain. 

\begin{theorem}
Consider a function, $f: \mathbb{R}^d \to \mathbb{R}$, with a $d$-dimensional input and a single output with $f \in W^{1,2}_{\mu}(\mathbb{R}^d)$ where $\mu$ is a probability measure which we assume has compact support on some subset $\Omega \subset \mathbb{R}^d$ such that $\mu(\Omega)=1$.
Let us assume the derivative of $f$, $D^{\alpha} f$, is in  $L^2(\mathbb{R}^d)$ for some multi-index $\alpha$ where $|\alpha|=1$.  
Then we can write that: 
\begin{align}
 \sum_{|\alpha|=1}\|D^\alpha f\|^2_{L^2_{\mu}(\mathbb{R}^d)} &= 
   \int_{\mathbb{R}^d} \sum^d_{j=1} \Bigr|\mathcal{G}(\v{\omega},j)\overline{ \mathcal{G}(\v{\omega},j) * \mathcal{P}(\v{\omega})} \Bigr|  d\v{\omega}  \\ 
 \mathcal{G}(\v{\omega},j) &=  \v{\omega}_j\mathcal{F}(\v{\omega})  \nonumber
\end{align}
where $\mathcal{F}$ is the Fourier transform of $f$, $\mathcal{P}$ is the Fourier transform or the `characteristic function' of the probability measure $\mu$, $j$ indexes over $\v{\omega} = [\omega_1, \dots, \omega_d]$, $*$ is the convolution operator, and $\overline{(\cdot)}$ is the complex conjugate. 
\label{th:plancherel-measure-space}
\end{theorem}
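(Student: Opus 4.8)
The plan is to collapse the weighted Sobolev seminorm on the left into a sum of $d$ coordinate-wise weighted $L^2$ integrals, and then carry each integral into the frequency domain using Parseval's identity together with the two elementary Fourier rules: differentiation and multiplication. First I would unfold the weighted norm \eqref{eq:weighted_sobolev_norm}: the condition $|\alpha|=1$ selects the $d$ first-order partials, and since $\intd\mu(\v{x}) = p(\v{x})\intd\v{x}$ with $p$ the data density,
\begin{equation*}
\sum_{|\alpha|=1}\|D^\alpha f\|^2_{L^2_{\mu}(\mathbb{R}^d)} = \sum_{j=1}^d \int_{\mathbb{R}^d} \Bigl| \frac{\partial f}{\partial x_j}(\v{x}) \Bigr|^2 p(\v{x}) \intd\v{x}.
\end{equation*}
Abbreviating $g_j := \partial f/\partial x_j$ and using that $p$ is real-valued, I would rewrite each summand as the Hermitian pairing $\int_{\mathbb{R}^d} g_j(\v{x})\,\overline{g_j(\v{x})\,p(\v{x})}\intd\v{x}$, which is precisely the shape needed for Parseval.

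Next I would apply Parseval to the pair $a = g_j$, $b = g_j\,p$, giving $\int a\,\overline{b}\,\intd\v{x} = \int \mathcal{F}[a]\,\overline{\mathcal{F}[b]}\,\intd\v{\omega}$ up to the constant fixed by the Fourier convention (suppressed here, as in the statement). I would then evaluate the two transforms by the standard rules: the differentiation rule yields $\mathcal{F}[g_j](\v{\omega}) = i\,\v{\omega}_j\,\mathcal{F}(\v{\omega}) = i\,\mathcal{G}(\v{\omega},j)$, and the convolution theorem turns the product into $\mathcal{F}[g_j\,p] = \mathcal{F}[g_j] * \mathcal{P} = i\,\mathcal{G}(\cdot,j) * \mathcal{P}$, where $\mathcal{P} = \mathcal{F}[p]$ is the characteristic function of $\mu$. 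Substituting and noting that the prefactors combine as $i\cdot\overline{i} = 1$, each summand collapses to $\int_{\mathbb{R}^d} \mathcal{G}(\v{\omega},j)\,\overline{\mathcal{G}(\v{\omega},j) * \mathcal{P}(\v{\omega})}\,\intd\v{\omega}$, and summing over $j$ reproduces the claimed right-hand side.

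Two issues demand care. The first is integrability: Parseval and the convolution theorem require $g_j \in L^2(\mathbb{R}^d)$ --- the stated hypothesis $D^\alpha f \in L^2(\mathbb{R}^d)$ --- together with $g_j\,p \in L^2(\mathbb{R}^d)$. The latter I would secure from the compact support of $\mu$ and an assumed boundedness of $p$, via $\int |g_j\,p|^2 \le \|p\|^2_\infty \int |g_j|^2 < \infty$; by Paley--Wiener the compactly supported $\mu$ has a smooth, bounded characteristic function $\mathcal{P}$, so the convolution is well defined. I would state $p \in L^\infty$ explicitly at this point. The second, and the point I expect to be the real obstacle, is the pointwise modulus $|\cdot|$ in the statement. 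The clean derivation produces the generically complex integrand $z_j(\v{\omega}) = \mathcal{G}(\v{\omega},j)\overline{\mathcal{G}(\v{\omega},j) * \mathcal{P}(\v{\omega})}$, and since $f$ is real its transform is Hermitian, which forces $z_j(-\v{\omega}) = \overline{z_j(\v{\omega})}$ and hence makes $\int z_j \intd\v{\omega} = \int |g_j|^2 p \ge 0$ real and nonnegative. However $\int |z_j| \intd\v{\omega} \neq \int z_j \intd\v{\omega}$ unless $z_j$ is real and nonnegative almost everywhere, so the literal pointwise bars are not justified; I would therefore read $|\cdot|$ as denoting the resulting real, nonnegative spectral integrand --- equivalently, as replacing the pointwise modulus by taking real parts, which the Hermitian symmetry leaves invariant under integration --- rather than as a genuine pointwise absolute value.
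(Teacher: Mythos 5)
Your proposal follows essentially the same route as the paper's own proof: both pair $D^\alpha f$ with $D^\alpha f\cdot\mu$, pass to the frequency domain via Plancherel, and use the differentiation rule together with the convolution theorem before cancelling $i\,\overline{i}=1$. Where you diverge is in the two supporting details, and in both cases your version is the more careful one. To show $D^\alpha f\cdot\mu\in L^2$ the paper invokes ``Minkowski's inequality'' in a form ($\int|g\mu^2|<\int|\mu|\cdot\int|g\mu|$) that is not Minkowski and does not hold in general; your assumption $p\in L^\infty$ plus compact support gives the bound $\int|g_jp|^2\le\|p\|_\infty^2\int|g_j|^2$ cleanly, at the cost of an extra (mild) hypothesis the paper never states. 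And your observation about the pointwise modulus is a genuine catch: the paper's derivation also produces the complex integrand $\mathcal{G}\,\overline{\mathcal{G}*\mathcal{P}}$ and simply wraps it in $|\cdot|$ without justification, whereas the correct reading is the one you give --- the integral itself is real and nonnegative by Hermitian symmetry, but the literal pointwise absolute value would change its value.
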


See Appendix \ref{app:plancherel_th} for the proof. 
Note that in the case where the dataset contains finitely many points, the integrals for the norms of the form $ \|D^\alpha f_{\theta}\|^2_{L^2_{\mu}(\mathbb{R}^d)}$ are approximated by sampling a batch from the dataset which is distributed according to the presumed probability measure $\mu(\v{x})$. 
Expectations over a batch thus approximate integration over $\mathbb{R}^d$ with the measure $\mu(\v{x})$ and this approximation improves as the batch size grows. 
We can now use Theorem \ref{th:plancherel-measure-space} to link $R$ to the Fourier domain.

\begin{figure}[t!]
    \centering
     \hspace{0.4cm}
    \includegraphics[width=0.3\textwidth]{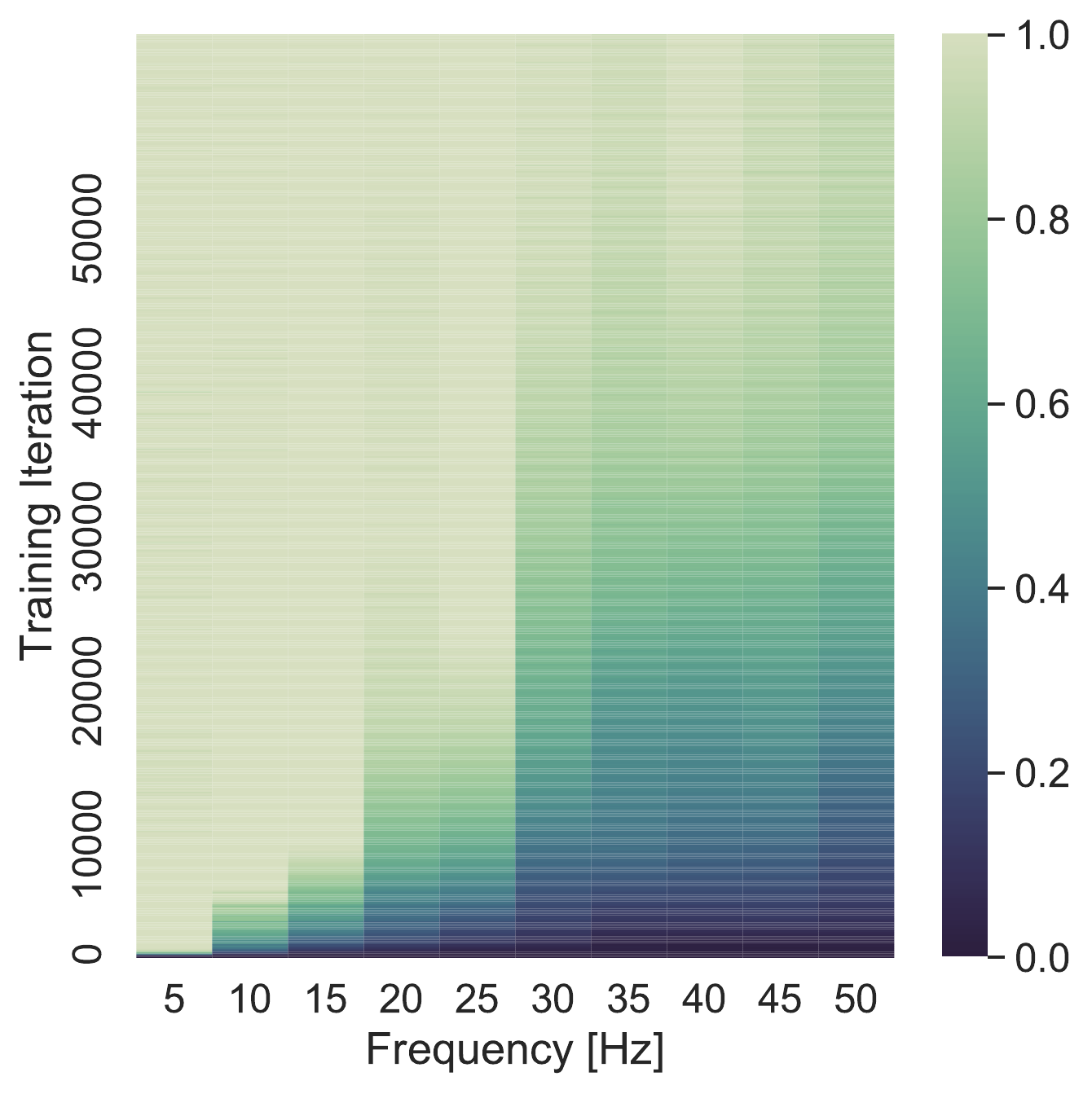}
    \includegraphics[width=0.3\textwidth]{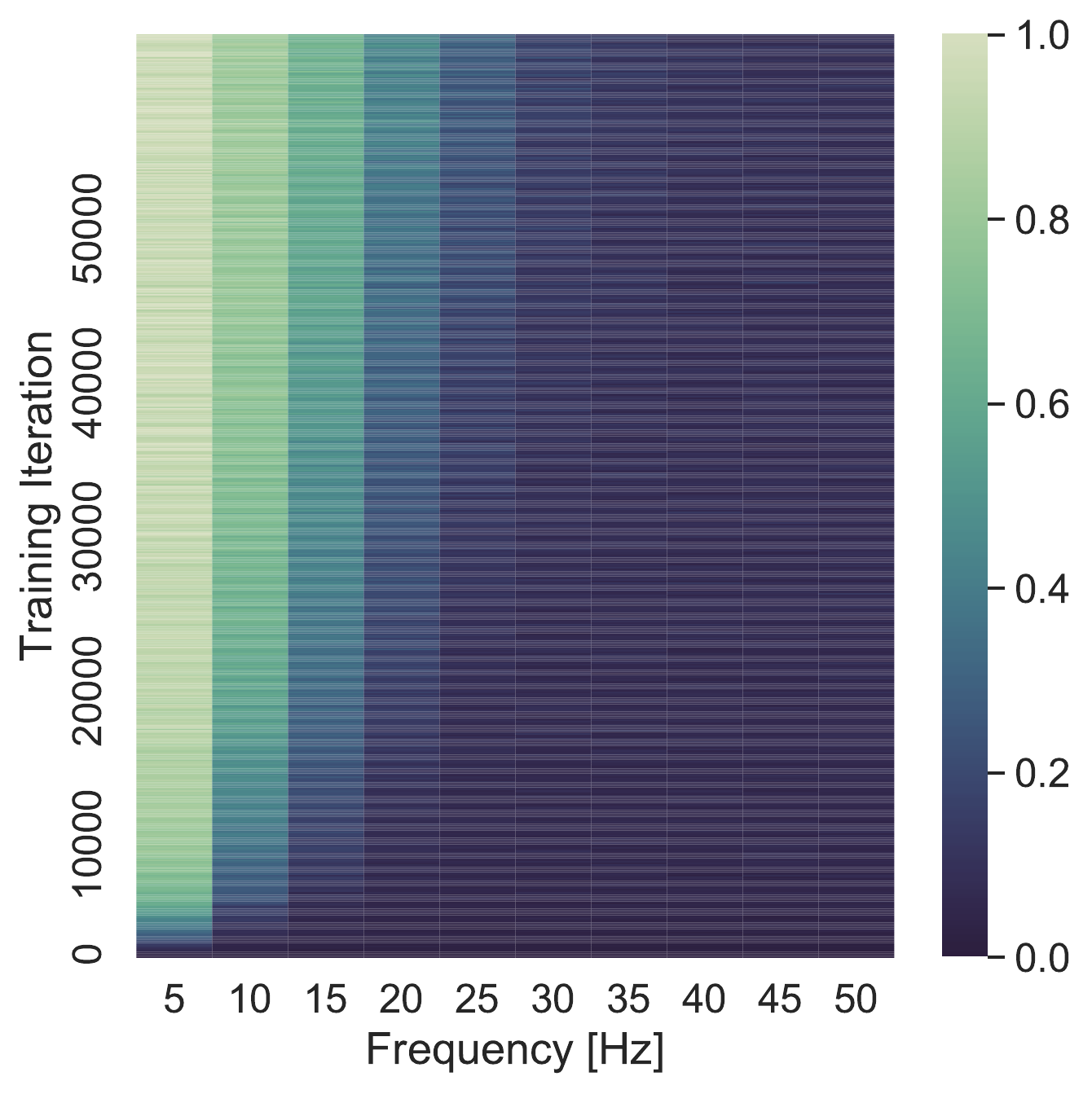}
    \includegraphics[width=0.3\textwidth]{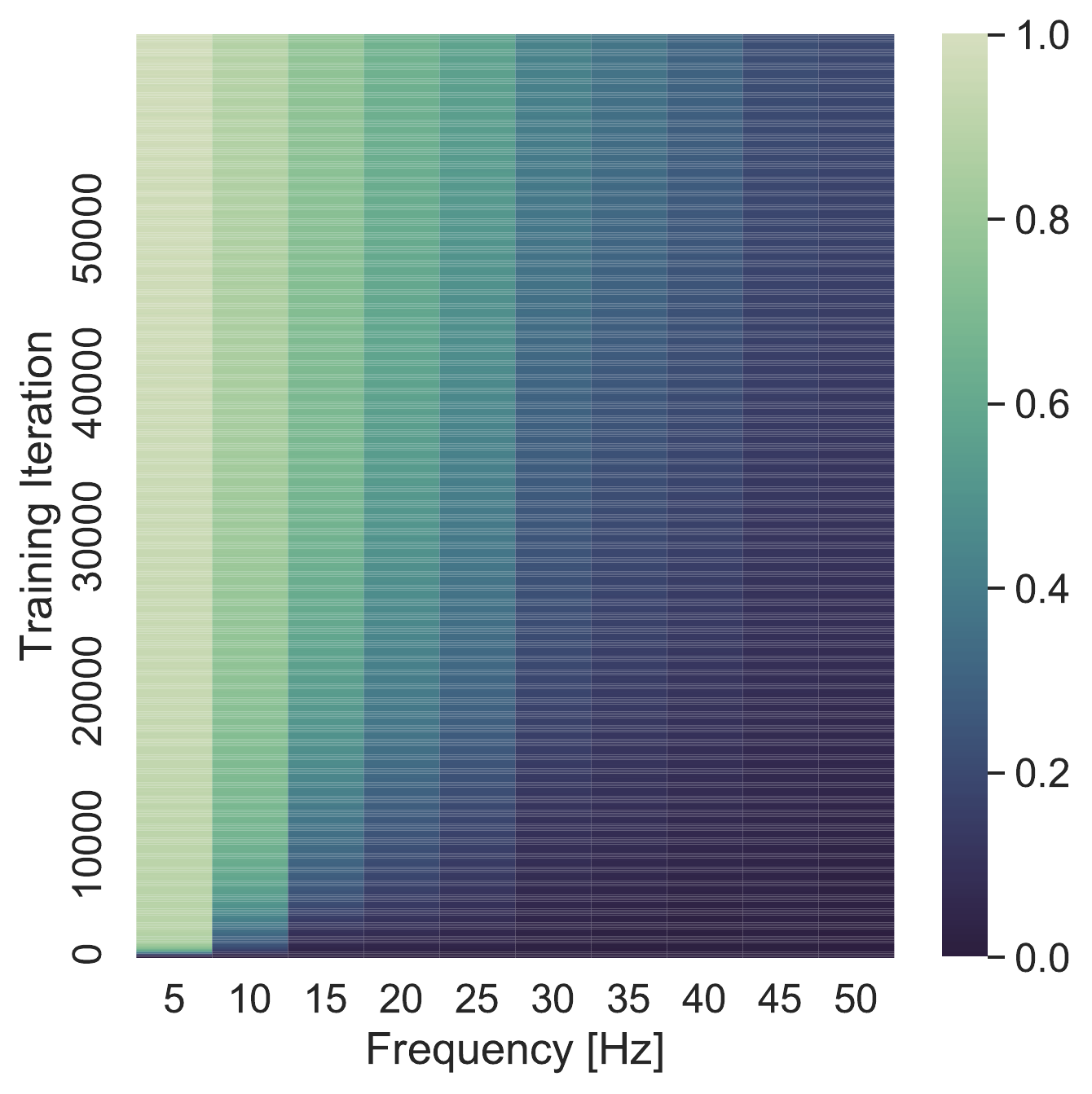} \\
    \vspace{-0.1cm}
    \hspace{-0.4cm}
    \subfigure[Baseline]{\includegraphics[width=0.3\textwidth]{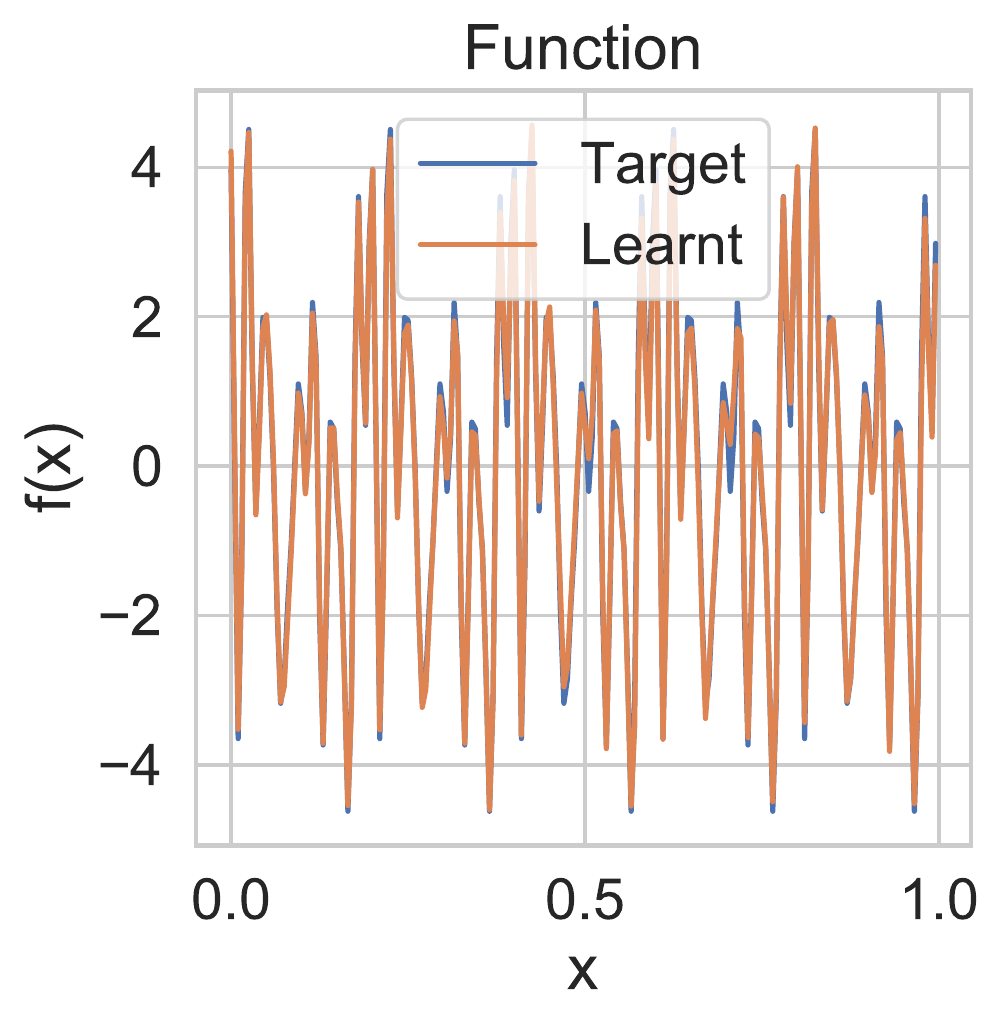}}
    \subfigure[Noise]{\includegraphics[width=0.3\textwidth]{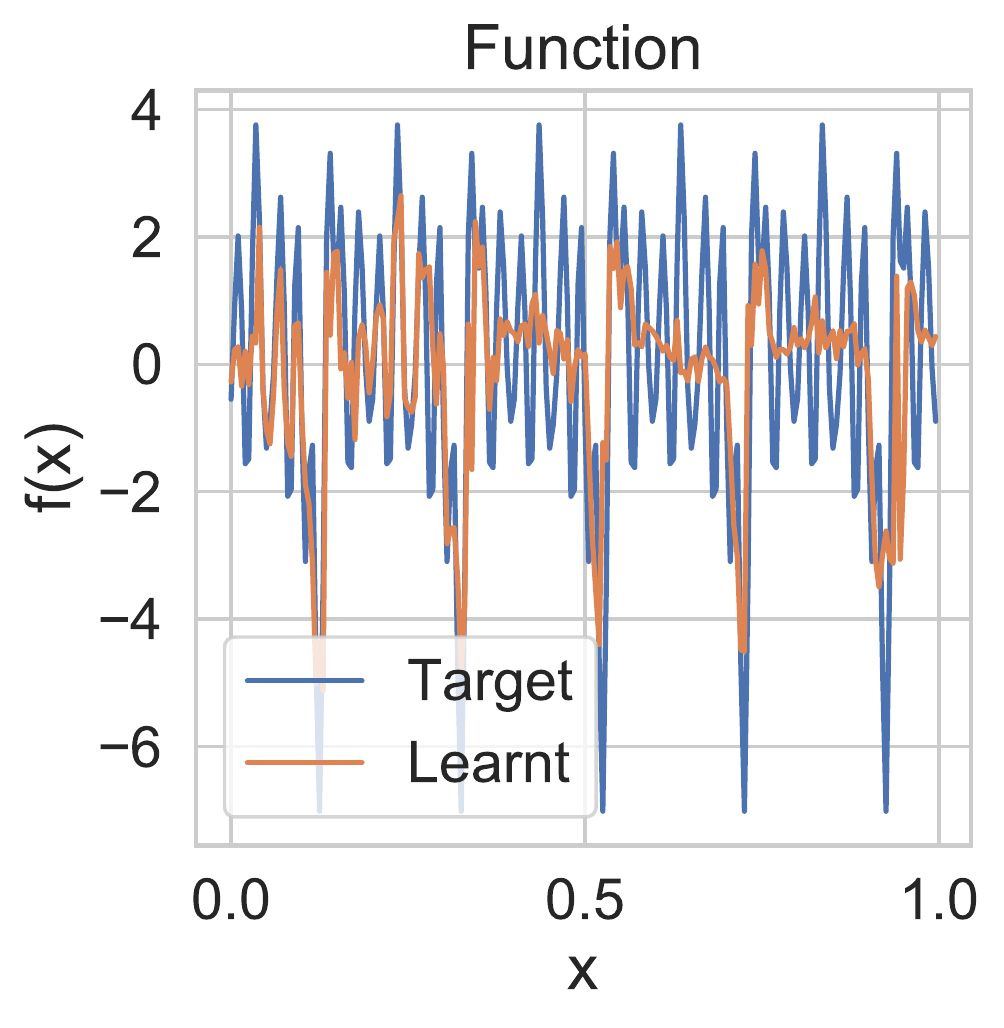}}
    \subfigure[$R$]{\includegraphics[width=0.3\textwidth]{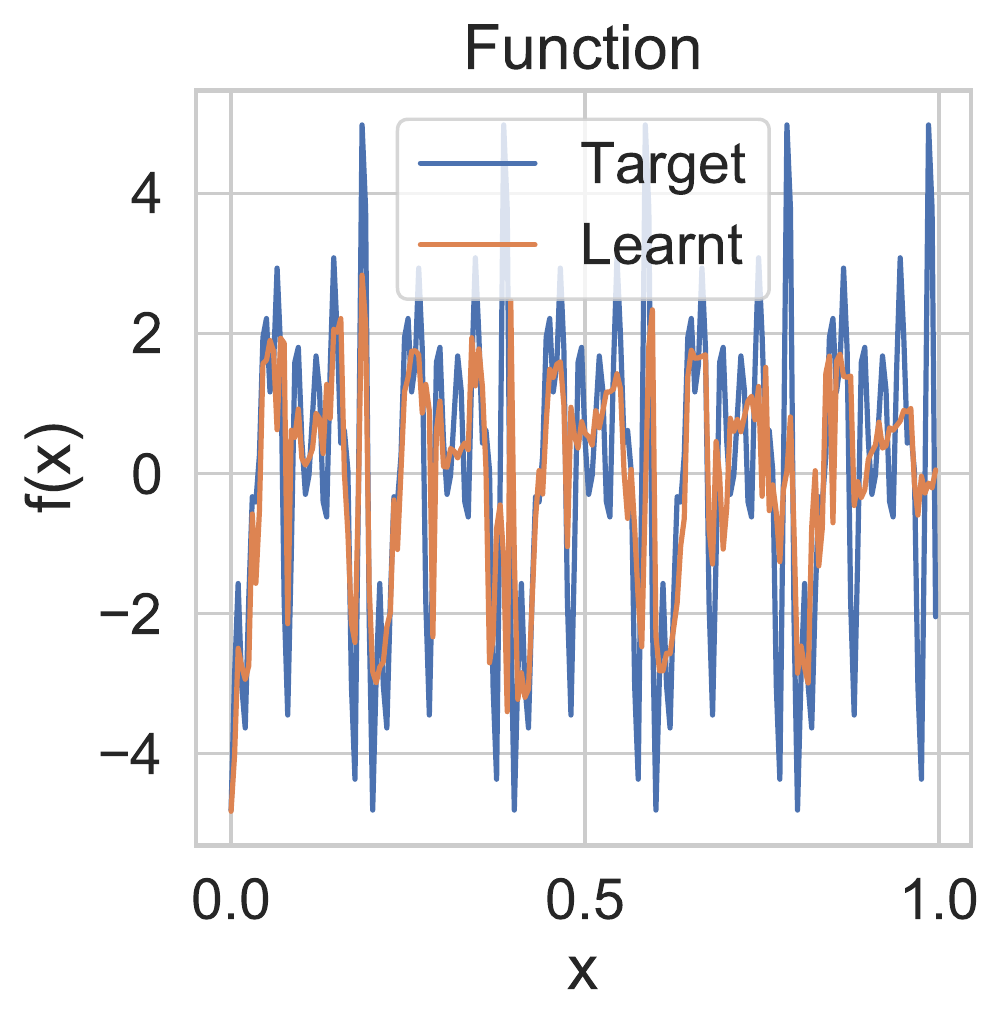}}
    
    \caption{As in \citet{Rahaman2019}, we train 6-layer deep 256-unit wide ReLU networks trained to regress the function $\lambda(z) = \sum_i \sin(2\pi r_iz+\phi(i))$ with $r_i  \in(5,10,\dots,45,50)$. 
    We train these networks with no noise (Baseline), with GNIs of variance $0.1$ injected into each layer except the final layer (Noise), and with the $R(\cdot)$ for regression in \eqref{eq:mse_reg_term}.
    The first row shows the Fourier spectrum (x-axis) of the networks as training progresses (y-axis) averaged over 10 training runs. Colours show each frequency's amplitude clipped between 0 and 1. 
    The second row shows samples of randomly generated target functions and the function learnt by the networks. 
    }
    \label{fig:network_fourier}
\end{figure}

\paragraph{Regression} 
Let us begin with the case of regression.
Assuming differentiable and continuous activation functions, then the Jacobians within $R$ are equivalent to the derivatives in Definition \ref{def:sobolev}.
Theorem \ref{th:plancherel-measure-space} only holds for functions that have 1-D outputs, but we can decompose the Jacobians $\v{J}_{k}$ as the derivatives of multiple 1-D output functions.
Recall, that the $i^\mathrm{th}$ row of the matrix $\v{J}_{k}$ is the set of partial derivatives of $f^{\theta}_{k,i}$, the function from layer $k$ to the $i^{\mathrm{th}}$ network output, $i = 1 ... d_L$, with respect to the $k^\mathrm{th}$ layer activations.
Using this perspective, and the fact that each $f^{\theta}_{k,i} \in W^{1,2}_{\mu}(\mathbb{R}^{d_k})$ ($d_k$ is the dimensionality of the $k^{\mathrm{th}}$ layer), if we assume that the probability measure of our space $\mu(\v{x})$ has compact support, we use Theorem \ref{th:plancherel-measure-space} to write:
\begin{align}
 R(\mathcal{B}; \v{\theta}) &= \frac{1}{2}\mathbb{E}_{\v{x} \sim \mathcal{B}} \left[ \sum_{k=0}^{L-1} \sigma_k^2 \sum_{i}\|\v{J}_{k,i}(\v{x})\|^2_{F} \right] \nonumber \\ &= \frac{1}{2}  \sum_{k=0}^{L-1} \sigma_k^2 \sum_{i} \mathbb{E}_{\v{x} \sim \mathcal{B}} \left[\|\v{J}_{k,i}(\v{x})\|^2_{F} \right] \nonumber \\ 
 &\approx \frac{1}{2}  \sum_{k=0}^{L-1} \sigma_k^2 \sum_{i}  \sum_{|\alpha|=1}\|D^\alpha f^{\theta}_{k,i}\|^2_{L^2_{\mu}(\mathbb{R}^{d_k})} \nonumber \\
 &= \frac{1}{2}\sum_{k=0}^{L-1} \sigma_k^2 \sum_{i} \int_{\mathbb{R}^{d_k}} \sum^{d_k}_{j=1} \Bigr|\mathcal{G}^{\theta}_{k,i}(\v{\omega},j)\overline{ \mathcal{G}^{\theta}_{k,i}(\v{\omega},j) * \mathcal{P}(\v{\omega})}\Bigr|   d\v{\omega}
 \label{eq:sobolev_reg_mse}
\end{align}
where $\v{h}_0 =\v{x}$, $i$ indexes over output neurons, and $\mathcal{G}^{\theta}_{k,i}(\v{\omega}, j) = \v{\omega}_j \mathcal{F}^{\theta}_{k,i}(\v{\omega})$, where $\mathcal{F}^{\theta}_{k,i}$ is the Fourier transform of the function $f^{\theta}_{k,i}$.
The approximation comes from the fact that in SGD, as mentioned above, integration over the dataset is approximated by sampling mini-batches $\mathcal{B}$.

If we take the data density function to be the empirical data density, meaning that it is supported on the $N$ points of the dataset $\mathcal{D}$ (i.e it is a set of $\delta$-functions centered on each point), then as the size $B$ of a batch $\mathcal{B}$ tends to $N$ we can write that:
\begin{align}
 \lim_{B\to N} R(\mathcal{B}; \v{\theta}) &=  \frac{1}{2}\sum_{k=0}^{L-1} \sigma_k^2 \sum_{i} \int_{\mathbb{R}^{d_k}} \sum^{d_k}_{j=1} \Bigr|\mathcal{G}^{\theta}_{k,i}(\v{\omega},j)\overline{ \mathcal{G}^{\theta}_{k,i}(\v{\omega},j) * \mathcal{P}(\v{\omega})}\Bigr|   d\v{\omega}.
\end{align}

\paragraph{Classification}
The classification setting requires a bit more work. 
Recall that our Jacobians are weighted by $\mathrm{diag}(\v{H}_{L}(\v{x}))^\intercal$, which has positive entries that are less than 1 by Equation \eqref{eq:hessian_classification}. We can define a new set of measures such that  $d\mu_i(\v{x}) = \mathrm{diag}(\v{H}_{L}(\v{x}))^\intercal_i p(\v{x}) d\v{x}, \ i = 1 \dots d_L$.
Because this new measure is positive, finite and still has compact support, Theorem \ref{th:plancherel-measure-space} still holds for the spaces indexed by $i$: $W^{1,2}_{\mu_i}(\mathbb{R}^d)$. 

Using these new measures, and the fact that each $f^{\theta}_{k,i} \in W^{1,2}_{\mu_i}(\mathbb{R}^{d_k})$, we can use Theorem \ref{th:plancherel-measure-space} to write that for classification models: 
\begin{align}
 R(\mathcal{B}; \v{\theta}) &=  \frac{1}{2}  \sum_{k=0}^{L-1} \sigma_k^2 \sum_{i} \mathbb{E}_{\v{x} \sim \mathcal{B}} \left[\mathrm{diag}(\v{H}_{L}(\v{x}))^\intercal_i \|\v{J}_{k,i}(\v{x})\|^2_{2} \right] 
 \nonumber\\ &\approx \frac{1}{2}  \sum_{k=0}^{L-1} \sigma_k^2 \sum_{i} \sum_{|\alpha|=1}\|D^\alpha f^{\theta}_{k,i}\|^2_{L^2_{\mu_i}(\mathbb{R}^{d_k})} \nonumber \\
 &=  \frac{1}{2}\sum_{k=0}^{L-1} \sigma_k^2 \sum_{i} \int_{\mathbb{R}^{d_k}}  \sum^{d_k}_{j=1} \Bigr|\mathcal{G}^{\theta}_{k,i}(\v{\omega}, j)\overline{ \mathcal{G}^{\theta}_{k,i}(\v{\omega}, j) * \mathcal{P}_i(\v{\omega})}\Bigr|   d\v{\omega}
 \label{eq:sobolev_reg_ce}
\end{align}

Here $\mathcal{P}_i$ is the Fourier transform of the $i^\mathrm{th}$ measure $\mu_i$ and as before $\mathcal{G}^{\theta}_{k,i}(\v{\omega}, j) = \v{\omega}_j \mathcal{F}^{\theta}_{k,i}(\v{\omega})$, where $\mathcal{F}^{\theta}_{k,i}$ is the Fourier transform of the function $f^{\theta}_{k,i}$.
Again as the batch size increases to the size of the dataset, this approximation becomes exact.

For both regression and classification, GNIs, by way of $R$, induce a prior which favours smooth functions with low-frequency components.
This prior is enforced by the terms $\mathcal{G}^{\theta}_{k,i}(\v{\omega}, j)$ which become large in magnitude when functions have high-frequency components, penalising neural networks that learn such functions. In Appendix \ref{app:tikhonov} we also show that this regularisation in the Fourier domain corresponds to a form of \textit{Tikhonov regularisation}.  

In Figure \ref{fig:network_fourier}, we demonstrate empirically that networks trained with GNIs learn functions that don't overfit; with lower-frequency components relative to their non-noised counterparts.

\paragraph{A layer-wise regularisation} Note that there is a recursive structure to the penalisation induced by $R$. 
Consider the layer-to-layer functions which map from a layer $k-1$ to $k$, $\v{h}_{k}(\v{h}_{k-1}(\v{x}))$. 
With a slight abuse of notation, $\nabla_{\v{h}_{k-1}}  \v{h}_{k}(\v{x})$ is the Jacobian defined element-wise as: 
\[\left(\nabla_{\v{h}_{k-1}}  \v{h}_{k}(\v{x})\right)_{i,j} = \frac{\partial h_{k,i}}{\partial h_{k-1,j}(\v{x})},\]
where as before $h_{k,i}$ is the $i^\mathrm{th}$ activation of layer $k$. 

$\|\nabla_{\v{h}_{k-1}}  \v{h}_{k}(\v{x})\|_2^2$ is penalised $k$ times in $R$ as this derivative appears in $\v{J}_0, \v{J}_1 \dots \v{J}_{k-1}$ due to the chain rule.
As such, when training with GNIs, we can expect the norm of $\|\nabla_{\v{h}_{k-1}}  \v{h}_{k}(\v{x})\|_2^2$ to decrease as the layer index $k$ increases (i.e the closer we are to the network output).
By Theorem \ref{th:plancherel-measure-space}, and Equations \eqref{eq:sobolev_reg_mse}, and \eqref{eq:sobolev_reg_ce}, larger  $\|\nabla_{\v{h}_{k-1}}  \v{h}_{k}(\v{x})\|_2^2$ correspond to functions with higher frequency components.
Consequently, when training with GNIs the layer to layer function $\v{h}_{k}(\v{h}_{k-1}(\v{x}))$ will have higher frequency components than the next layer's  function $\v{h}_{k+1}(\v{h}_{k}(\v{x}))$.

\begin{figure}[t!]

    \centering
    \subfigure[][Baseline  ]{\includegraphics[width=0.245\textwidth]{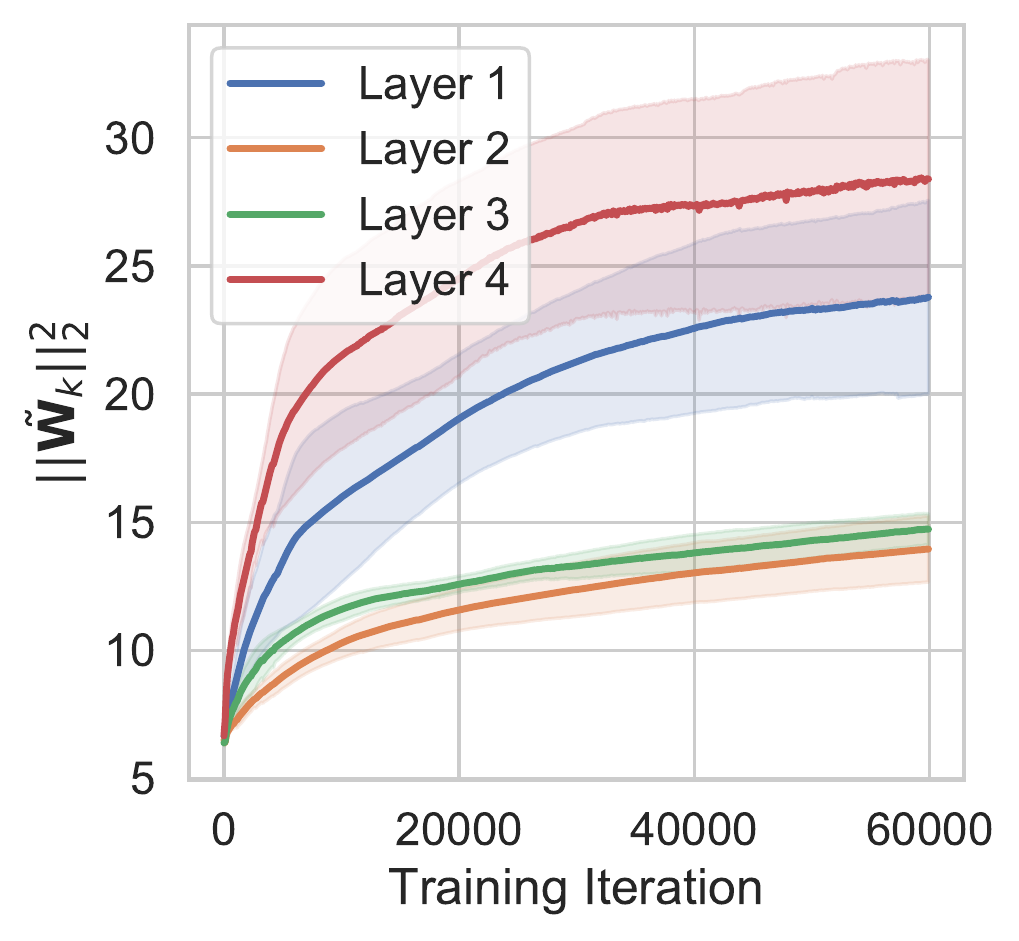}} 
    \subfigure[][GNI  ]{\includegraphics[width=0.245\textwidth]{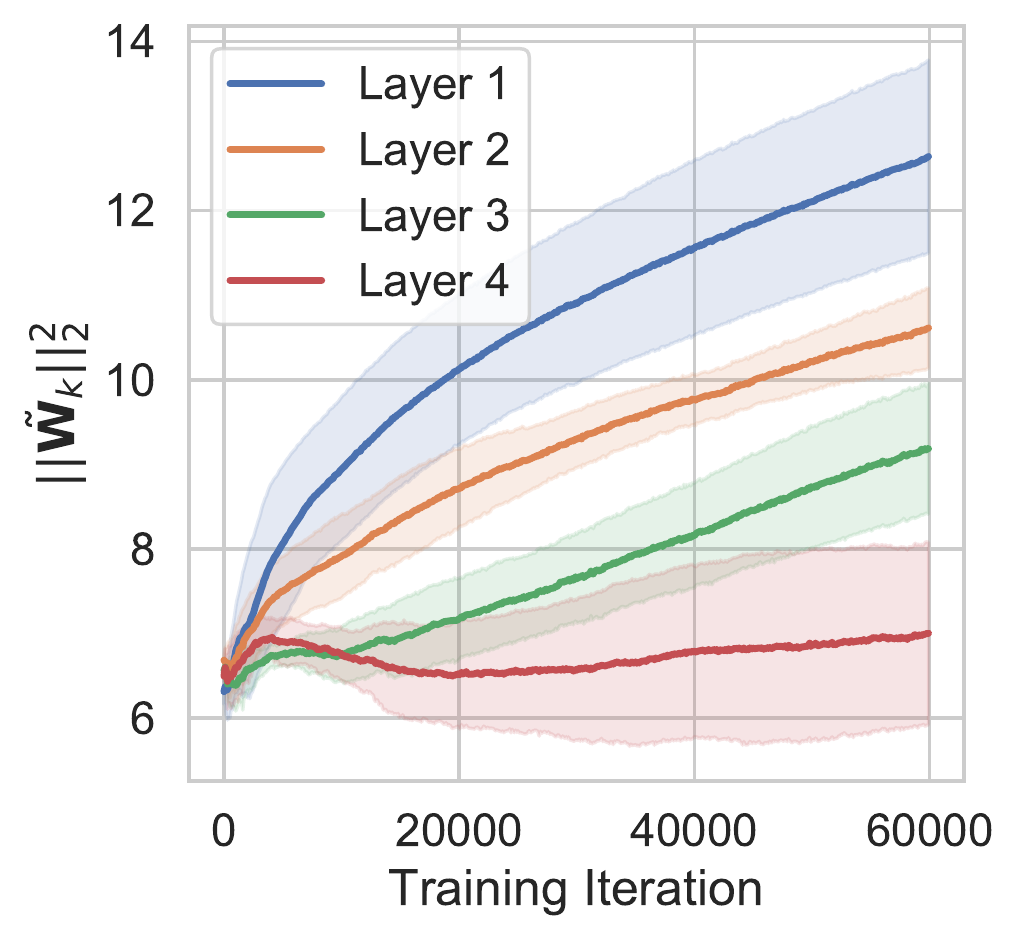}}
    \subfigure[][Baseline  ]{\includegraphics[width=0.245\textwidth]{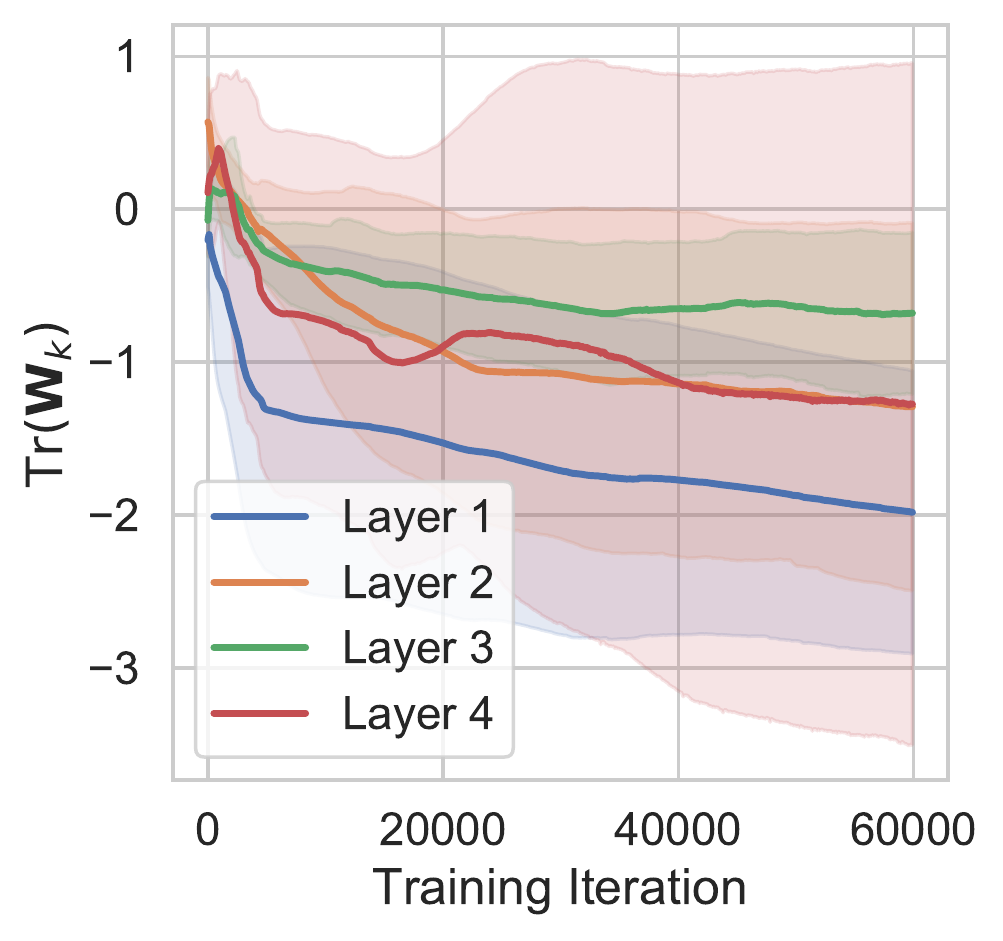}} 
    \subfigure[][GNI  ]{\includegraphics[width=0.245\textwidth]{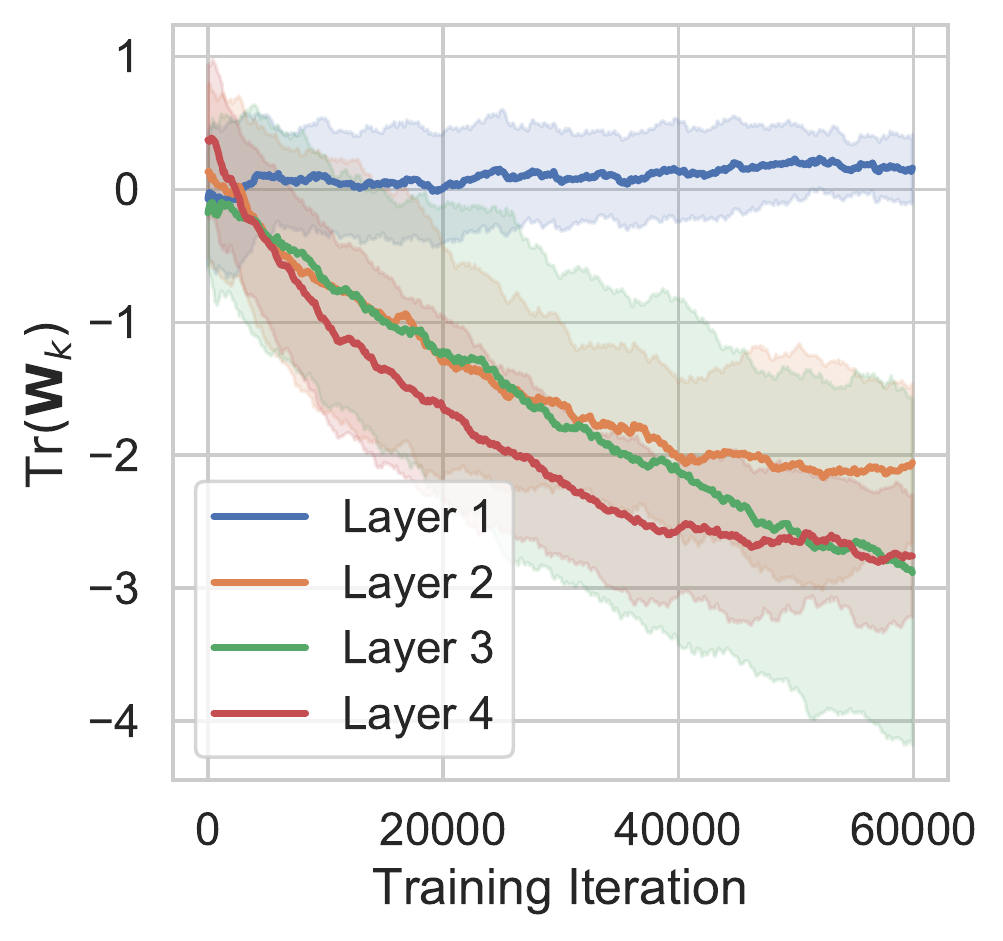}} 

    \caption{We use 6-layer deep 256-unit wide ReLU networks on the same dataset as in Figure \ref{fig:network_fourier} trained with (GNI) and without GNI (Baseline). 
    In (a,b), for layers with square weight matrices, we plot 
    $\|\widetilde{\v{W}}_k\|_2^2$.
    In (c,d) we plot the trace of these layers' weight matrices $\mathrm{Tr}(\v{W}_k)$. 
    For GNI models, as the layer index $k$ increases, $\mathrm{Tr}(\v{W}_k)$ and  $\|\widetilde{\v{W}}_k\|_2^2$ decrease, indicating that each successive layer in these networks learns a function with lower frequency components than the past layer.  
    }
    \label{fig:per_layer_reg}
\end{figure}

We measure this layer-wise regularisation in $\mathrm{ReLU}$ networks, using $\nabla_{\v{h}_{k-1}}  \v{h}_{k}(\v{x}) = \widetilde{\v{W}}_k$.
$\widetilde{\v{W}}_k$ is obtained from the original weight matrix $\v{W}_k$ by setting its $i^{\mathrm{th}}$ column to zero whenever the neuron $i$ of the $k^{\mathrm{th}}$ layer is inactive.
Also note that the inputs of $\mathrm{ReLU}$ network hidden layers, which are the outputs of another $\mathrm{ReLU}$-layer, will be positive.
Negative weights are likely to `deactivate' a $\mathrm{ReLU}$-neuron, inducing smaller $\|\widetilde{\v{W}}_k\|_2^2$, and thus parameterising a lower frequency function.
We use the trace of a weight matrix as an indicator for the `number' of negative components.

In Figure \ref{fig:per_layer_reg} we demonstrate that $\|\widetilde{\v{W}}_k\|_2^2$ \textit{and} $\mathrm{Tr}(\v{W}_k)$  decrease as $k$ increases for $\mathrm{ReLU}$-networks trained with GNIs, indicating that each successive layer in these networks learns a function with lower frequency components than the past layer. 
This striation and ordering is clearly absent in the baselines trained without GNIs.

\paragraph{The Benefits of GNIs}

What does regularisation in the Fourier domain accomplish? 
The terms in $R$ are the traces of the Gauss-Newton decompositions of the second order derivatives of the loss. 
By penalising this we are more likely to land in wider (smoother) minima (see Figure \ref{fig:marg_approx}), which has been shown,  contentiously \citep{Dinh2017}, to induce networks with better generalisation properties \citep{Keskar2019OnMinima, Jastrzebski2017}. 
GNIs however, confer other benefits too.

\textit{Sensitivity.}  A model's weakness to input perturbations is termed the \textit{sensitivity}. 
\citet{Rahaman2019} have shown empirically that classifiers biased towards lower frequencies in the Fourier domain are less sensitive,
and there is ample evidence demonstrating that models trained with noised data are less sensitive \citep{Liu2018, Li2018}.
The Fourier domain - sensitivity connection can be established by studying the \textit{classification margins} of a model (see Appendix \ref{app:classification_margins}).

\textit{Calibration.}
Given a network's prediction $\hat{y}(\v{x})$ with confidence $\hat{p}(\v{x})$ for a point $\v{x}$, perfect calibration consists of being as likely to be correct as you are confident: $p(\hat{y}=y|\hat{p}=r)=r, \,\, \forall r\in[0,1]$ \citep{Dawid1982, Degroot1982TheForecasters}. 
In Appendix \ref{app:capacity} we show that models that are biased toward lower frequency spectra have lower `capacity measures', which measure model complexity and lower values of which have been shown empirically to induce better calibrated models \citep{Guo2017}. 
In Figure \ref{fig:calibration_app}
we show that this holds true for models trained with GNIs.

\section{Related Work}

Many variants of GNIs have been proposed to regularise neural networks.
\citet{Poole2014} extend this process and apply noise to all computational steps in a neural network layer. Not only is noise applied to the layer input it is applied to the layer output and to the pre-activation function logits.
The authors allude to explicit regularisation but only derive a result for a single layer auto-encoder with a single noise injection.
Similarly, \citet{Bishop1995} derive an analytic form for the explicit regulariser induced by noise injections on \textit{data} and show that such injections are equivalent to Tikhonov regularisation in an unspecified function space. 

Recently \citet{Wei2020} conducted similar analysis to ours, dividing the effects of Bernoulli dropout into \textit{explicit} and \textit{implicit} effects. 
Their work is built on that of \citet{Mele1993}, \citet{Helmbold2015}, and \citet{wager2013dropout} who perform this analysis for linear neural networks. 
\citet{Arora2020} derive an explicit regulariser for Bernoulli dropout on the final layer of a neural network.
Further, recent work by \citet{dieng2018noisin} shows that noise additions on recurrent network hidden states outperform Bernoulli dropout in terms of performance and bias.

\section{Conclusion}

In this work, we derived analytic forms for the explicit regularisation induced by Gaussian noise injections, demonstrating
that the explicit regulariser penalises networks with high-frequency content in Fourier space.
Further we show that this regularisation is not distributed evenly within a network, as it disproportionately penalises high-frequency content in layers closer to the network output.
Finally we demonstrate that this regularisation in the Fourier domain has a number of beneficial effects. 
It induces training dynamics that preferentially land in wider minima, it reduces model sensitivity to noise, and induces better calibration.

\newpage

\section*{Acknowledgments}

This research was directly funded by the Alan Turing Institute under Engineering and Physical Sciences Research Council (EPSRC) grant EP/N510129/1.
AC was supported by an EPSRC Studentship.
MW was supported by EPSRC grant EP/G03706X/1.
U\c{S} was supported by the French National Research Agency (ANR) as a part of the FBIMATRIX (ANR-16-CE23-0014) project.
SR gratefully acknowledges support from the UK Royal Academy of Engineering and the Oxford-Man Institute.
CH was supported by the Medical Research Council, the Engineering and Physical Sciences Research Council, Health Data Research UK, and the Li Ka Shing Foundation

\section*{Impact Statement}

This paper uncovers a new mechanism by which a widely used regularisation method operates and paves the way for designing new regularisation methods which take advantage of our findings. 
Regularisation methods produce models that are not only less likely to overfit, but also have better calibrated predictions that are more robust to distribution shifts. 
As such improving our understanding of such methods is critical as machine learning models become increasingly ubiquitous and embedded in decision making.

\bibliographystyle{plainnat}
\bibliography{references.bib}

\newpage
\appendix
\clearpage
\setcounter{equation}{0}
\setcounter{theorem}{0}
\setcounter{figure}{0}
\setcounter{prop}{0}
\renewcommand\thefigure{\thesection.\arabic{figure}}
\setcounter{table}{0}
\renewcommand{\thetable}{\thesection.\arabic{table}}
\setcounter{definition}{0}
\renewcommand{\thedefinition}{\thesection.\arabic{definition}}

\clearpage

\section{Technical Proofs}
\subsection{Proof of Proposition~\ref{prop:accum_noise}}
\label{app:accum_noise}

\begin{proof}[Proof of Proposition~\ref{prop:accum_noise}]
Recall that $\v{h}$ denotes the vanilla activations of the network, those we obtain with no noise injection.
Let us \textit{not} inject noise in the final, predictive, layer of our network such that the noise on this layer is accumulated from the noising of previous layers.

We denote $\v{\mathcal{E}}_{k}$ the noise accumulated at layer $k$ from GNIs in previous layers, and potential GNIs at layer $k$ itself. 
We denote $\mathcal{E}_{L,i}$ the $i^{\mathrm{th}}$ element of the noise at layer $L$, the layer to which we do not add noise. 
This can be defined as a Taylor expansion around the accumulated noise at the previous layer $L-1$:
 \begin{equation}
    \mathcal{E}_{L,i} =
    \sum_{|\alpha_L|=1}^{\infty}
    \frac{1}{\alpha_L!}
     \left(D^{\alpha_L} h_{L,i}(\v{h}_{L-1}(\v{x}))\right) \v{\mathcal{E}}_{L-1}^{\alpha_L}
 \end{equation}

 where we use $\alpha_L$ as a multi-index over derivatives.
 
Generally if we noise all layers up to the penultimate layer of index $L-1$ we can define the accumulated noise at layer $k$, $\v{\mathcal{E}}_k$ recursively because Gaussian have finite moments: 
 \begin{equation}
     \mathcal{E}_{k,i} = \epsilon_{k,i} +  \sum_{|\alpha_k|=1}^{\infty}
     \frac{1}{\alpha_k!}
    \left(D^{\alpha_k} h_{k,i}(\v{h}_{k-1}(\v{x}))\right) 
    \v{\mathcal{E}}_{k-1}^{\alpha_k} , \ i = 1, \dots, d_k,\ k = 0 \dots L-1 
 \end{equation}
 
 where $\v{\mathcal{E}}_0 = \v{\epsilon}_0$ is the base case.

\end{proof} 

\subsection{Proof of Theorem 1}
\label{app:exp_reg}

\begin{proof}[Proof of Theorem\ref{prop:explicit_reg}]

Let us first consider the Taylor series expansion of the loss function with the accumulated noise defined in Proposition~\ref{prop:accum_noise}. 
Denoting $\v{\epsilon} = [\v{\epsilon}_{L-1}, \dots, \v{\epsilon}_{0}]$ we have: 
 \begin{align}
    &\expect_{\v{\epsilon} } \left[\mathcal{L}(\v{h}_{L}(\v{x}) + \v{\mathcal{E}}_L,  \v{y})\right] = \mathcal{L}(\v{x}, \v{y})  +  \expect_{\v{\epsilon} } \left[\sum_{|\alpha|=1}^{\infty}
    \frac{1}{\alpha!}
    \left(D^{\alpha}\mathcal{L}(\v{h}_{L}(\v{x}), \v{y}) \right)
    \v{\mathcal{E}}_{L}^{\alpha}\right] 
\end{align}
Note that the dot product with the $i^{\mathrm{th}}$ element of the final layer noise $\mathcal{E}_{L,i}$ can be written as 
 \begin{align}
    &\expect_{\v{\epsilon} } \left[\sum_{|\alpha|=1}^{\infty}
    \frac{1}{\alpha!}
    \left(D^{\alpha}\mathcal{L}(\v{h}_{L}(\v{x}), \v{y}) \right)
    \mathcal{E}_{L,i}\right] \nonumber\\
    &= \expect_{\v{\epsilon} } \left[\sum_{|\alpha|=1}^{\infty}
    \frac{1}{\alpha!}
    \left(D^{\alpha}\mathcal{L}(\v{h}_{L}(\v{x}), \v{y}) \right)
   \left(\sum_{|\alpha_L|=1}^{\infty}
    \frac{1}{\alpha_L!}
     \left(D^{\alpha_L} h_{L,i}(\v{h}_{L-1}(\v{x}))\right) \v{\mathcal{E}}_{L-1}^{\alpha_L}\right)\right] \\
     &= \expect_{\v{\epsilon} } \left[\sum_{|\alpha|=1}^{\infty}
    \frac{1}{\alpha!}
    \left(D^{\alpha}\mathcal{L}(\v{h}_{L}(\v{x}), \v{y}) \right)
   \left(\sum_{|\alpha_L|=1}^{\infty}
    \frac{1}{\alpha_L!}
     \left(D^{\alpha_L} h_{L,i}(\v{h}_{L-1}(\v{x}))\right) (\v{\epsilon}_{L-1} + \dots)^{\alpha_L}\right)\right] 
\end{align}
where the dots here denote the accumulated noise term on layer $L-1$ \textit{before} we add the Gaussian noise $\v{\epsilon}_{L-1}$. 
When looking at all elements of $\v{\mathcal{E}}_{L}^{\alpha}$, not just the $i^\mathrm{th}$ element, note that this is essentially the Taylor series expansion of $\mathcal{L}$ around the series expansion of $\v{h}_L$ around $\v{\epsilon}_{L-1}$. 
We know that the product of the Taylor series of a composed function $f \circ g$ with the Taylor series of $g$ is simply the Taylor series of $f$ around $\v{x}$ \citep{constantine}.
This can be deduced from the slightly opaque Faà di Bruno's formula, which states that for multivariate derivatives of a composition of functions $f:\mathbb{R}^m \to \mathbb{R}$ and $g: \mathbb{R}^d \to \mathbb{R}^m$ and a multi-index $\alpha$ \citep{constantine}
\[
D^{\alpha} (f \circ g)(\v{x}) = \sum_{1 \leq |\lambda| \leq |\alpha|} D^{\lambda}f(g(\v{x})) \sum_{s=1}^{|\alpha|} \sum_{p_s(\lambda,\alpha)} (\alpha!) \prod_{j=1}^{s} \frac{(D^{l_j}g(\v{x}))^{k_j}}{(k_j!)[l_j!]^{|k_j|}}, 
\]
where $p_s(\lambda,\alpha) = \{(k_1,\dots,k_s);(l_1,\dots,l_s): |k_i| > 0, \ 0 \prec l_1 \dots \prec l_s, \ \sum^s_{i=1}k_i=\lambda \sum^s_{i=1}|k_i|l_i=\alpha$ , where $\prec$ denotes a partial order.

Applying this recursively to each layer $k$, we obtain that,
\begin{align}
    &\expect_{\v{\epsilon} } \left[\sum_{|\alpha|=1}^{\infty}
    \frac{1}{\alpha!}
    \left(D^{\alpha}\mathcal{L}(\v{h}_{L}(\v{x}), \v{y}) \right)
    \v{\mathcal{E}}_{L}^{\alpha}\right] \nonumber \\
     &=  \expect_{\v{\epsilon} } \left[
    \sum_{k=0}^{L-1}\left[
    \sum_{|\alpha|=1}^{\infty}
    \frac{1}{\alpha_k!}
   \left(D^{\alpha_k} \mathcal{L}(\v{h}_{k}(\v{x}), \v{y}) \right)  \v{\epsilon}^{\alpha_k}_k\right] + \mathcal{C}((\v{x}, \v{y});\v{\epsilon})\right]
\end{align}

Here $\mathcal{C}(\v{\epsilon}, \v{x}, \v{y})$ represents cross-interactions between the noise at each layer $k$ $\v{\epsilon}_k$ and the noise injections at preceding layers with index less than $k$. 
We can further simplify the added term to the loss, 
 \begin{align}
    &\expect_{\v{\epsilon} } \left[
    \sum_{k=0}^{L-1}\left[
    \sum_{|\alpha|=1}^{\infty}
    \frac{1}{\alpha_k!}
   \left(D^{\alpha_k} \mathcal{L}(\v{h}_{k}(\v{x}), \v{y}) \right)  \v{\epsilon}^{\alpha_k}_k\right] + \mathcal{C}((\v{x}, \v{y});\v{\epsilon})\right]  \nonumber \\
   &=
    \sum_{k=0}^{L-1}\left[
    \sum_{|\alpha_k|=1}^{\infty}
    \frac{1}{2\alpha_k!}
   \left(D^{2\alpha_k} \mathcal{L}(\v{h}_{k}(\v{x}), \v{y}) \right)  \expect_{\v{\epsilon} } \left[\v{\epsilon}^{2\alpha_k}_k\right]\right] +  \expect_{\v{\epsilon} } \left[\mathcal{C}((\v{x}, \v{y});\v{\epsilon})\right] 
\end{align}

The second equality comes from the fact that odd-numbered moments of $\v{\epsilon}_k$, will be 0 and that $\mathcal{L}(\v{x}, \v{y}) = \mathcal{L}(\v{h}_{L}(\v{x}), \v{y})$. 
The final equality comes from the moments of a mean 0 Gaussian, where $j$ takes the values of the multi-index. 
Note that $\left[\v{\epsilon}^{2\alpha_k}_k\right]$ are the even numbered moments of a zero mean Gaussian, \[\expect\left[\v{\epsilon}^{2\alpha_k}_k\right] =  [\sigma^{2\alpha_{k,1}}_k(2\alpha_{k,1} -1)!,\dots ,\sigma^{2\alpha_{k,d_k}}_k(2\alpha_{k,d_k} -1)!]^\intercal
\]

Though these equalities can already offer insight into the regularising mechanisms of GNIs, they are not easy to work with and will often be computationally intractable. 
We focus on the first set of terms here where each $|\alpha_k|=1$, which we denote $R(\v{x}, \v{\theta})$
\begin{align}
    R(\v{x}, \theta) = &\sum_{k=0}^{L-1}\left[
    \sum_{|\alpha_k|=1}
    \frac{1}{2\alpha_k!}
   \left(D^{2\alpha_k} \mathcal{L}(\v{h}_{k}(\v{x}), \v{y}) \right) \expect_{\v{\epsilon} } \left[\v{\epsilon}^{2\alpha_k}_k\right] \right] \nonumber \\ 
   &\approx \sum_{k=0}^{L-1}\left[
    \sum_{|\alpha_k|=1}
  \frac{\sigma_k^{2}}{2}
   \left(D^{2\alpha_k} \mathcal{L}(\v{h}_{L}(\v{x}), \v{y})\right)\v{J}^{2\alpha_k}_{k}(\v{x})  \right]
\end{align}

The last approximation corresponds to the Gauss-Newton approximation of second-order derivatives of composed functions where we've discarded the second set of terms of the form $D \mathcal{L}(\v{h}_{L})(\v{x})\left(D^2 \v{h}_{L}(\v{h}_{k}(\v{x}))\right)$.
We will include these terms in our remainder term $\mathcal{C}$. 
For compactness of notation, we denote each layer's Jacobian as $\v{J}_k \in \mathbb{R}^{d_L \times d_k}$.
Each entry of $\v{J}_{k}$ is a partial derivative  of $f^{\theta}_{k,i}$, the function from layer $k$ to the $i^{\mathrm{th}}$ network output, $i = 1 ... d_L$. 
\begin{align*}
    \v{J}_{k}(\v{x}) = \begin{bmatrix} 
    \frac{f^{\theta}_{k,1}}{\partial h_{k,1}} & \frac{f^{\theta}_{k,1}}{\partial h_{k,2}} & \dots \\
    \vdots & \ddots & \\
    \frac{f^{\theta}_{k,d_L}}{\partial h_{k,1}} &        & \frac{f^{\theta}_{k,d_L}}{\partial h_{k,d_k}} 
    \end{bmatrix} , 
\end{align*}
Again, for simplicity of notation $\v{J}^{\alpha_k}_k$ selects the column indexed by $|\alpha_k|=1$. 
Also note that the sum over $|\alpha_k|=1$ effectively indexes over the diagonal of the Hessian of the Loss with respect to the $L^{\mathrm{th}}$ layer activations. 
We denote this Hessian as $\v{H}_{L}(\v{x}, \v{y})\in \mathbb{R}^{d_L
\times d_L}$. 
\begin{align*}
    \v{H}_{L}(\v{x}, \v{y}) = \begin{bmatrix} 
    \frac{\partial^2 \mathcal{L}}{\partial h^2_{L,1}} & \frac{\partial^2 \mathcal{L}}{\partial h_{L,1}\partial h_{L,2}} & \dots \\
    \vdots & \ddots & \\
    \frac{\partial^2 \mathcal{L}}{\partial h_{L,d_L}\partial h_{L,1}} &        & \frac{\partial^2 \mathcal{L}}{\partial h^2_{L,d_L}} 
    \end{bmatrix} 
\end{align*}

This gives us that 
\[
R(\v{x}; \v{\theta}) = \frac{1}{2}\sum_{k=0}^{L-1}\left[\sigma_k^2\mathrm{Tr}\left(\v{J}^\intercal_{k}(\v{x})
    \v{H}_{L}(\v{x}, \v{y})\v{J}_{k}(\v{x}) \right)\right] 
\]

For notational simplicity we include the terms that $R$ does not capture into the remainder $\expect_{\v{\epsilon} } \left[\mathcal{C}((\v{x}, \v{y});\v{\epsilon})\right]$.  We take expectations over the batch and have: 
 \begin{align}
    &\expect_{(\v{x},\v{y}) \sim \mathcal{B} } \left[\expect_{\v{\epsilon} } \left[\mathcal{L}(\v{h}_{L}(\v{x}) + \v{\mathcal{E}}_L, \v{y})\right] \right]
   =  \mathcal{L}(\mathcal{B}; \v{\theta}) + R(\mathcal{B}; \v{\theta}) +  \expect_{\v{\epsilon} } \left[\mathcal{C}(\mathcal{B}; \v{\epsilon})\right] \\
    R(\mathcal{B}; \v{\theta}) &= \mathbb{E}_{(\v{x},\v{y}) \sim \mathcal{B}} \left[\frac{1}{2}\sum_{k=0}^{L-1}\left[\sigma_k^2\mathrm{Tr}\left(\v{J}^\intercal_{k}(\v{x})
    \v{H}_{L}(\v{x}, \v{y})\v{J}_{k}(\v{x})\right)\right] \right]   \\
    &\expect_{\v{\epsilon} } \left[\mathcal{C}(\mathcal{B}; \v{\epsilon})\right]  = 
    \mathbb{E}_{(\v{x},\v{y}) \sim \mathcal{B}} \left[
   \expect_{\v{\epsilon} } \left[\sum_{|\alpha|=1}^{\infty}
    \frac{1}{\alpha!}
    \left(D^{\alpha}\mathcal{L}(\v{h}_{L}(\v{x}), \v{y}) \right)
    \v{\mathcal{E}}_{L}^{\alpha}\right]\right] - R(\mathcal{B}; \v{\theta})
\end{align}

This concludes the proof.

\end{proof}

\subsection{Proof of Theorem 2}
\label{app:plancherel_th}

\begin{proof}[Proof of Theorem~\ref{th:plancherel-measure-space}]

Because $f \in W^{1,2}_{\mu}(\mathbb{R}^d)$ we know that by definition, for $|\alpha|=1$: \[\|D^{\alpha} f\|^2_{L^2_{\mu}(\mathbb{R}^d)} = \int_{\mathbb{R}^d} |D^{\alpha} f(\v{x}) \cdot D^{\alpha} f(\v{x}) \cdot \mu(\v{x})| d\v{x} < \infty\] where $d\v{x}$ is the Lebesgue measure.
By Minkowski's inequality we know that: 
\[\int_{\mathbb{R}^d} |D^{\alpha} f(\v{x}) \cdot D^{\alpha} f(\v{x}) \cdot \mu(\v{x}) \cdot \mu(\v{x})| d\v{x} < \int_{\mathbb{R}^d} |\mu(\v{x})| d\v{x} \int_{\mathbb{R}^d} |D^{\alpha} f(\v{x}) \cdot D^{\alpha} f(\v{x}) \cdot \mu(\v{x})| d\v{x}\]

By definition $\mu$, a probability measure, is $L^1$ integrable.
As such: 
\[\int_{\mathbb{R}^d} |D^{\alpha} f(\v{x}) \cdot D^{\alpha} f(\v{x}) \cdot \mu(\v{x}) \cdot \mu(\v{x})| d\v{x} < \infty\]
Let $m(\v{x}) = D^{\alpha}f(\v{x}) \cdot \mu(\v{x})$, by the equation above, $m(\v{x}) \in L^2(\mathbb{R}^d)$.
As both $D^{\alpha} f(\v{x})$ (by assumption) and $m(\v{x})$ are $L^2$ integrable in $\mathbb{R}^d$, we can apply Fubini's Theorem and Plancherel's Theorem straighforwardly such that: 

\begin{equation}
    \sum_{|\alpha|=1}\|D^{\alpha} f\|^2_{L^2_{\mu}(\mathbb{R}^d)} = 
     \int_{\mathbb{R}^d} \sum^d_{j=1}\Bigr|i \v{\omega}_j\mathcal{F}(\v{\omega}) \cdot \overline{\mathcal{M}(\v{\omega}, j)}\Bigr| d\v{\v{\omega}} \nonumber  
\end{equation}

where $\mathcal{F}$ is the Fourier transform of $f$, $i^2=-1$, and $\v{\omega}_j \mathcal{F}(\v{\omega})$ is simply the Fourier transform of the derivative indexed by $\alpha$. 
$\mathcal{M}(\v{\omega}, j)$ is given by
\begin{equation}
    \mathcal{M}(\v{\omega}, j) = i \left(\v{\omega}_j\mathcal{F}(\v{\omega})\right)*\mathcal{P}(\v{\omega})
\end{equation}

where $\mathcal{P}$ is the Fourier transform of the probability measure $\mu$, $ \v{\omega}_j \mathcal{F}(\v{\omega})$ is as before, and * denotes the convolution operator. 
Substituting $\mathcal{G}(\v{\omega},j) = \v{\omega}_j \mathcal{F}(\v{\omega})$
we obtain: 
\begin{align*}
     \sum_{|\alpha|=1}\|D^{\alpha} f\|^2_{L^2_{\mu}(\mathbb{R}^d)} &=  \int_{\mathbb{R}^d} \sum^d_{j=1} \Bigr|(i\overline{i}) \mathcal{G}(\v{\omega},j)\overline{ \mathcal{G}(\v{\omega},j) * \mathcal{P}(\v{\omega})}\Bigr|  d\v{\omega} \nonumber \\
    &= \int_{\mathbb{R}^d} \sum^d_{j=1} \Bigr|\mathcal{G}(\v{\omega},j)\overline{ \mathcal{G}(\v{\omega},j) * \mathcal{P}(\v{\omega})} \Bigr|  d\v{\omega} 
\end{align*}

This concludes the proof.

\end{proof}

\subsection{Regularisation in Regression Models and Autoencoders}
\label{app:exp_reg_regression}

In the case of regression the most commonly used loss is the mean-square error.
\begin{equation}
\mathcal{L}(\v{x}, \v{y}) = \frac{1}{2}(\v{y} - \v{h}_{L}(\v{x}))^2 \nonumber
\end{equation}

In this case, $\v{H}_{L, n}$ is $\v{I}$. 
As such:
\begin{align}
R(\mathcal{B}; \v{\theta}) &= \frac{1}{2}\mathbb{E}_{\v{x} \sim \mathcal{B}} \left[\sum_{k=0}^L \sigma_k^2(\mathrm{Tr}(\v{J}_{k}(\v{x})^\intercal\v{J}_{k}(\v{x})))]\right] =\frac{1}{2} \mathbb{E}_{\v{x} \sim \mathcal{B}} \left[ \sum_{k=0}^{L-1} \sigma^2_k (\|\v{J}_{k}(\v{x})\|^2_F) \right] \nonumber
\end{align}

This added term corresponds to the trace of the covariance matrix of the outputs $\v{h}_L$ given an input $\v{h}_{k}$. 
As such we are penalising the sum of output variances of the approximator; we are penalising the sensitivity of outputs to perturbations in layer $k$ \citep{Webb1994, Bishop1995}.

For $\mathrm{ReLU}$-like activations ($\mathrm{ELU}$, $\mathrm{Softplus}$ ...) , because our functions are at \textit{most} linear, we can bound our regularisers using the Jacobian of an equivalent linear network: 
\begin{equation}
\sum_{k=0}^L \sigma^2_k(\|\v{J}_{k}(\v{x})\|^2) < \sum_{k=0}^L \sigma^2_k(\|\v{J}^{\mathrm{linear}}_{k}(\v{x})\|^2) = \sum_{k=0}^L\sigma^2_k(\|\v{W}_L \dots \v{W}_k\|^2)
\nonumber
\end{equation}

Where $\v{J}^{\mathrm{linear}}_{k}(\v{x})$ is the gradient evaluated with no non-linearities in our network. 
This upper bound is reminiscent of $rank-k$ ridge regression, but here we penalise each sub-network in our network \citep{Kunin1999}. 
Also note that the regression setting is directly translatable to Auto-Encoders, where the labels are the input data.

\subsection{Regularisation in Classifiers}
\label{app:exp_reg_classification}

In the case of classification, we consider the cross-entropy loss.
Recall that we consider our network outputs $\v{h}_L$ to be the pre-$\mathrm{softmax}$ of logits of the final layer $\v{L}$.
We denote $\v{p}(\v{x})=\mathrm{softmax}(\v{h}_L(\v{x}))$. 
The loss is thus: 
\begin{equation}
\mathcal{L}(\v{x}, \v{y}) = - \sum_{c=0}^M \v{y}_{n,c} \log (\mathrm{softmax}(\v{h}_{L}(\v{x}))_c)
\label{eqapp:ce}
\end{equation}

where $c$ indexes over the $M$ possible classes of the classification problem. 
The hessian $\v{H}_L$ in this case is easy to compute and has the form: 

\begin{equation}
\v{H}_{L}(\v{x})_{i,j} = 
\begin{cases}
\v{p}(\v{x})_i(1 - \v{p}(\v{x})_j) & i = j \\
-\v{p}(\v{x})_i\v{p}(\v{x})_j & i \neq j \\
\end{cases}
\end{equation}

As \citet{Wei2020}, \citet{Sagun2018}, and \cite{LeCun1998} show, this Hessian is PSD, meaning that $\mathrm{Tr}(\v{J}_{k}\v{H}_{L} \v{J}^\intercal_{k})$ will be positive, fulfilling the criteria for a valid regulariser. 
\begin{align}
&R(\mathcal{B}; \v{\theta}) = \mathbb{E}_{\v{x} \sim \mathcal{B}} \left[\frac{1}{2}\sum_{k=0}^L \sigma_k^2 \sum_{i,j}(  \v{H}_{L}(\v{x}) \circ \v{J}_{k}(\v{x})\v{\v{J}}^\intercal_{k}(\v{x}))_{i,j}\right] \nonumber\\
&= \mathbb{E}_{\v{x} \sim \mathcal{B}} \left[\frac{1}{2}\sum_{k=0}^L \sigma_k^2 \sum_{i,j}(\mathrm{diag}(\v{H}_{L}(\v{x}))^\intercal\v{J}^2_{k}(\v{x}) )_{i,j} + \frac{1}{2}\sum_{k=0}^L \sigma_k^2 \sum_{\forall i,j \ i \neq j}(\v{H}_{L}(\v{x}) \circ \v{J}_{k}(\v{x})\v{\v{J}}^\intercal_{k}(\v{x}))_{i,j}\right] \nonumber
\label{eqapp:ce_reg}
\end{align}

$\mathrm{diag}( \v{H}_{L}(\v{x}))^\intercal$ is the row vector of the diagonal of $\v{H}_{L}(\v{x})$.
The first  equality is due to the fact that $\v{H}_L$ is symmetric and is due to the commutative properties of the trace operator.
The final equality is simply the decomposition of the sum of the matrix product into diagonal and off-diagonal elements. 
For shallow networks, the off-diagonal elements of $\v{J}_k\v{J}^\intercal_k$ are likely to be small and it can be approximated by $\v{J}^2_k$ \citep{Poole2016, Hauser2017, Farquhar2020, Aleksziev}.
See Figure \ref{fig:covariance_examples} for a demonstration that the off-diagonal elements of $\v{J}_k^\intercal\v{J}_k$, are negligible for smaller networks.
Ignoring these off-diagonal terms, we obtain an added positive term: 
\begin{equation}
    R(\mathcal{B}; \v{\theta}) \approx \mathbb{E}_{\v{x} \sim \mathcal{B}} \left[\frac{1}{2}\sum_{k=0}^L \sigma_k^2 \sum_{i,j}(\mathrm{diag}(\v{H}_{L}(\v{x}))^\intercal\v{J}^2_{k}(\v{x}) )_{i,j} \right]
\label{eqapp:ce_reg_term}
\end{equation}

For $\mathrm{ReLU}$-like activations ($\mathrm{ELU}$, $\mathrm{Softplus}$ ...), because our functions are at \textit{most} linear, we can bound our regularisers using the Jacobian of an equivalent linear network: 
\begin{equation}
\sum_{k=0}^L \sigma_k^2 \sum_{i,j}(\mathrm{diag}( \v{H}_{L}(\v{x}))^\intercal\v{J}_{k}(\v{x})^2 )_{i,j} <   \sum_{k=0}^L\sigma^2_k\sum_{i,j}(\mathrm{diag}( \v{H}_{L}(\v{x}))^\intercal(\v{W}_L \dots \v{W}_k)^2 ))_{i,j}
\label{eqapp:ce_reg_bound}
\end{equation}

\begin{figure}[h]
    \centering
    \subfigure[][SVHN MLP,  $k$=0]{\includegraphics[width=0.25\textwidth]{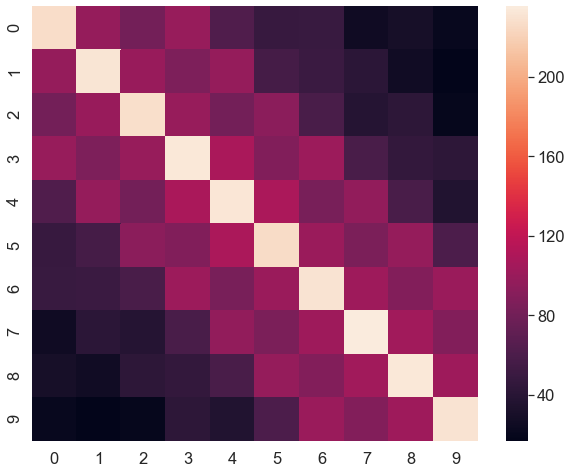}}
    \subfigure[][SVHN MLP,  $k$=1]{\includegraphics[width=0.25\textwidth]{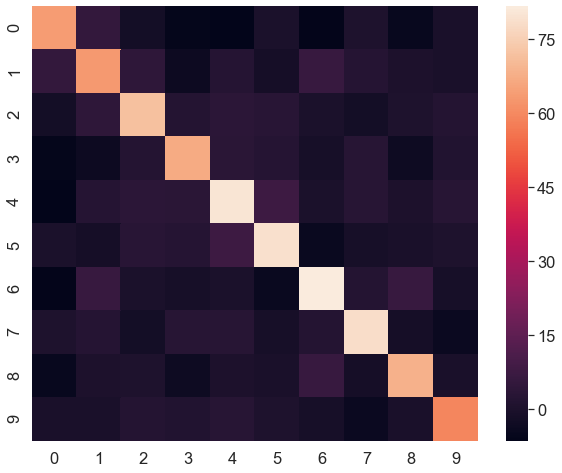}} 
    \subfigure[][SVHN MLP,  $k$=2]{\includegraphics[width=0.25\textwidth]{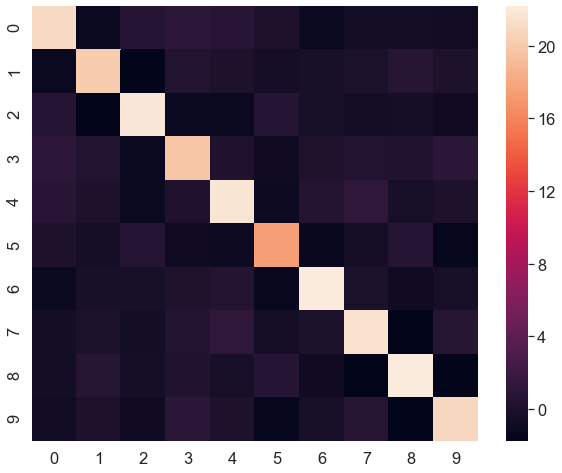}} \\
    \subfigure[][CIFAR10 CONV,  $k$=0]{\includegraphics[width=0.25\textwidth]{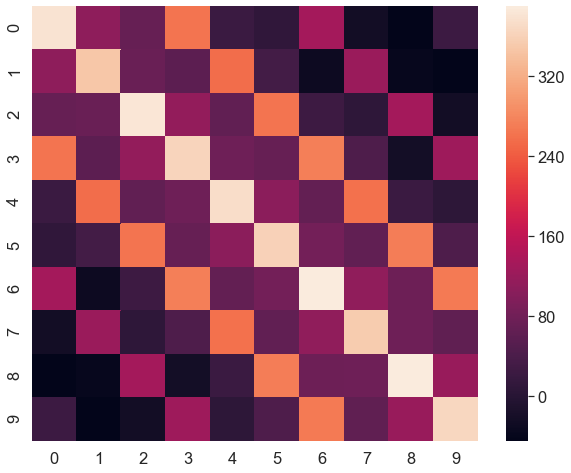}}
    \subfigure[][CIFAR10 CONV,  $k$=1]{\includegraphics[width=0.25\textwidth]{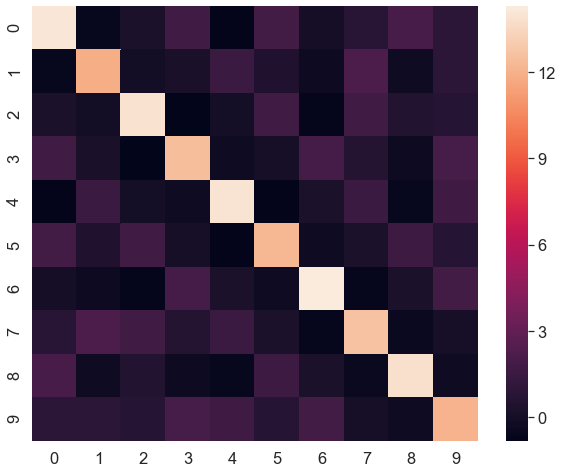}} 
    \subfigure[][CIFAR10 CONV,  $k$=2]{\includegraphics[width=0.25\textwidth]{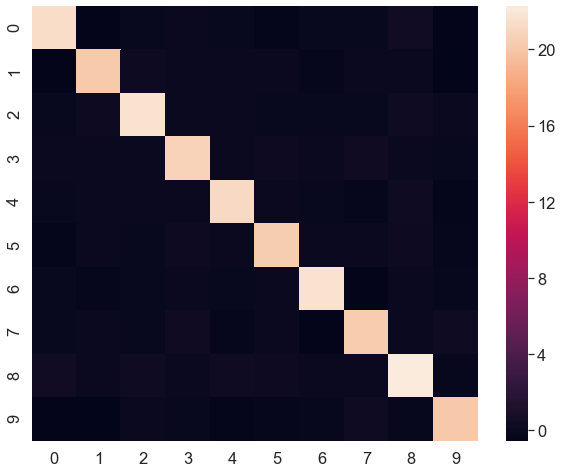}}
    
    \caption{Samples of heatmaps of 10 by 10 matrices $\v{J}_k^\intercal\v{J}_k$ ($k$ indexing over layers) for 2-layer MLPs and convolutional networks (CONV) trained to convergence (with no regularisation) on the SVHN and CIFAR10 classification datasets, each with 10 classes. 
    We can clearly see that the diagonal elements of these matrices dominate in all examples, though less so for the data layer. 
    }
    \label{fig:covariance_examples}
\end{figure}

\clearpage

\section{Tikhonov Regularisation}
\label{app:tikhonov}

Note that because we are penalising the terms of the Sobolev norm associated with the first order derivatives, this constitutes a form of Tikhonov regularisation. 
Tikhonov regularisation involves adding some regulariser to the loss function, which encodes a notion of `smoothness' of a function $f$ \citep{Bishop1995}. 
As such, by design, regularisers of this form have been shown to have beneficial regularisation properties when used in the training objective of neural networks by smoothing the loss landscape \citep{Girosi1990, Burger2003}. 
If we have a loss of the form $\mathcal{L}(\mathcal{B}; \v{\theta})$, the Tikhonov regularised loss becomes:
\begin{equation}
    \mathcal{L}(\mathcal{B}; \v{\theta}) + \lambda\|f^{\theta}\|^2_{\mathcal{H}}
\end{equation}
where $f^{\theta}$ is the function with parameters $\theta$ which we are learning and $\|\cdot\|_{\mathcal{H}}$ is the norm or semi-norm in the Hilbert space $\mathcal{H}$ and $\lambda$ is a (multidimensional) penalty which penalises elements of $\|f^{\theta}\|^2_{\mathcal{H}}$ unequally, or is data-dependent \citep{Tikhonov1977, Bishop1995}. 
In our case $\mathcal{H}$ is the Hilbert-Sobolev space $W^{1,2}_{\mu}(\mathbb{R}^d)$ with norm dictated by Equation \eqref{eq:weighted_sobolev_norm}. 
$R(\cdot)$ penalises the function's semi-norm in this space.  

\section{Measuring Calibration}
\label{app:calibration}

A neural network classifier gives a prediction $\hat{y}(\v{x})$ with confidence $\hat{p}(\v{x})$ (the probability attributed to that prediction) for a datapoint $\v{x}$.
Perfect calibration consists of being as likely to be correct as you are confident:
\begin{equation}
        p(\hat{y}=y|\hat{p}=r)=r, \quad \forall r\in[0,1]
\end{equation}
To see how closely a model approaches perfect calibration, we plot reliability diagrams \citep{Guo2017, Niculescu-Mizil2005}, which show the accuracy of a model as a function of its confidence over $M$ bins $B_m$.

\begin{align}
\mathrm{acc}(B_m) &=\frac{1}{|B_m|}\sum_{i\in B_m}\mathbf{1}(\hat{y}_i = y_i) \\
\mathrm{conf}(B_m) &=\frac{1}{|B_m|}\sum_{i\in B_m}\hat{p}_i
\end{align}

We also calculate the Expected Calibration Error (ECE) \cite{Naeini2015}, the mean difference between the confidence and accuracy over bins:
\begin{equation}
        \mathrm{ECE} = \sum_{m=1}^M \frac{|B_m|}{N}|\mathrm{acc}(B_m) - \mathrm{conf}(B_m)|
        \label{eqapp:ece}
\end{equation}
However, note that ECE only measures calibration, not refinement.
For example, if we have a balanced test set one can trivially obtain ECE $\approx 0$ by sampling predictions from a uniform distribution over classes while having very low accuracy.

\section{Classification Margins} 
\label{app:classification_margins}
Typically, models with larger classification margins are less sensitive to input perturbations \citep{Sokolic2017, Jakubovitz2018, Cohen2019, Liu2018, Li2018}. 
Such margins are the distance in data-space between a point $\v{x}$ and a classifier's decision boundary.
Larger margins mean that a classifier associates a larger region centered on a point $\v{x}$ to the same class. 
Intuitively this means that noise added to $\v{x}$ is still likely to fall within this region, leaving the classifier prediction unchanged.
\citet{Sokolic2017} and \citet{Jakubovitz2018} define a classification margin $M$ that is the radius of the largest metric ball centered on a point $\v{x}$ to which a classifier assigns $\v{y}$, the true label.

\begin{prop}[\citet{Jakubovitz2018}]
Consider a classifier that outputs a correct prediction for the true class $A$ associated with a point $\v{x}$. 
Then the first order  approximation  for  the l2-norm of the classification margin $M$, which is the  minimal  perturbation  necessary to fool a classifier, is lower bounded by:
\begin{align}
M(\v{x}) \geq \frac{(\v{h}^A_{L}(\v{x}) - \v{h}^B_{L}(\v{x}) )}{\sqrt{2} \|\v{J}_0(\v{x}))|_F}. 
\label{eq:radius_robust_frob}
\end{align}
We have $\v{h}^A_{L}(\v{x}) \geq \v{h}^B_{L}(\v{x})$, where
$\v{h}^A_{L}(\v{x})$ is the $L^{th}$ layer activation (pre-softmax) associated with the true class $A$, and $\v{h}^B_{L}(\v{x})$ is the second largest $L^{th}$ layer activation.
\end{prop}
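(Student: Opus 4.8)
The plan is to reproduce the standard linearisation argument of \citet{Jakubovitz2018} and \citet{Sokolic2017}. First I would recall that the classifier predicts the class with the largest pre-softmax logit, so for the point $\v{x}$ whose true (and predicted) class is $A$, the relevant quantities are the pairwise score differences $g_{AC}(\v{x}) := \v{h}^A_L(\v{x}) - \v{h}^C_L(\v{x}) > 0$ for each competing class $C \neq A$. The decision boundary between $A$ and $C$ is the level set $\{\v{z} : g_{AC}(\v{z}) = 0\}$, and the margin $M(\v{x})$ is precisely the $\ell_2$ distance from $\v{x}$ to the nearest such boundary, i.e.\ the smallest $\|\v{\delta}\|_2$ for which $g_{AC}(\v{x}+\v{\delta}) \leq 0$ for some $C$.

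Next I would take the first-order approximation of each $g_{AC}$ about $\v{x}$, namely $g_{AC}(\v{x}+\v{\delta}) \approx g_{AC}(\v{x}) + \nabla_{\v{x}} g_{AC}(\v{x})^\intercal \v{\delta}$. Minimising $\|\v{\delta}\|_2$ subject to this linearised expression vanishing is the elementary point-to-hyperplane distance problem, whose solution is $\|\v{\delta}\|_2 = g_{AC}(\v{x}) / \|\nabla_{\v{x}} g_{AC}(\v{x})\|_2$. The margin is then the minimum over competitors, $M(\v{x}) = \min_{C \neq A} g_{AC}(\v{x}) / \|\nabla_{\v{x}} g_{AC}(\v{x})\|_2$. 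The key identification at this step is that $\nabla_{\v{x}} \v{h}^i_L(\v{x})$ is exactly the $i$-th row of the input-to-output Jacobian $\v{J}_0(\v{x})$, so $\nabla_{\v{x}} g_{AC}(\v{x})$ is a difference of two rows of $\v{J}_0(\v{x})$.

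Then I would bound the numerator and denominator separately to remove the dependence on $C$. For the numerator, since $B$ is by definition the runner-up logit we have $\v{h}^C_L(\v{x}) \leq \v{h}^B_L(\v{x})$ for every $C \neq A$, hence $g_{AC}(\v{x}) \geq \v{h}^A_L(\v{x}) - \v{h}^B_L(\v{x})$. For the denominator I would apply $\|\v{a} - \v{b}\|_2^2 \leq 2(\|\v{a}\|_2^2 + \|\v{b}\|_2^2)$ to the two relevant rows $\v{a}, \v{b}$ of $\v{J}_0(\v{x})$; because the squared norms of any two distinct rows sum to at most $\|\v{J}_0(\v{x})\|_F^2$, this gives $\|\nabla_{\v{x}} g_{AC}(\v{x})\|_2 \leq \sqrt{2}\,\|\v{J}_0(\v{x})\|_F$. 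Combining, every term in the minimum is at least $(\v{h}^A_L(\v{x}) - \v{h}^B_L(\v{x})) / (\sqrt{2}\,\|\v{J}_0(\v{x})\|_F)$, so the minimum is too, which is the claimed bound.

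The main obstacle is conceptual rather than algebraic: justifying the first-order approximation, which is exact only when the logit differences are affine and is otherwise a local linearisation whose accuracy degrades as $\|\v{\delta}\|_2$ grows. The factor $\sqrt{2}$ (rather than the naive $2$ from a bare triangle inequality) is the one genuinely quantitative step, and it rests on the quadratic inequality together with the observation that two distinct rows contribute at most the full Frobenius norm; I would double-check that $\v{J}_0$ is indeed the Jacobian of the logits with respect to the input, so that its rows are exactly the class-wise input gradients.
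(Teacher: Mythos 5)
Your argument is correct and is essentially the standard linearisation proof of \citet{Sokolic2017} and \citet{Jakubovitz2018}: point-to-hyperplane distance for each pairwise logit difference, the runner-up bound on the numerator, and $\|\v{a}-\v{b}\|_2^2 \leq 2(\|\v{a}\|_2^2+\|\v{b}\|_2^2) \leq 2\|\v{J}_0(\v{x})\|_F^2$ for the denominator, which is exactly where the $\sqrt{2}$ comes from. Note that the paper itself offers no proof of this proposition --- it is imported verbatim from \citet{Jakubovitz2018} --- so there is nothing internal to compare against; your reconstruction faithfully matches the cited source, and your closing caveats (the bound is only as good as the local linearisation, and $\v{J}_0$ must be the Jacobian of the pre-softmax logits with respect to the input so that its rows are the class-wise input gradients) are the right ones.
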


Networks that have lower-frequency spectrums  and consequently have smaller norms of Jacobians (as established in Section \ref{sec:Fourier}
), will have larger classification margins and will be less sensitive to perturbations.
This explains the empirical observations of  \citet{Rahaman2019} which showed that functions biased towards lower frequencies are more robust to input perturbations. 

\begin{figure}[t!]

    \centering
    \subfigure[][$\v{J}_0$ CIFAR  ]{\includegraphics[width=0.3\textwidth]{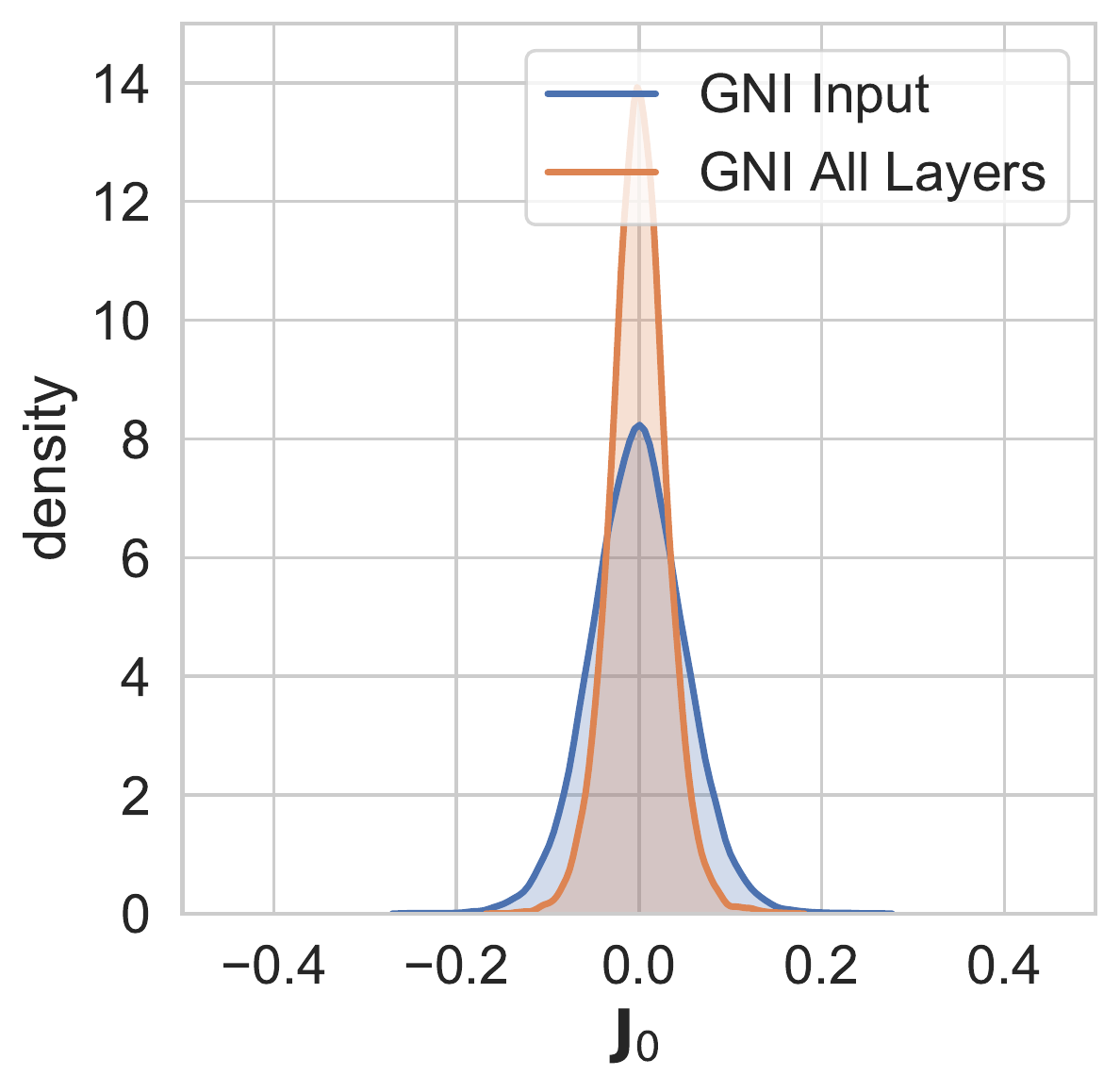}} 
    \subfigure[][$\v{J}_0$ SVHN  ]{\includegraphics[width=0.3\textwidth]{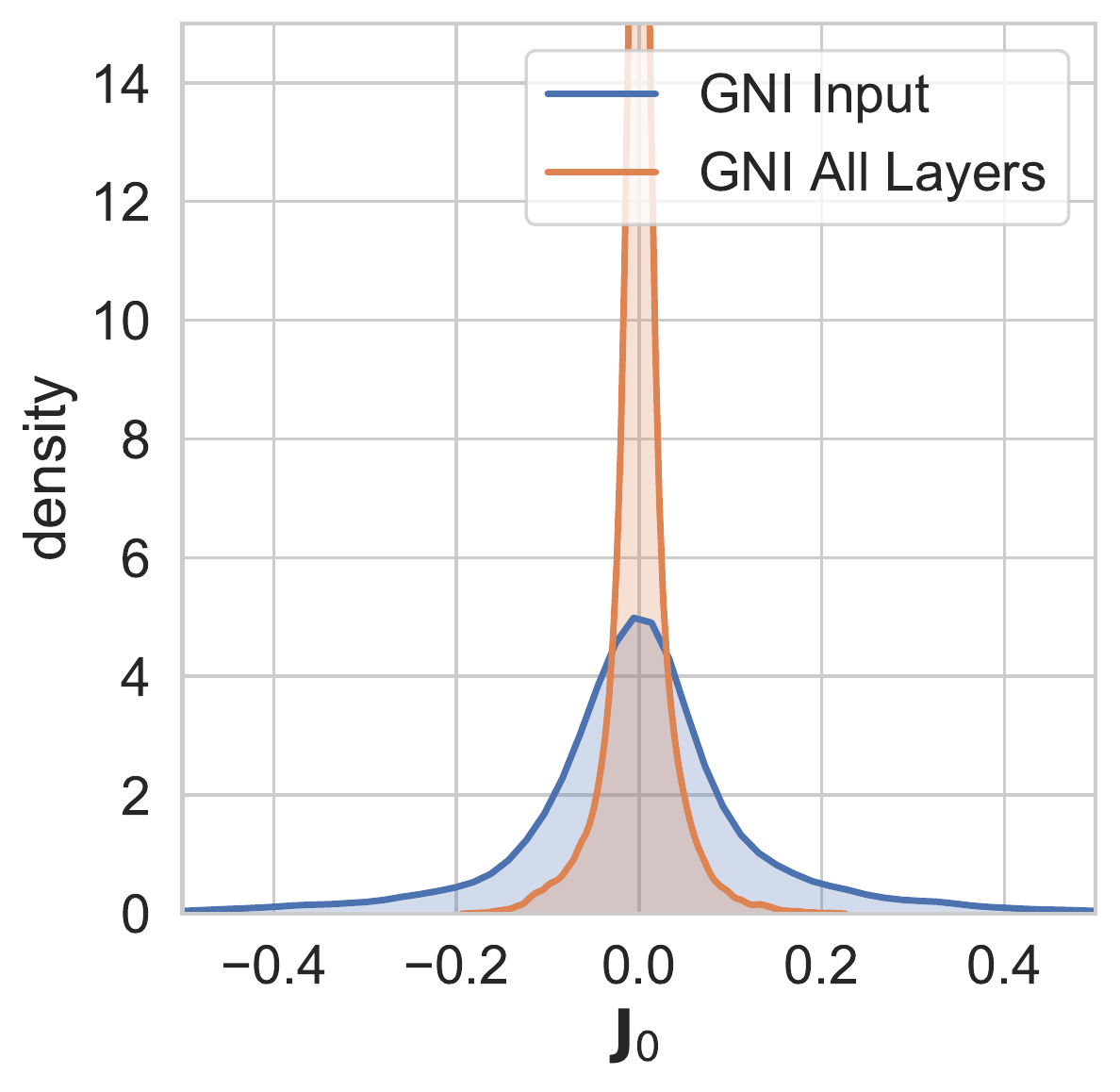}} 

    \caption{
    Here we show distribution plots of $\v{J}_0$ for 2-layer MLPs trained on CIFAR10 (a) and SVHN (b) for models trained with no noise (Baseline), models trained with noise on their inputs (GNI Input), models trained with noise on all their layers (GNI All Layers). Noising all layers induces a larger penalisation on the norm of $\v{J}_0$, seen clearly here by the shrinkage to 0 of $\v{J}_0$ for models trained in this manner. 
    }
    \label{fig:J0}
\end{figure}

What does this entail for GNIs applied to each layer of a network ?
We can view the penalisation of the norms of the Jacobians, induced by GNIs for each layer $k$, as an unpweighted penalisation of $\|\v{J}_0(\v{x})\|_F$. 
By the chain rule $\v{J}_0$ can be expressed in terms of any of the other network Jacobians $\v{J}_0(\v{x}) = \v{J}_k(\v{x})\frac{\partial \v{h}_k}{\v{x}} \forall k \in [0 \dots L]$. 
We can write $\|\v{J}_0(\v{x})\|_F = \|\v{J}_k(\v{x})\frac{\partial \v{h}_k}{\v{x}}\|_F \leq \|\v{J}_k(\v{x})\|_F\|\frac{\partial \v{h}_k}{\v{x}}\|_F$. 
Minimising $\|\v{J}_0(\v{x})\|_F$ is equivalent to minimising $\|\v{J}_k(\v{x})\|_F$ and $\|\frac{\partial \v{h}_k}{\v{x}}\|_F$, and upweighted penalisations of  $\|\v{J}_k(\v{x})\|_F$ should translate into a shrinkage of $\|\v{J}_0(\v{x})\|_F$. 
As such, noising each layer should induce a smaller $\|\v{J}_0(\v{x})\|_F$, and larger classification margins than solely noising data.
We support this empirically in Figure \ref{fig:J0}.

In Figure \ref{fig:adv_attack} we confirm that these larger classification translate into a lessened sensitvity to noise.

\section{Model Capacity} 
\label{app:capacity}
Intuitively one can view lower frequency functions as being `less complex', and less likely to overfit. 
This can be visualised in Figure \ref{fig:network_fourier}.
A measure of model complexity is given by `capacity' measures. 
If we have a model class $\mathcal{H}$, then the capacity assigns a non-negative number to each hypothesis in the model class $\mathcal{M} : \{\mathcal{H},\mathcal{D}_{train}\} \to \mathbb{R}^+$, where $\mathcal{D}_{train}$ is the training set and a lower capacity is an indicator of better model generalisation \citep{Neyshabur2017}. 
Generally, deeper and narrower networks induce large capacity models that are likely to overfit and generalise poorly \citep{Zhang2019}.
The network Jacobian's spectral norm and Frobenius norm  are good approximators of model capacity and are clearly linked to $R$  \citep{Guo2017, Neyshabur2017, Neyshabur}.

The Frobenius norm of the network Jacobian corresponds to a norm in Sobolev space which is a measure of a network's high-frequency components in the Fourier domain.
From this we offer the first theoretical results on why norms of the Jacobian are a good measure of model capacity: as low-frequency functions correspond to smoother functions that are less prone to overfitting, a smaller norm of the Jacobian is thus a measure of a smoother `less complex' model.

\clearpage
\newpage
\section{Additional Results }
\label{app:exp_reg_results}

\begin{figure}[h!]
    \centering
     \subfigure[][BHP MLP Loss]{\includegraphics[width=0.34\textwidth]{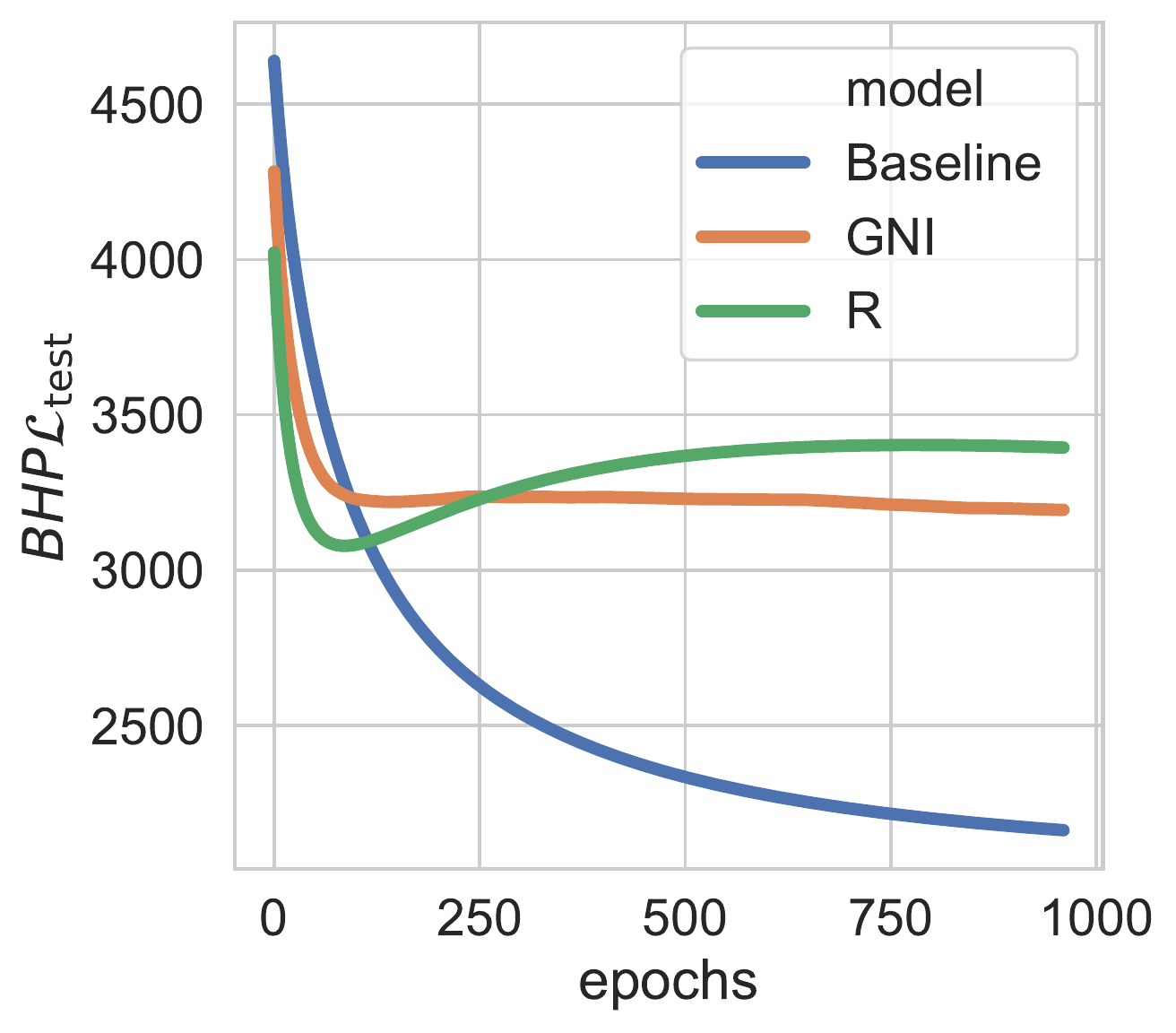}}
    \caption{
    In Figure (a) we show the test set loss for the regression dataset Boston House Prices (BHP) for 4-layer ELU MLPs trained with $R(\cdot)$ and GNIs for $\sigma^2=0.1$. 
    We compare to a non-noised baseline (Baseline).
    Exp Reg captures much of the effect of noise injections. 
    The test set loss is quasi-identical between Exp Reg and Noise runs which clearly differentiate themselves from Baseline runs. 
    }
    \label{fig:marg_approx_app_1}
\end{figure}

\begin{figure}[h!]
    \centering
    \subfigure[][SVHN MLP, $\sigma^2=0.1$]{\includegraphics[height=1.5in]{figures/hessian_approximations_svhn_add_new_runs.pdf}}
    \subfigure[][BHP MLP $\sigma^2=0.1$]{\includegraphics[height=1.5in]{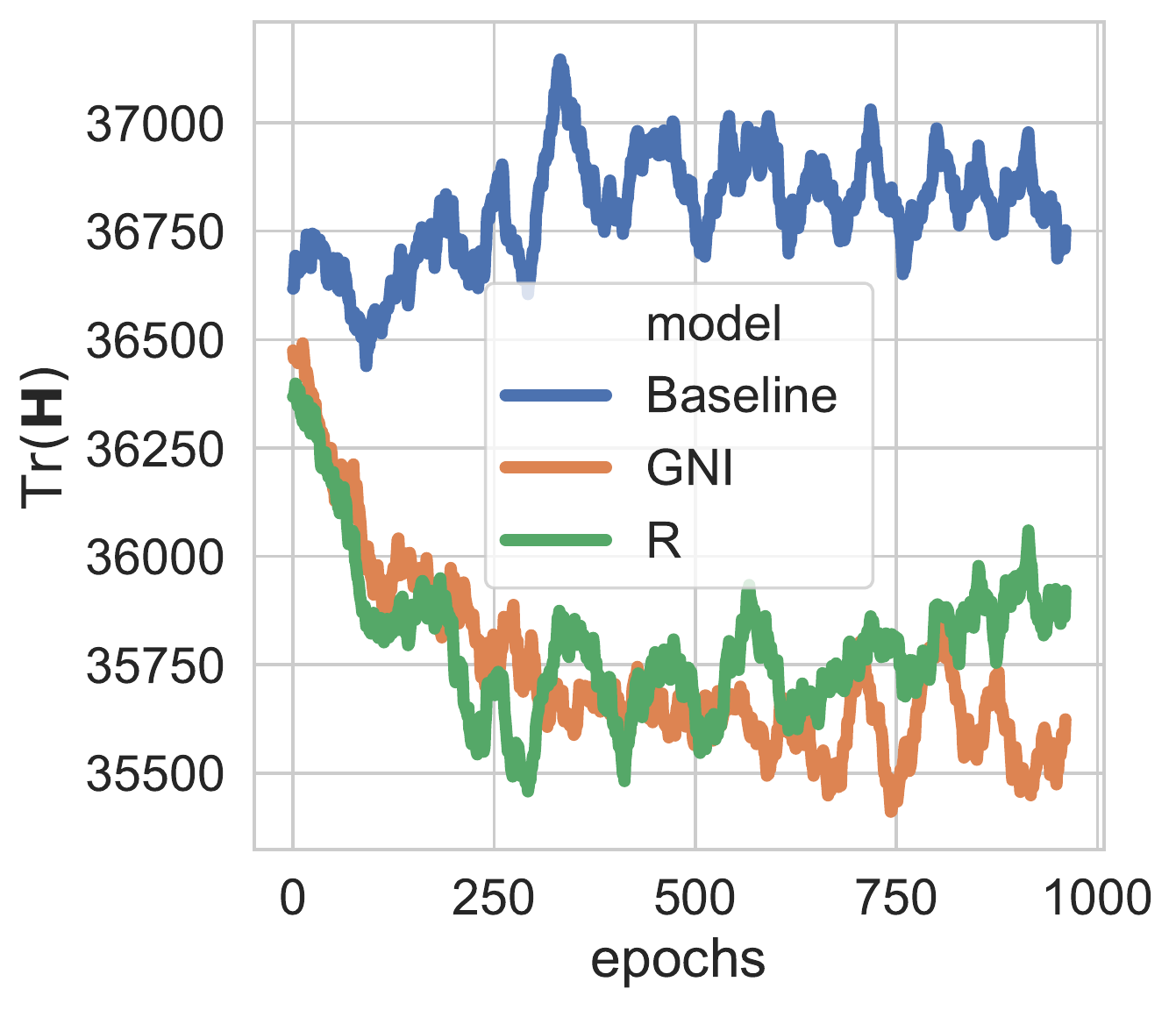}}
    
    \caption{Here we use small variance noise injections and show that the $R(\cdot)$ (Exp Reg) in equation \eqref{eq:mse_reg_term} and \eqref{eq:ce_reg_term}, induces the same trajectory through the loss landscape as GNIs (Noise).
    We show the trace of the Hessian of neural weights ($H_{i,j} = \frac{\partial \mathcal{L}}{\partial w_i\partial w_j}$) for a smaller 2-layer 32 unit MLP trained on the classification datasets CIFAR10 (a), and SVHN (b), and the regression dataset Boston House Prices (BHP) (c). 
    In all experiments we compare to a non-noised baseline (Baseline).
    $\mathrm{Tr}({\v{H}})$, which approximates the trajectory of the model weights through the loss landscape, is quasi identical for Exp Reg and Noise and is clearly distinct from Baseline, supporting the fact that the explicit regularisers we have derived are valid. 
    As expected the explicit regulariser and the noised models have smoother trajectories (lower trace) through the loss landscape, except for CIFAR10. 
    }
    \label{fig:hessian_app}
\end{figure}

\begin{figure}[h!]
    \centering
   \subfigure[][ELU non-linearities, $\sigma^2=0.1$] {\includegraphics[width=0.7\textwidth]{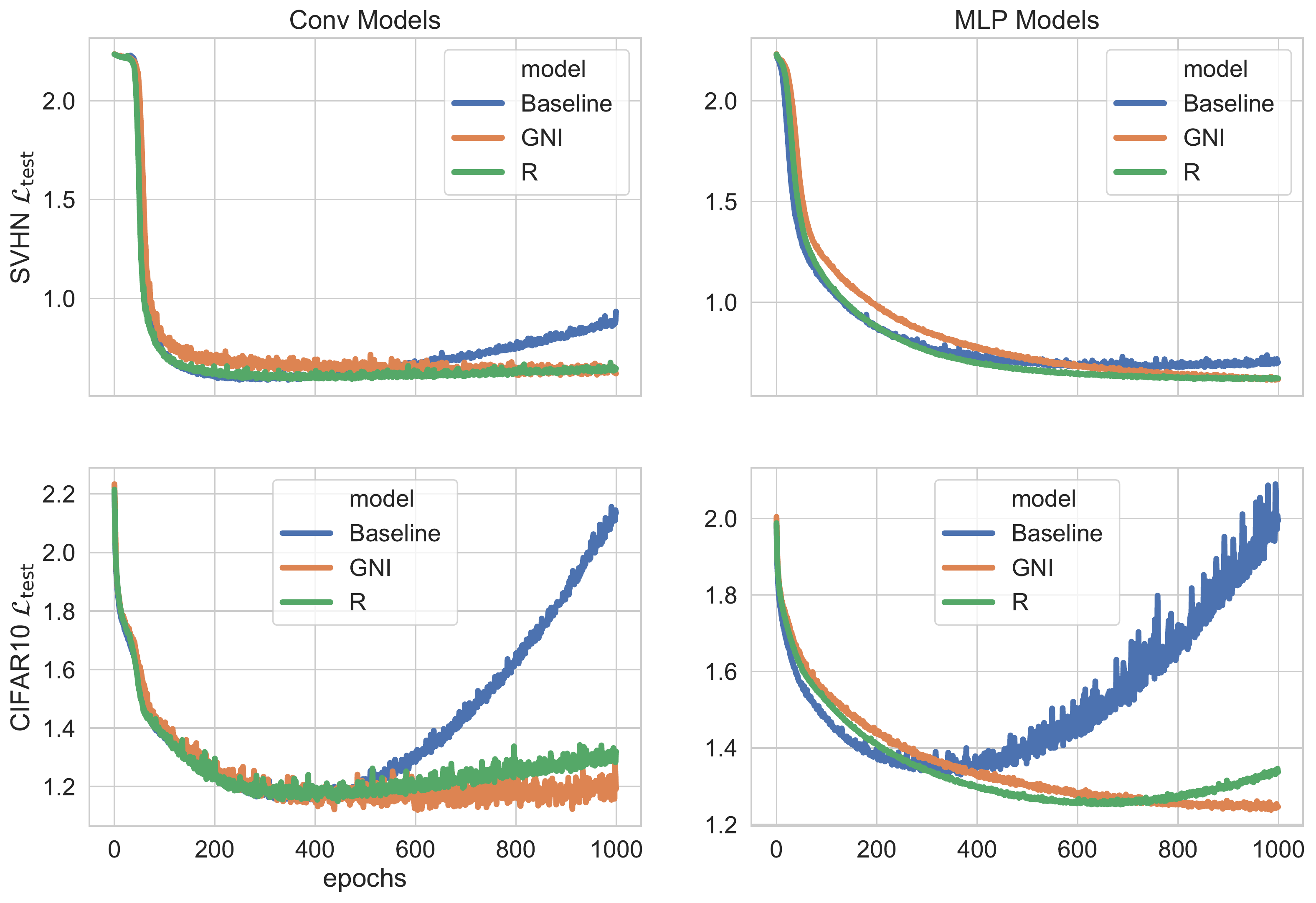}} \\
    \subfigure[][ReLU non-linearities, $\sigma^2=0.1$]{\includegraphics[width=0.7\textwidth]{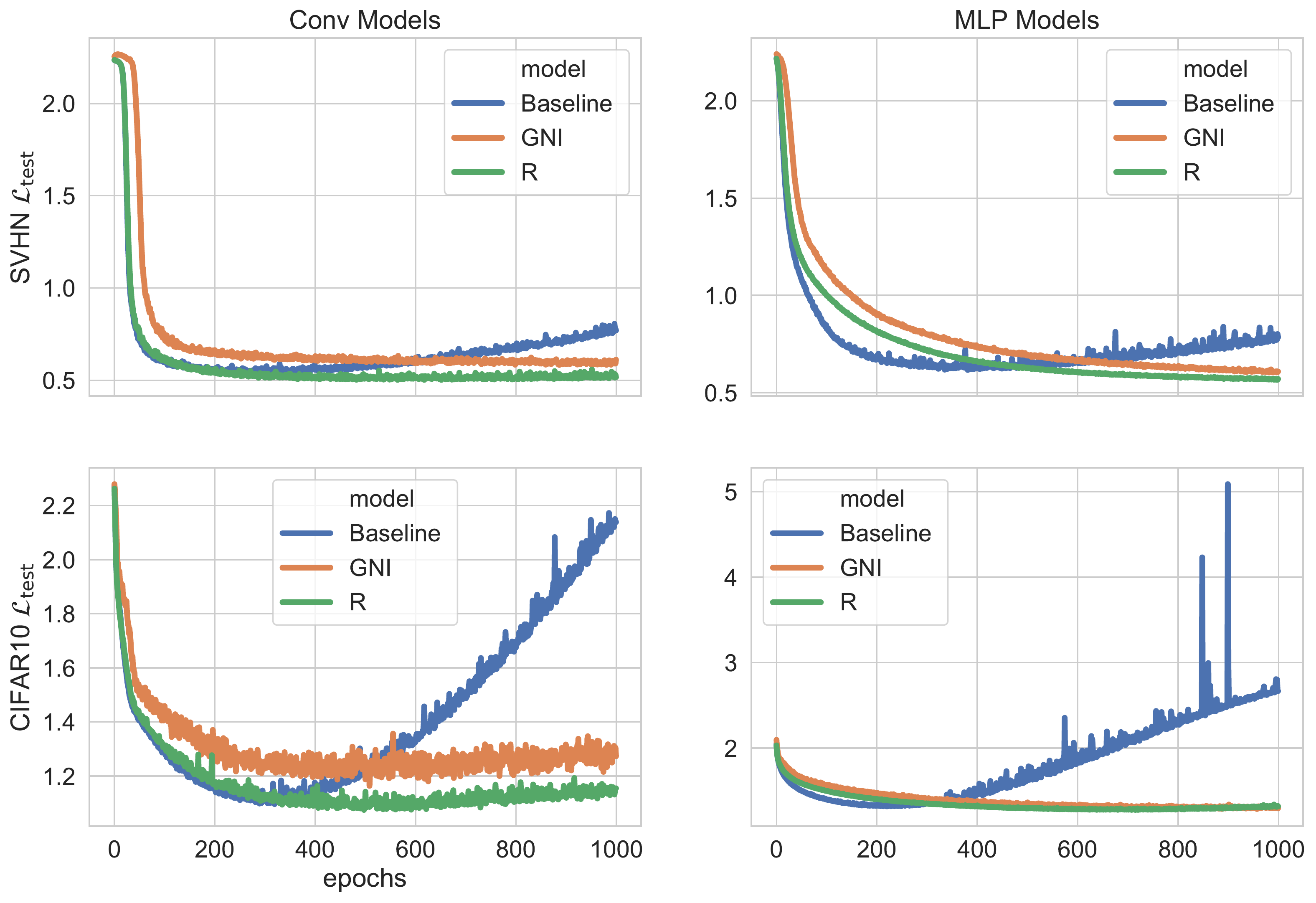}}
    \caption{Illustration of the loss induced by the $R(\cdot)$ for classification detailed in equation \eqref{eq:ce_reg_term} for convolutional and MLP architectures, and for ReLU and ELU non-linearities. 
    The loss trajectory is quasi-identical to models trained with GNIs and the trajectories are clearly distinct from baselines (Baseline), supporting the fact that the explicit regularisers we have derived are valid.}
    \label{fig:marg_approx_app}
\end{figure}

\newpage

 \begin{figure}[t!]

    \centering
    \subfigure[][CIFAR  ]{\includegraphics[width=0.3\textwidth]{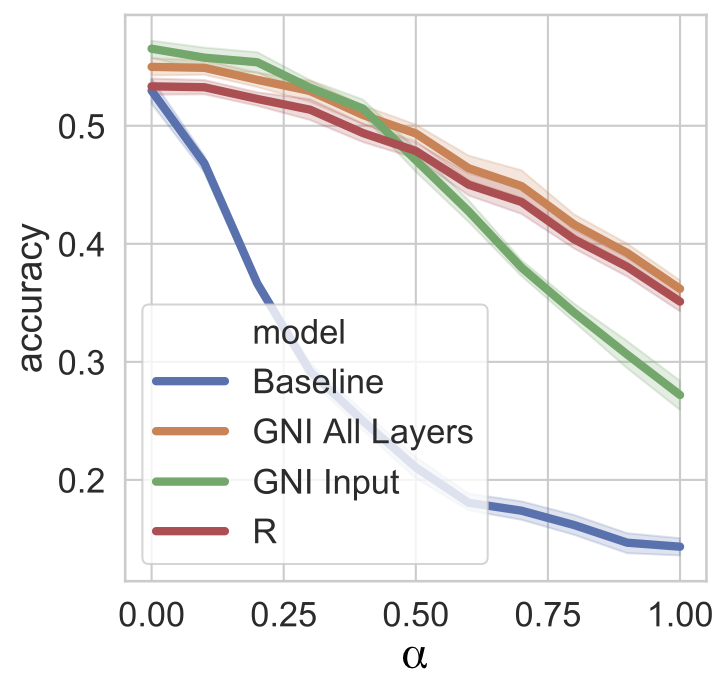}} 
    \hspace{1.0cm}
    \subfigure[][SVHN  ]{\includegraphics[width=0.3\textwidth]{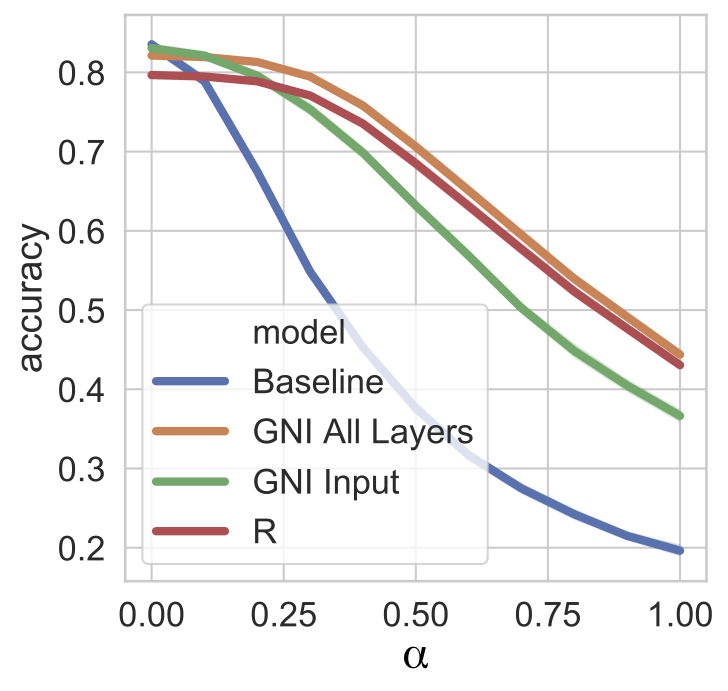}} 
    \caption{In (a) and (b) a model's sensitivity to noise by adding noise of variance $\alpha^2$ to data and measuring the resulting model accuracy given this corrupted test data. We show this for 2-layer MLPs trained on CIFAR10 (a) and SVHN (b) for models trained with no noise (Baseline), models trained with noise on their inputs (GNI Input), models trained with noise on all their layers (GNI All Layers), and models trained with the $R(\cdot)$ for classification.
    Noise added during training has variance $\sigma^2=0.1$ and confidence intervals are the standard deviation over batches of size 1024.
    Models trained with noise on all layers, and those trained with $R(\cdot)$, have the slowest decay of performance as $\alpha$ increases, confirming that such models have larger classification margins.
    }
    \label{fig:adv_attack}
\end{figure}

\begin{figure}[h!]
    \centering
    \subfigure[][CIFAR10 MLP, $\sigma=0.1$]{\includegraphics[height=1.2in]{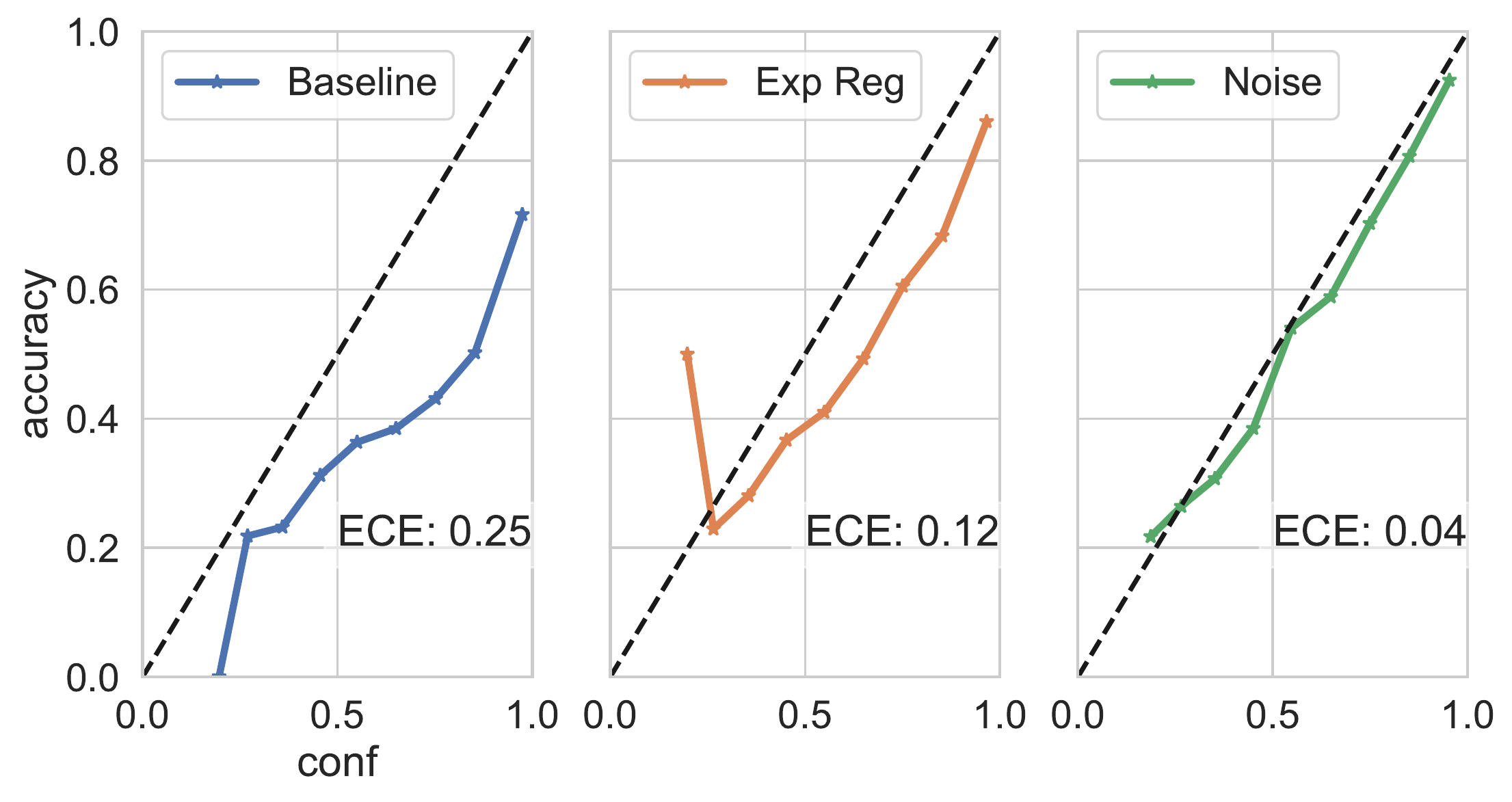}}
    \subfigure[][CIFAR10 MLP, $\sigma=0.1$]{\includegraphics[height=1.2in]{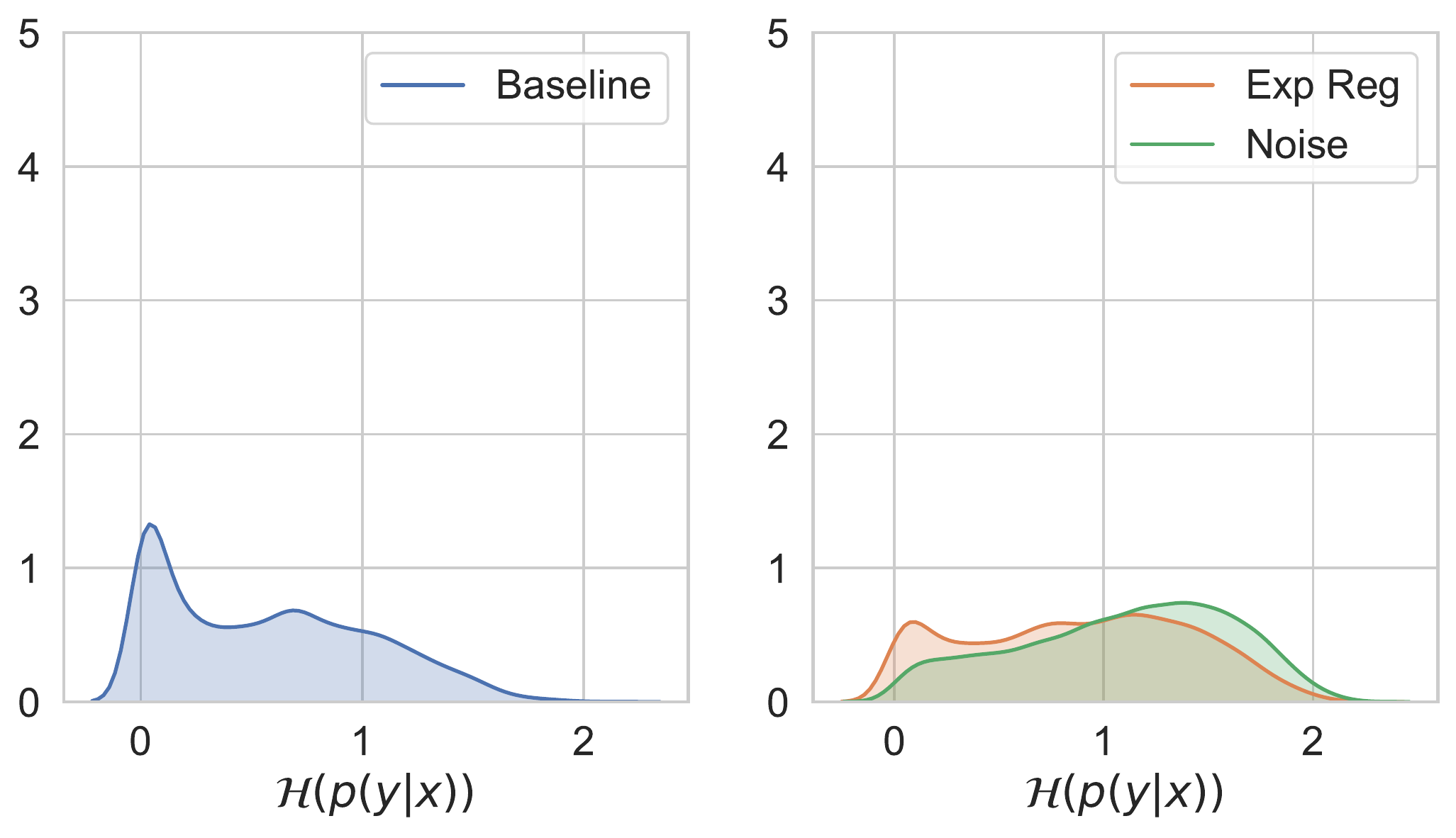}} \\
    \subfigure[][SVHN MLP, $\sigma=0.1$]{\includegraphics[height=1.2in]{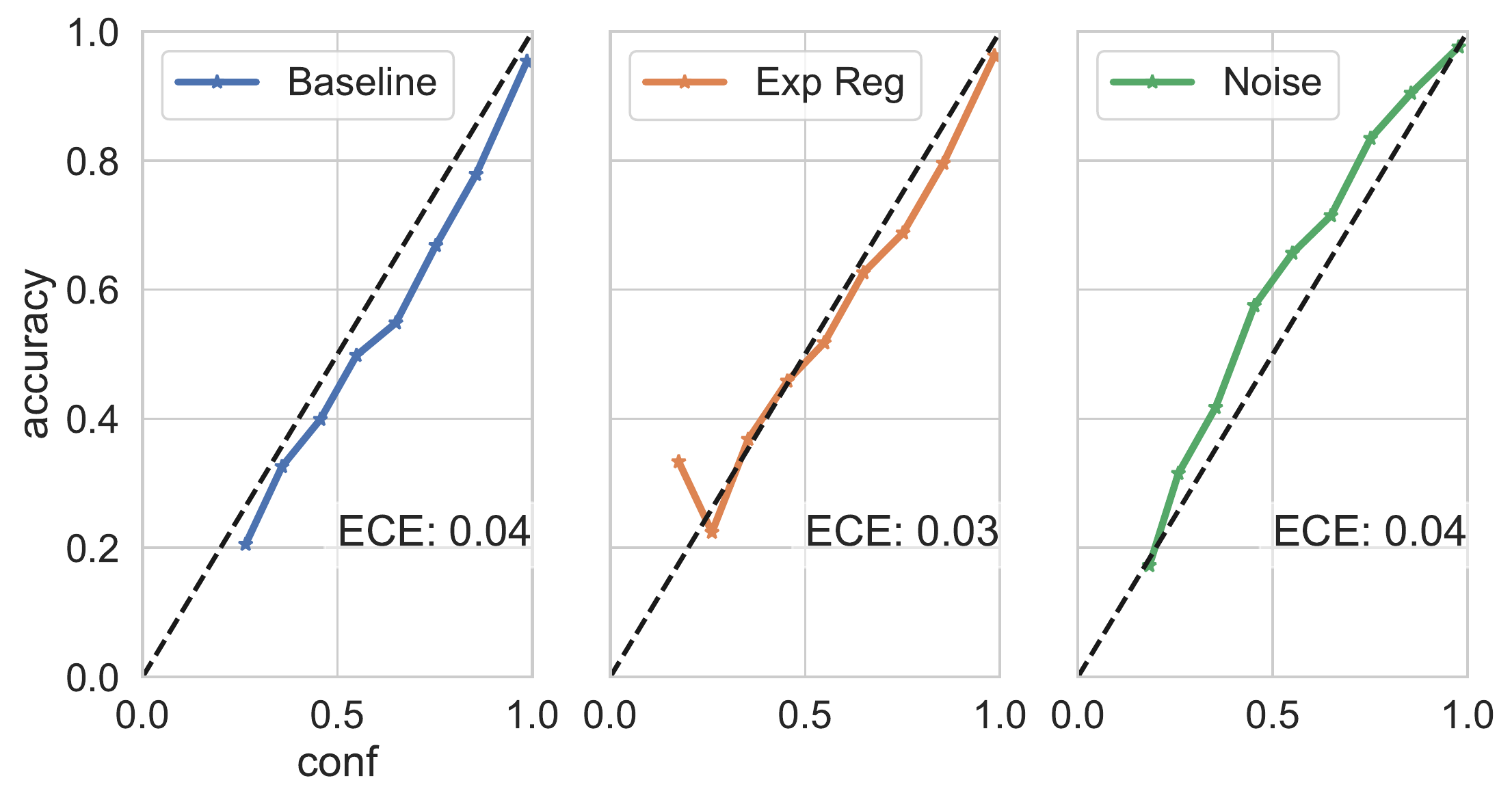}} 
    \subfigure[][SVHN MLP, $\sigma=0.1$]{\includegraphics[height=1.2in]{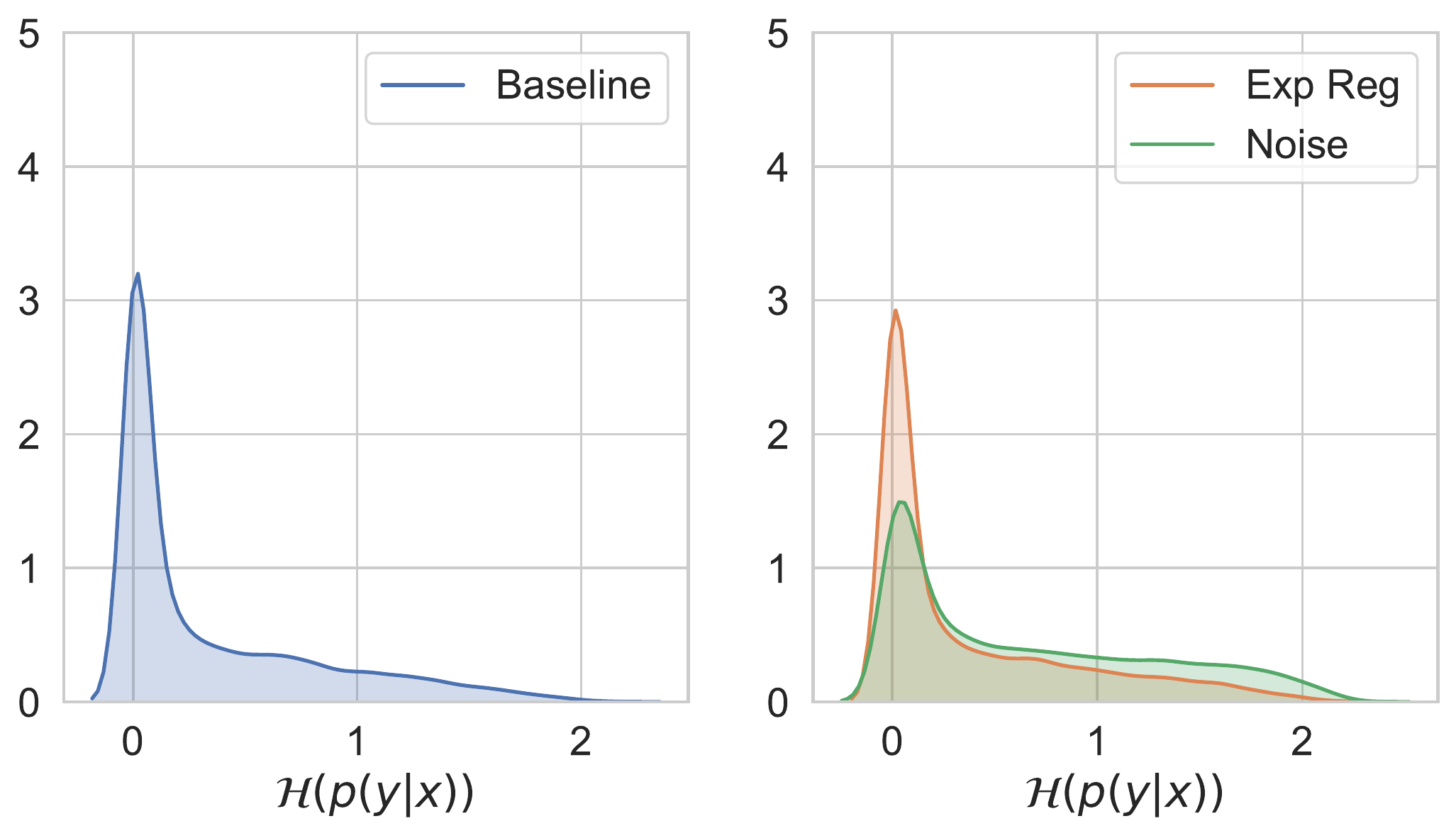}}
    \\
    \subfigure[][CIFAR10 CONV, $\sigma=0.1$]{\includegraphics[height=1.2in]{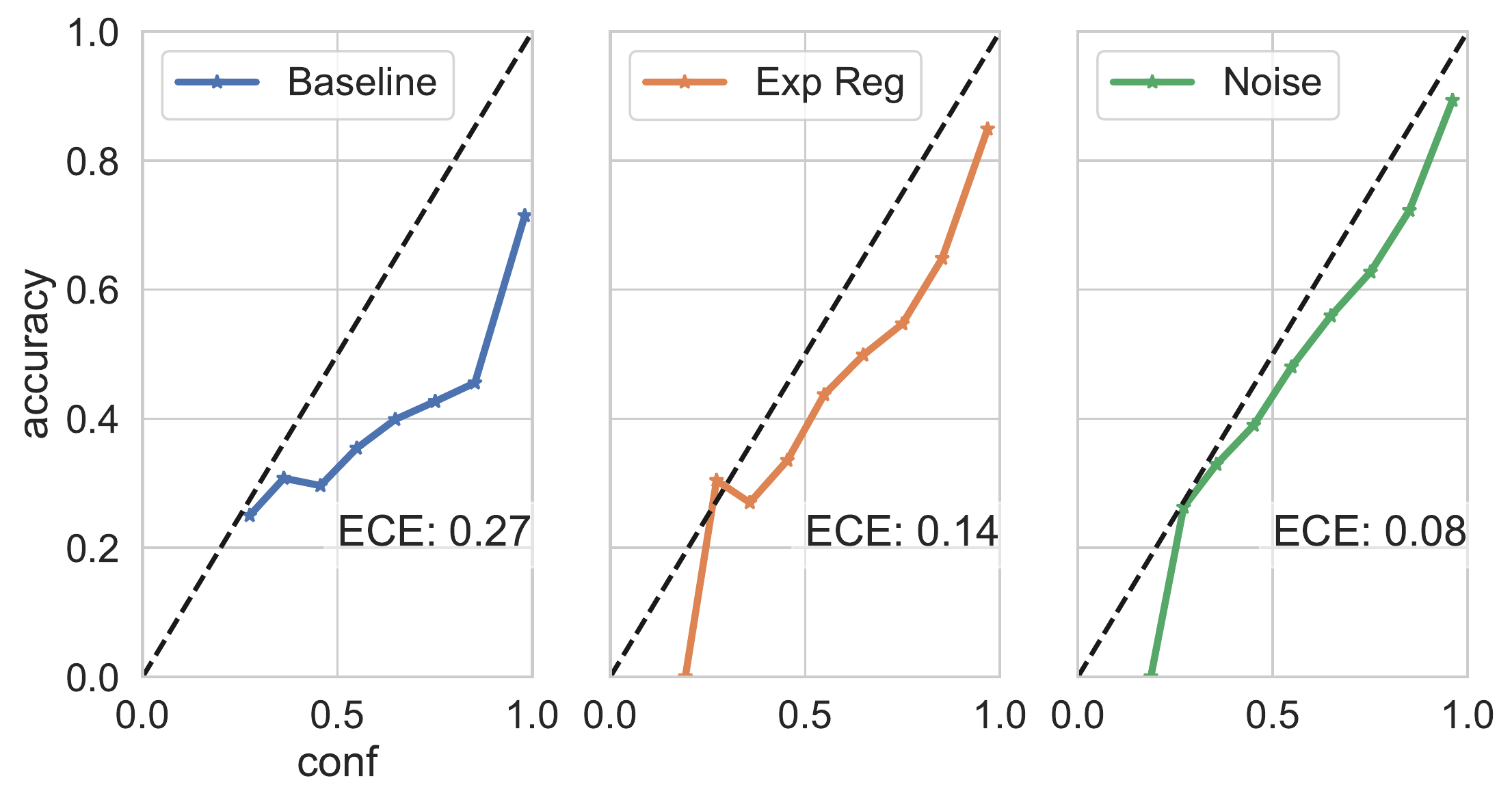}} 
     \subfigure[][CIFAR10 CONV, $\sigma=0.1$]{\includegraphics[height=1.2in]{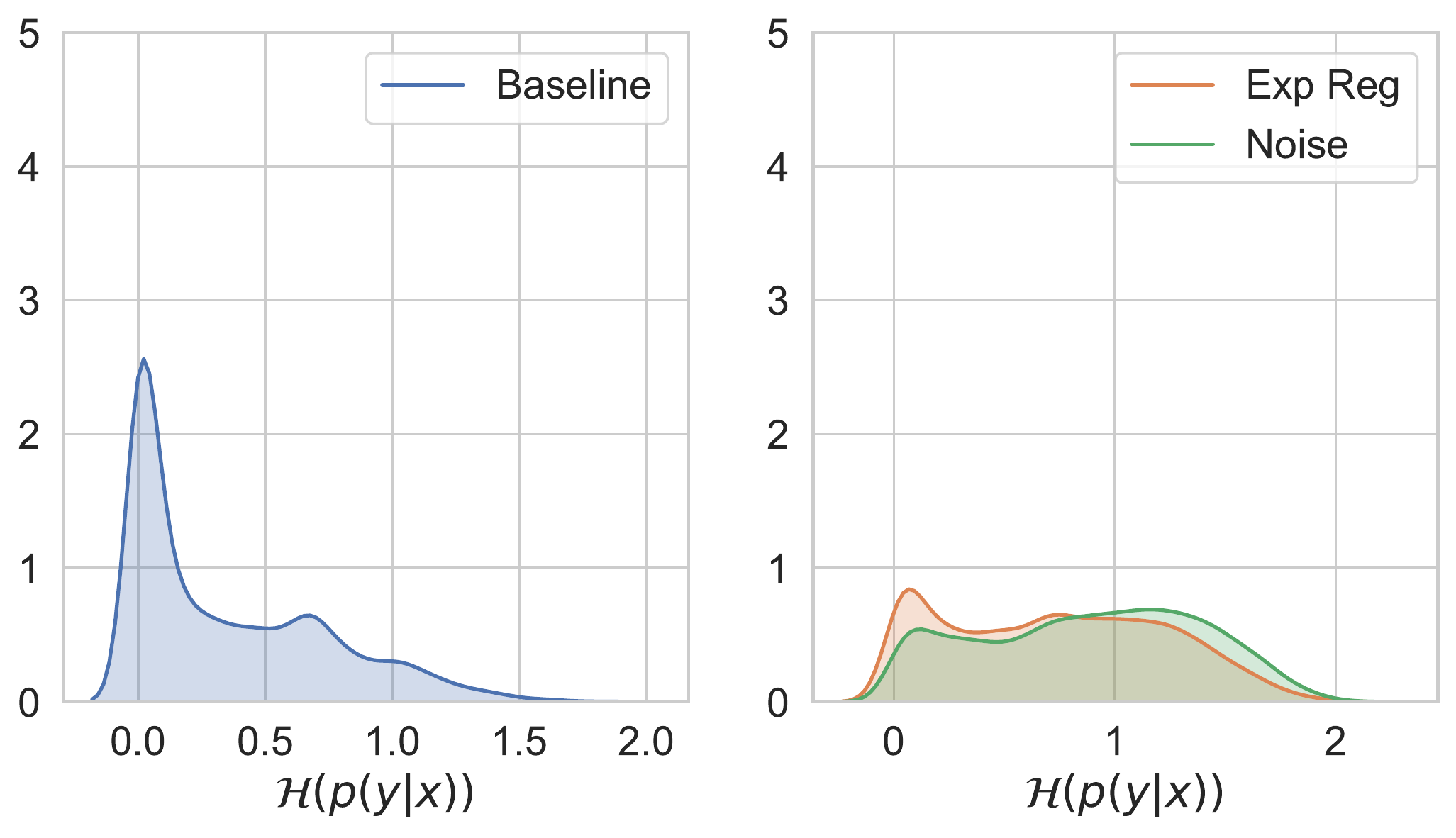}} \\
    \subfigure[][SVHN CONV, $\sigma=0.1$]{\includegraphics[height=1.2in]{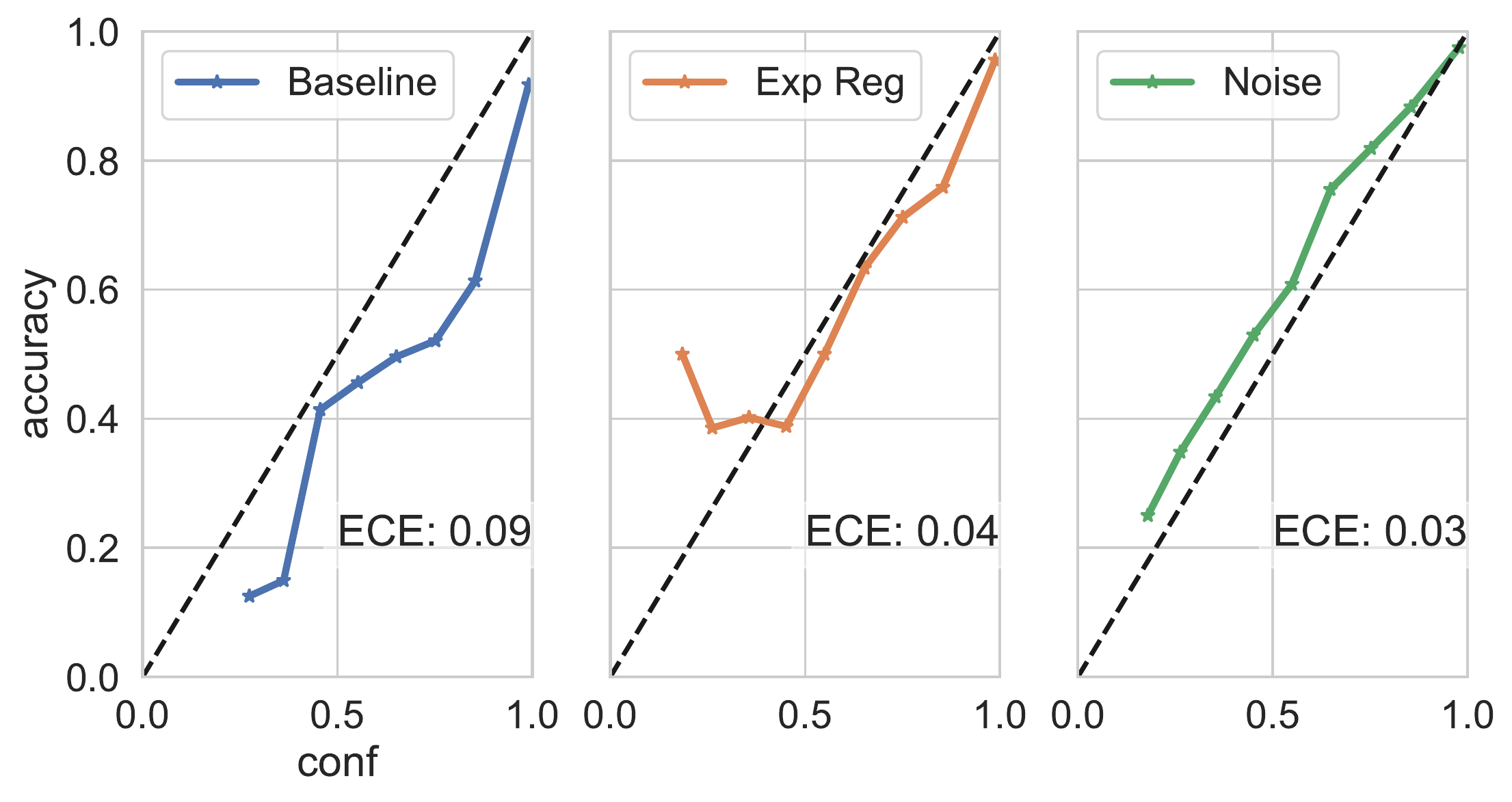}}
    \subfigure[][SVHN CONV, $\sigma=0.1$]{\includegraphics[height=1.2in]{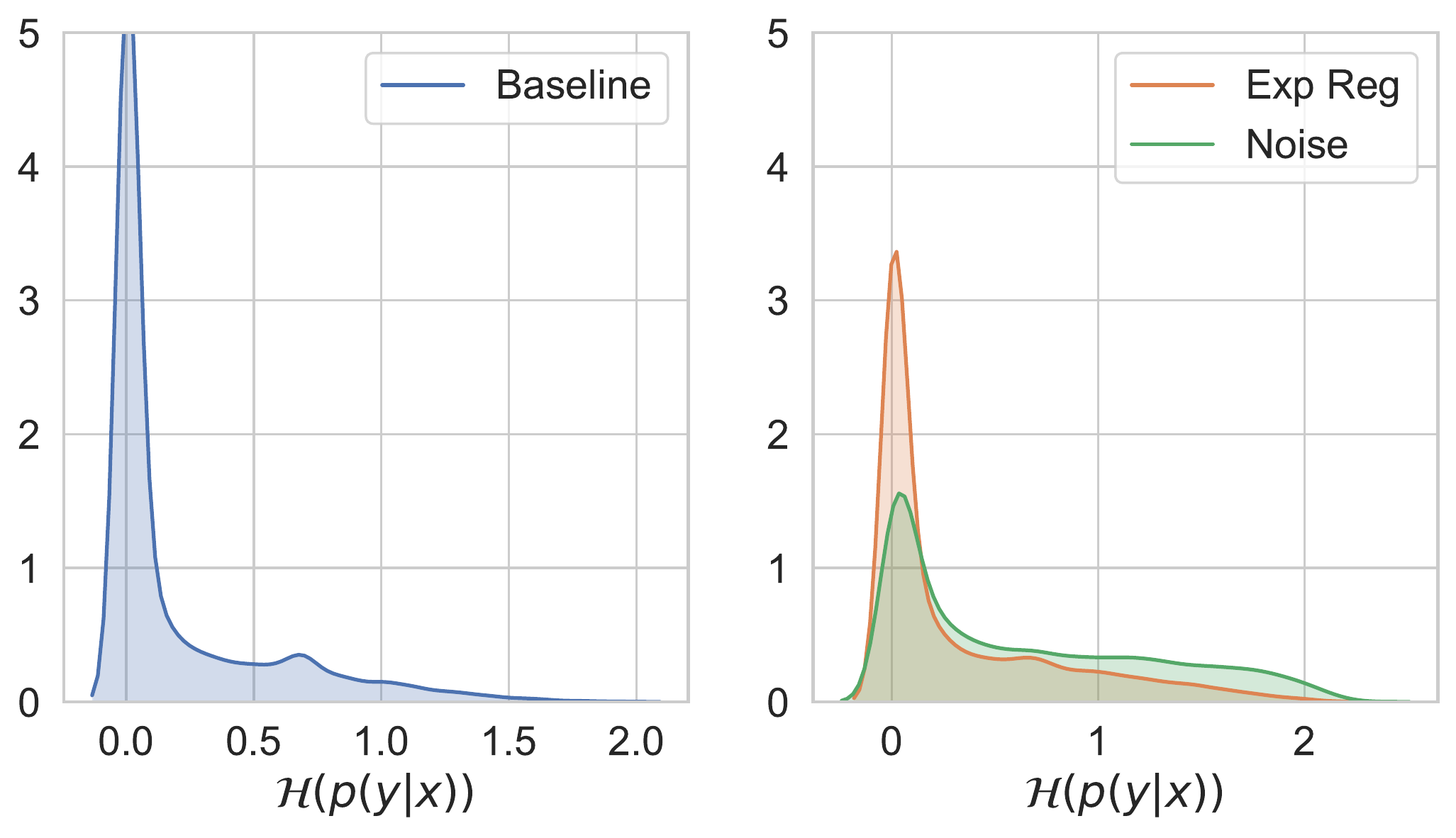}}
    \caption{Illustration of how Gaussian noise (Noise) \textit{additions} improve calibration relative to models trained without noise injections (Baselines) and how $R(\cdot)$ (Exp Reg) also captures some of this improvement in calibration.
    We include results for MLPs and convolutional networks (CONV) with ELU activations on SVHN and CIFAR10 image datasets.
    On the left hand side we plot reliability diagrams \citep{Guo2017, Niculescu-Mizil2005}, which show the accuracy of a model as a function of its confidence over $M$ bins $B_m$. 
    Models that are perfectly calibrated have their accuracy in a bin match their predicted confidence: this is the dotted line appearing in figures. 
    We also calculate the Expected Calibration Error (ECE) which measures a model's distance to this ideal (see Appendix C for a full description of ECE) \citep{Naeini2015}. 
    Clearly, Noise and Exp Reg models are better calibrated with a lower ECE relative to baselines. This can also be appraised visually in the reliability diagram. 
    The right hand side supports these results. We show density plots of the entropy of model predictions. One-hot, highly confident, predictions induce a peak around 0, which is very prominent in baselines. Both Noise and Exp Reg models smear out predictions, as seen by the greater entropy, meaning that they are more likely to output lower-probability predictions.}
    \label{fig:calibration_app}
\end{figure}

\clearpage
\newpage

\section{Network Hyperparameters}
\label{app:hyperparams}

All networks were trained using stochastic gradient descent with a learning rate of 0.001 and a batch size of 512. 

All MLP networks, unless specified otherwise, are 2 hidden layer networks with 512 units per layer. 

All convolutional (CONV) networks are 2 hidden layer networks. 
The first layer has 32 filters, a kernel size of 4, and a stride length of 2. 
The second layer has 128 filters, a kernel size of 4, and a stride length of 2. 
The final output layer is a dense layer.

\end{document}